\theoremstyle{definition}
\newtheorem{theorem}{Theorem}
\newtheorem*{theorem*}{Theorem}
\newtheorem{definition}[theorem]{Definition}
\newtheorem*{definition*}{Definition}
\newtheorem*{propsition*}{Propsition}
\newtheorem{lemma}[theorem]{Lemma}
\newtheorem*{lemma*}{Lemma}
\newtheorem{corollary}{Corollary}
\newtheorem*{corollary*}{Corollary}
\newcommand{\indep}{\mathrel{\perp\mspace{-10mu}\perp}}
\begin{document}
	
\title{\textbf{Estimating the average causal effect of intervention in continuous variables using machine learning
}}

\author[ ]{Yoshiaki Kitazawa}
\affil[ ]{NTT DATA Mathematical Systems Inc.\\
		 Data Mining Division\\
	1F Shinanomachi Rengakan, 35, Shinanomachi,\\
	 Shinjuku-ku, Tokyo, 160-0016, Japan\\	
}
\affil[ ]{kitazawa@msi.co.jp}
\date{}

\maketitle

\begin{abstract}
The most widely discussed methods for estimating the Average Causal Effect/Average Treatment
Effect are those for intervention 
in discrete binary variables whose value represents intervention/non-intervention groups.
On the other hand, methods for intervening in continuous variables independent 
of data generating models have not been developed.
In this study, we give a method for estimating the average causal effect for 
intervention in continuous variables that can be applied to data of any generating models as 
long as the causal effect is identifiable.
The proposing method is independent of machine learning algorithms and preserves the identifiability of data.
\end{abstract}

\section{Introduction}

The causal effect is defined by Pearl's $do$ operation as a probability distribution over observed
data in the way that it is altered from one which generates data originally  [\cite{pearl1995causal,pearl2009causality}]. 
When dealing with causal effects in real-world problems,  it is also necessary to take into account unobserved variables
that is not included in data. 
In general, causal effects are counterfactual probability distributions that differ from data generating systems in the real world.
When we consider the existence of unobserved data, it becomes a problem if it can be determined by observed data available.
That is, we need to consider the identifiability of causal effects in this case.
This problem has recently been resolved to a certain extent [\cite{tian2002general,shpitser2006identification,shpitser2012identification}].

If causal effects are identifiable, the next problems are how to estimate the (conditional) average causal effect, 
that is, how to calculate the (conditional) expected value of causal effects.
Recently some methods have been proposed for this estimation, 
such as one by reweighting probability [\cite{jung2020estimating}] or one by assuming
a semiparametric model [\cite{chernozhukov2018double}].
However, Any methods that is independent of models generating 
data and available for intervention in continuous variables 
have not been developed yet.

In addition, related to estimation of the average causal effects, it has been widely discussed 
in recent years to estimate of (Conditional) Average Treatment Effects that are the difference 
between the average causal effects of intervention / non-intervention groups.
The major methods of these, for example,
are the re-weighting mothod by propensity score such as Inverse propensity weighting (IPW) 
method [\cite{chernozhukov2018double,rosenbaum1984reducing}], 
the matching method [\cite{rosenbaum1983central,rosenbaum1985constructing}], 
the stratification method 
[\cite{rosenbaum1984reducing,imbens2004nonparametric}], 
the tree-based method such that Bayesian Additive Regression Trees (BART)
[\cite{chipman2006bayesian,chipman2010bart,hill2011bayesian}].
In addition, methods of meta-machine learning algorithm have been proposed in recent years
such that T-Learner [\cite{kunzel2019metalearners}], 
S-Learner [\cite{kunzel2019metalearners}], 
X-Learner [\cite{kunzel2019metalearners}], 
U-Learner [\cite{nie2021quasi}] 
and R-Learner [ \cite{nie2021quasi}]. 
Currently, when estimating the average treatment effects of intervention in continuous variables,
for example patient responses to drug dose, the above methods for discrete binary intervention
are only used,  by extending it to available for multi-values intervention [\cite{schwab2020learning}].
However, when estimating the average causal effects of continuous value intervention, it is 
necessary that estimators of them have to be continuous functions for values of intervention,
whereas the estimators of above methods for discrete intervention is not.

In this paper, we propose a method that enables estimation of the average causal effect
of intervention in continuous variables.
The proposing method use supervised machine learning algorithms but is independent of  
algorithms to use.
In addition, it preserves the identifiability of causal effects for models generating
data, and is designed to make as few assumptions about training data as possible.

This article is divided to five parts. First, we introduce the background of study.
Second, we define terminologies and concepts necessary for causal inference.
Third, we state the Main Theorem and present the proposing method for estimating 
the average causal effects of intervention in continuous variables.
Fourth, we report results of a numerical experiment for the proposing methods and discuss them.
Lastly, we conclude this paper.

\section{Notation and Definitions}\label{sec_term_def}

We denote random variables by capital letters, $A$.
Small letters, $a$, represents a values of random variables corresponding, $A$.
Bold letters, $\mathbf{A}$ or $\mathbf{a}$, represent a set of variables or values of variables.
In particular, we use $\mathbf{V}=\{V_1,V_2,\dots,V_n\}$ for observed variables and
$\mathbf{U}=\{U_1, U_2, \dots, U_m\}$ for unobserved variables.
For sake of simplicity,  we assume that any $V \in \mathbf{V}$ is not determined by the others,
that is, $V \neq f(\mathbf{W})$ for any function $f$ and any $\mathbf{W}\subset \mathbf{V} \setminus V$.
We denote the domain of a variable $A$ by $\mathcal{X}_A$.
For sake of simplicity, we assume that $P(v) > 0$ for all $V \in \mathbf{V}$ and $v\in\mathcal{X}_V$.
For a set of variables $\mathbf{A}$, 
let $\mathcal{X}_\mathbf{A} = \mathcal{X}_{A_1} \times \cdots \times \mathcal{X}_{A_n}$. 
In this paper, $\mathbf{V}\cup\mathbf{U}$ is a semi-Markovian model, 
and a Directed Acyclic Graph(DAG) $G=G_{\mathbf{V}\mathbf{U}}$ is a causal graph for them.
Each $Pa(\mathbf{A})_G$, $Ch(\mathbf{A})_G$, $An(\mathbf{A})_G$ and $De(\mathbf{A})_G$  
represents the parents, children, ancestors and descendants of observed variables in $G$
for $\mathbf{A}\subset\mathbf{V}$.
$UPa(\mathbf{A})_G$ represents the parents of unobserved variables in $G$.
In this paper, $Pa(\mathbf{A})_G$, $Ch(\mathbf{A})_G$, 
$An(\mathbf{A})_G$, $De(\mathbf{A})_G$ doesn't 
include $\mathbf{A}$ itself.
The set of observed variables which has no paths to each $\mathbf{A}$ in $G$
is denoted by $Ind(\mathbf{A})_G :=\{V\in\mathbf{V}|(V\indep \mathbf{A})_G \}$.
$\overline{G}(\mathbf{X})$ is denoted the graph obtained from $G$ by deleting
all arrows emerging from variables to $\mathbf{X}$. 
$\underline{G}(\mathbf{X})$ is denoted the graph obtained from $G$ by 
deleting all arrows from $\mathbf{X}$ to variables. 
$\underline{\overline{G}}(\mathbf{X}_1, \mathbf{X}_2)$ is denoted the graph obtained from $G$ 
by deleting all arrows emerging from variables to $\mathbf{X}_1$ 
and all arrows emerging from $\mathbf{X}_2$ to variables. 
$G^{obs}$ is denoted the graph obtained from $G$
by remaining only arrows between observed variables.

According to $G$, a probability distribution $P$ of $\mathbf{V}\cup\mathbf{U}$
can be decomposed into as below. 
\begin{eqnarray}
	\lefteqn{P(V_1=v_1,\dots,V_n=v_n, U_1=u_1,\dots,U_m=u_m)} \quad \nonumber \\
	&=& \prod_{i} P(V_i=v_i|Pa(V_i)_G=pa_i, UP(V_i)=upa_i), \nonumber 
\end{eqnarray}
where $pa_i \in \mathcal{X}_{Pa(V_i)_G}$ is values of $Pa(V_i)_G$,  
$upa_i \in \mathcal{X}_{UPa(V_i)_G}$ is values of  $UPa(V_i)_G$. 

The probability distribution for only observed variables
is obtained by marginalized this distribution over all unobserved variables. 
\begin{eqnarray}
	\lefteqn{P(V_1=v_1,\dots,V_n=v_n)}\quad \nonumber \\ 
	&=& \sum_{\mathbf{u}\in\mathcal{X}_\mathbf{U}} \prod_{i} P(V_i=v_i|Pa(V_i)_G=pa_i, UPa(V_i)_G=upa_i)  \nonumber \\
	&=& \prod_{i} P(V_i=v_i|Pa(V_i)_G=pa_i). \nonumber
\end{eqnarray}

Give two disjoint sets of $\mathbf{X}, \mathbf{Y} \subset \mathbf{V}$,
the causal effect of  $\mathbf{X}$ on $\mathbf{Y}$, denoted by 
$P(\mathbf{Y}=\mathbf{y}|do(\mathbf{X}=\mathbf{x}))$, is defined 
as the probability distribution as follows. 
\begin{equation}
	P(\mathbf{Y}=\mathbf{y}|do(\mathbf{X}=\mathbf{x})) =
	\sum_{\mathbf{v}' \in \mathcal{X}_{\mathbf{V}'}}
	\frac{
		P(\mathbf{Y}=\mathbf{y}, \mathbf{X}=\mathbf{x}, \mathbf{V}'=\mathbf{v}')}{
		P(\mathbf{X}=\mathbf{x}|Pa(\mathbf{X})_G=pa_\mathbf{x})
	}. \label{def_causal_effect_no_condi_eq}
\end{equation}
where, $\mathbf{V}'=\mathbf{V}\setminus(\mathbf{X}\cup\mathbf{Y})$ and
$pa_\mathbf{x}$ represents values of $Pa(\mathbf{X})_G$.

Give disjoint sets of $\mathbf{X}, \mathbf{Y}, \mathbf{Z} \subset \mathbf{V}$,
the causal effect of $\mathbf{X}$ on $\mathbf{Y}$ under conditions $\mathbf{Z}$,
denoted by $P(\mathbf{Y}=\mathbf{y}|do(\mathbf{X}=\mathbf{x}),\mathbf{Z}=\mathbf{z})$, is defined 
as the probability distribution as follows. 
\begin{equation}
	P(\mathbf{Y}=\mathbf{y}|do(\mathbf{X}=\mathbf{x}),\mathbf{Z}=\mathbf{z}) =
	\frac{
		P(\mathbf{Y}=\mathbf{y},\mathbf{Z}=\mathbf{z}|do(\mathbf{X}=\mathbf{x}))}{
		P(\mathbf{Z}=\mathbf{z}|do(\mathbf{X}=\mathbf{x}))
	} \label{def_causal_effect_on_condi_eq}
\end{equation}

\section{Main Theorem}\label{sec_main_theorem}
In this paper, We  consider the conditional expectations of causal effects 
$P(\mathbf{Y}=\mathbf{y}|do(\mathbf{X}=\mathbf{x}),\mathbf{Z}=\mathbf{z})$.
That is, We consider the probability distribution $P'$ following
\begin{equation}
	P'=P(\mathbf{Y}|do(\mathbf{X}),\mathbf{Z})\cdot P(X_1)\cdot P(X_2) \cdot \cdots \cdot P(X_n)\cdot P(\mathbf{Z}), \label{eq_dist_ave_causal}
\end{equation}
and we consider the expectation 
\begin{equation}
	E_{P'}[\mathbf{Y}=\mathbf{y}|\mathbf{X}=\mathbf{x}, \mathbf{Z}=\mathbf{z}]. \nonumber\\ 
\end{equation}
Here, $\mathbf{X}$ are variables to intervene in, $\mathbf{Z}$ are covariates
and $\mathbf{Y}$ are target variables to estimate.

\subsection{Graph structure that can be used for modeling the average casual effects}

We give a definition of a graph structure that that can be used for modeling the average casual
effects, that is, the conditional expectations for the probability distribution after intervention.
\begin{definition}[Availability for modeling the average casual effects]
	 For disjoint sets $\mathbf{Y},\mathbf{X}\subset \mathbf{V}$,
	 a DAG $G$ is said to be available for modeling the average casual effects 
	 $P(\mathbf{Y}|do(\mathbf{X}),\mathbf{Z})$, 
	 if  $G$ satisfies following two conditions	
	\begin{enumerate}
		\item $P(\mathbf{Y}|do(\mathbf{X}),\mathbf{Z})$ is identifiable in $G$.
		\item $\overline{G^{obs}}(\mathbf{X})=G^{obs}$.
	\end{enumerate}	
\end{definition}

If a graph is available for modeling the average casual effects $E[\mathbf{Y}=\mathbf{y}|do(\mathbf{X}=\mathbf{x}]$,
then $P'$ of (\ref{eq_dist_ave_causal}) is as follows.
\begin{eqnarray}
	P'  &=& P(\mathbf{Y}=\mathbf{y}|do(\mathbf{X}=\mathbf{x}), \mathbf{Z}=	\mathbf{z})  \nonumber \\
		&& \quad \times P(X_1=x_1) \cdot P(X_2=x_2) \cdot \cdots \cdot P(X_n=x_n) \cdot P(\mathbf{Z}=\mathbf{z})\nonumber \\
		&=& P(\mathbf{Y}=\mathbf{y}, \mathbf{X}=\mathbf{x}, \mathbf{Z}=\mathbf{z}). \nonumber
\end{eqnarray}
That is, there is no difference of joint probability distributions
over $\mathbf{Y},\mathbf{X}$ and $\mathbf{Z}$ between before and after intervention in $\mathbf{X}$.
Thus, if a model generating a data has this structure, we can use the original data for modeling the
average causal effects.

Figure \ref{fig_able_to_build_modeling} is an example
of a graph available for modeling the average casual effects 
$E[Y|do(X), Z_1, Z_2]$.
Figure \ref{fig_not_aible_to_build_modeing} is an example of one not available for that.
Comparing two graphs, the graph not available for modeling has the arrows (red) between
$X$ and $Z_1, Z_2$ that hinder modeling the average causal effect $E[Y|do(X), Z_1, Z_2]$, 
but one available for modeling has been deleted them.
If a data is generated by models of Figure \ref{fig_able_to_build_modeling},
it is available for modeling the average causal effect $E[Y|do(X), Z_1, Z_2]$.
On the other hand, if a data is not generated by models of Figure \ref{fig_able_to_build_modeling},
for example one generated by models of Figure \ref{fig_not_aible_to_build_modeing}, it is not available for the modeling.
\begin{figure}
	\begin{center}
		\includegraphics[keepaspectratio, scale=0.3]{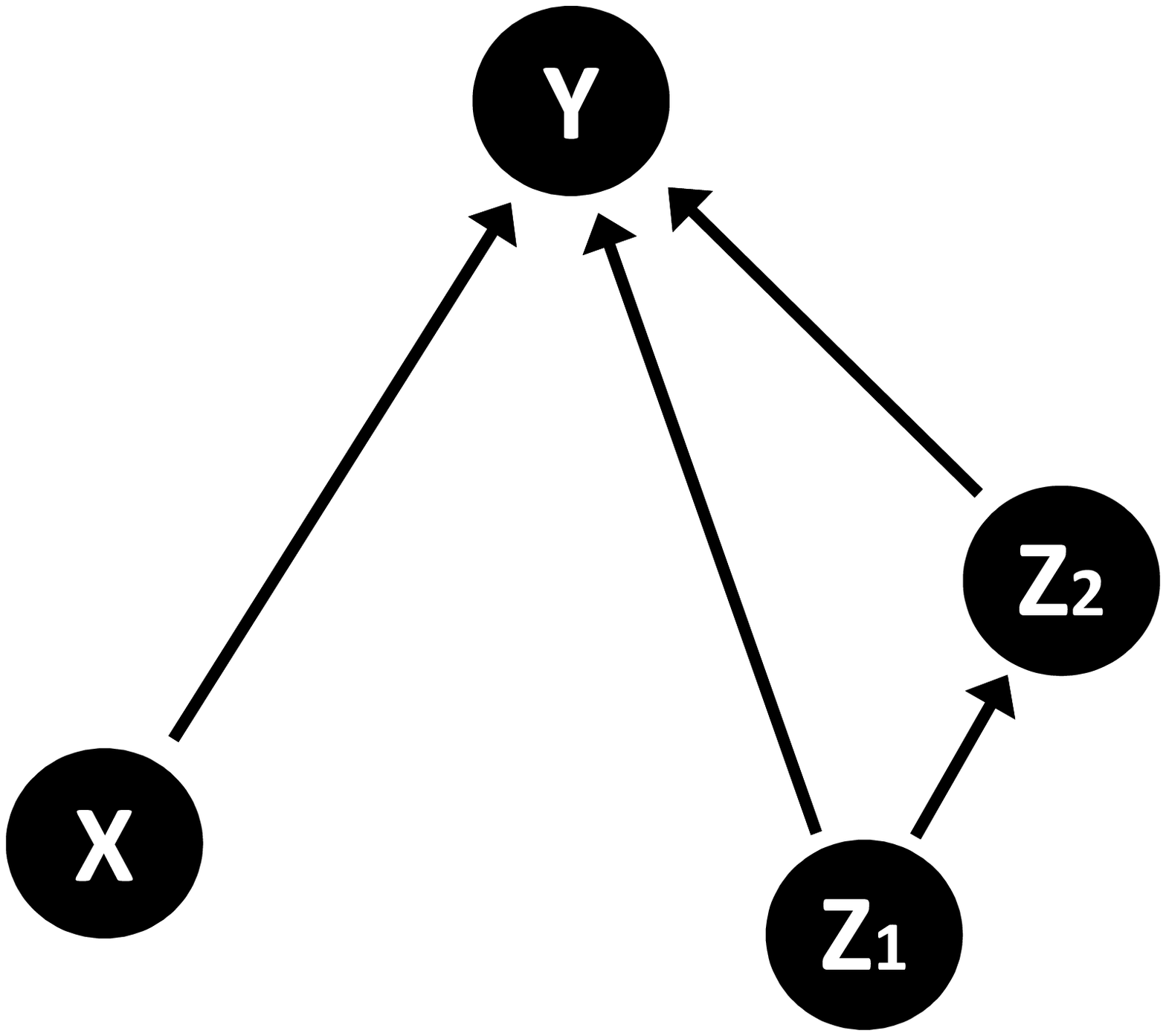}
		\caption{A graph available for modeling the average casual effects. If a data is generated by models of
			this, it is available for modeling the average causal effect
			$E[Y|do(X), Z_1, Z_2]$ without any operations.}\label{fig_able_to_build_modeling}
		\includegraphics[keepaspectratio, scale=0.3]{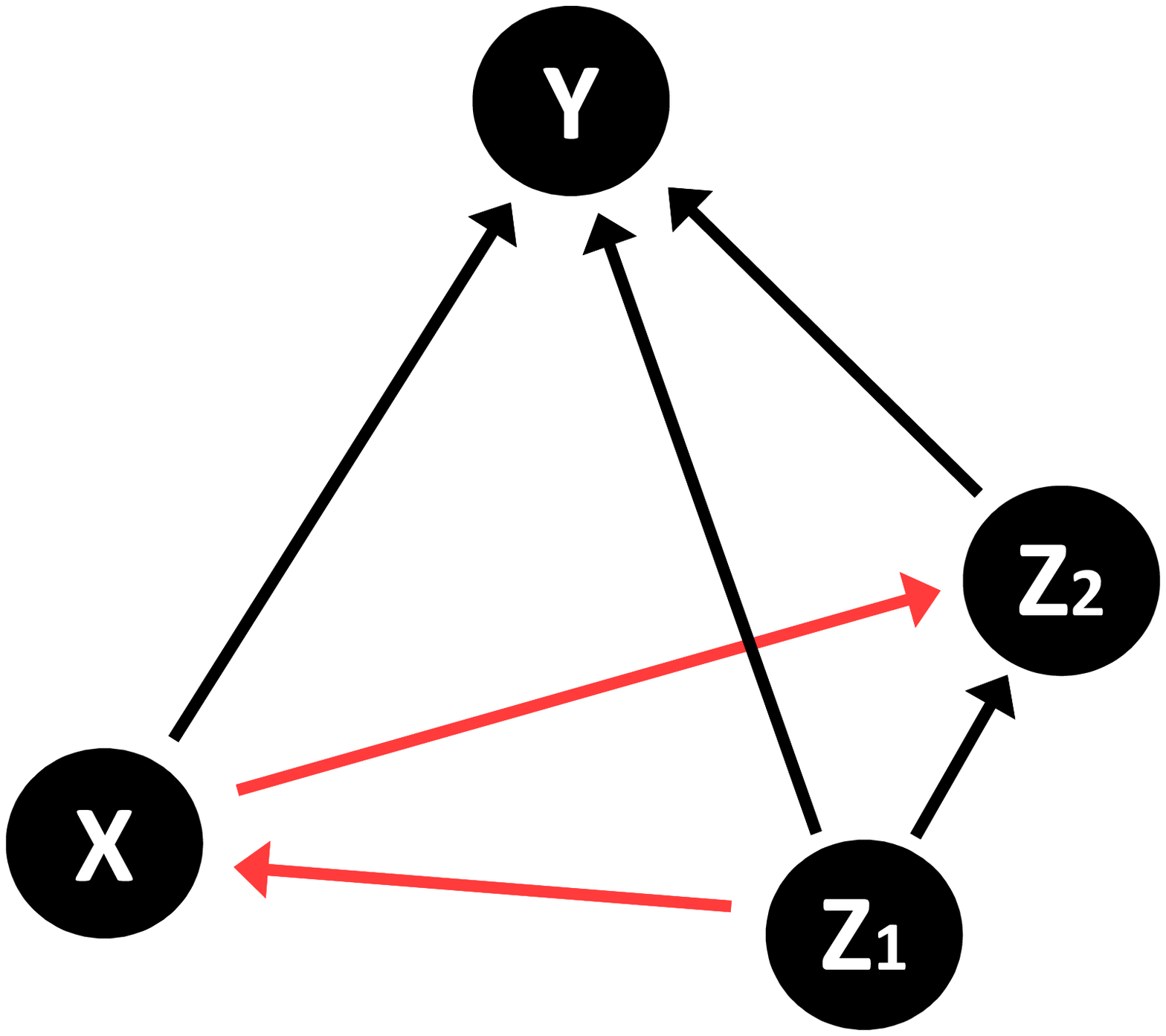}
		\caption{A graph not available for modeling the average causal effect.
			If a data is not generated by models of this, it is not available for modeling the average causal effect
			$E[Y|do(X), Z_1, Z_2]$.}\label{fig_not_aible_to_build_modeing}
	\end{center}
\end{figure}

\subsection{Main theorem}

The following Theorem \ref{thorem_multi_vals} shows how the proposing method builds
models from original data.
In addition, it shows that the method preserves for the
identifiability of causal effects for models which generate data.
This theorem \ref{thorem_multi_vals} includes cases with more than 
two variables to intervene in.
That is, it includes the causal effects of joint intervention.
\begin{theorem}\label{thorem_multi_vals}
	For $Y \in \mathbf{V}$ and $\mathbf{X}=\{X_1, X_2, \dots, X_n\} \subset \mathbf{V}$, 
	let $\mathbf{Z}=\mathbf{V}\setminus (\{Y\}\cup\mathbf{X})$.
	Suppose that $\mathbf{X} \subset An(Y)_G$ and that $\mathbf{Z} \cap De(\mathbf{X})_G \cap De(Y)_G=\phi$.
	Then, for $\mathbf{X}$, $x_1\in \mathcal{X}_{X_1}$, $x_2\in \mathcal{X}_{X_2}$, $\dots$, $x_n\in \mathcal{X}_{X_n}$
	and $\mathbf{z}\in\mathcal{X}_{\mathbf{Z}}$, let 
	$\widetilde{\mathbf{X}}^{(k)}=\{X_1^{(k)}, X_2^{(k)}, \dots, X_n^{(k)}\}\ (1 \le k \le n)$ be as follows. \\
	\noindent \underline{If $k = 1$}
	\begin{eqnarray}
		\widetilde{X}_i^{(1)} &=& X_i - E[X_i|\mathbf{Z}=\mathbf{z}]  \quad \text{for $1 \le i \le n$} \nonumber\\
		\widetilde{x}_i^{(1)} &=& x_i - E[X_i|\mathbf{Z}=\mathbf{z}] \; \quad \text{for $1 \le i \le n$}. \nonumber
	\end{eqnarray}
	\noindent \underline{If $2 \le k \le n-1$}
	\begin{eqnarray}
		\widetilde{X}_i^{(k+1)} &=&
		\begin{cases} 
			\widetilde{X}_i ^{(k)}  & \text{if  $1 \le i \le k$} \nonumber\\
			\widetilde{X}_{i}^{(k)} - E[\widetilde{X}_{i}^{(k)}|\widetilde{X}_{k}^{(k)}=\widetilde{x}_{k}^{(k)}] & \text{if $i \ge k+1$} \nonumber		
		\end{cases} \nonumber\\
		\widetilde{x}_i^{(k+1)} &=&
		\begin{cases} 
			\widetilde{x}_i ^{(k)} & \text{if  $1 \le i \le k$} \nonumber\\
			\widetilde{x}_{i}^{(k)} - E[\widetilde{X}_{i}^{(k)}|\widetilde{X}_{k}^{(k)}=\widetilde{x}_{k}^{(k)}] & \text{if $i \ge k+1$}. \nonumber			
		\end{cases}	
	\end{eqnarray}
	Using the above results, for $Y$ and $y\in \mathcal{X}_Y$, let $\widetilde{Y}^{(k)}\ (1 \le k \le n)$ be as follows. \\
	\noindent \underline{If $k = 1$}
	\begin{eqnarray}
		\widetilde{Y}^{(1)} &=& Y - E[Y|\mathbf{Z}=\mathbf{z}] \nonumber\\
		\widetilde{y}^{(1)} &=& y - E[Y|\mathbf{Z}=\mathbf{z}]. \nonumber
	\end{eqnarray}	
	\noindent \underline{If $2 \le k \le n-1$}
	\begin{eqnarray}
		\widetilde{Y}^{(k+1)} &=& \widetilde{Y}^{(k)} - E[\widetilde{Y}^{(k)}|\widetilde{X}_k^{(k)}=\widetilde{x}_k^{(k)}] \nonumber\\
		\widetilde{y}^{(k+1)} &=& \widetilde{y}^{(k)} - E[\widetilde{Y}^{(k)}|\widetilde{X}_k^{(k)}=\widetilde{x}_k^{(k)}]. \nonumber
	\end{eqnarray}
	Additionally, for each $U \in \mathbf{U}$, let $\widetilde{U}^{(k)}\ (1 \le k \le n)$ be as follows. \\
	\begin{eqnarray}
		\widetilde{U}^{(1)} &=& U - E[U|\mathbf{Z}=\mathbf{z}] \nonumber \\
		\widetilde{U}^{(2)} &=& \widetilde{U}^{(1)} - E[\widetilde{U}^{(1)}|\widetilde{X}_1^{(1)}=\widetilde{x}_1^{(1)}] \nonumber \\
		&\cdots& \nonumber \\
		\widetilde{U}^{(n)} &=& \widetilde{U}^{(n-1)} - E[\widetilde{U}^{(n-1)}|\widetilde{X}_{n-1}^{(n-1)}=\widetilde{x}_{n-1}^{(n-1)}] \nonumber
	\end{eqnarray}
	Let $\widetilde{Y}=\widetilde{Y}^{(n)}$, 
	$\widetilde{\mathbf{X}}=\widetilde{\mathbf{X}}^{(n)} = \{\widetilde{X}_1^{(1)},\widetilde{X}_2^{(2)},\dots,\widetilde{X}_n^{(n)}\}$, 
	$\widetilde{\mathbf{U}}=\{\widetilde{U}^{(n)}|U\in\mathbf{U}\}$, and let
	$\widetilde{\mathbf{V}}=\{\widetilde{Y}\}\cup\widetilde{\mathbf{X}}\cup\mathbf{Z}$.
	Then, $\widetilde{\mathbf{V}}\cup\widetilde{\mathbf{U}}$ is a semi-Markovian model.
	Let $\widetilde{G}$ be a DAG for $\widetilde{\mathbf{V}}\cup\widetilde{\mathbf{U}}$, 
	then $\widetilde{\mathbf{X}} \subset An(\widetilde{Y})_{\widetilde{G}} \cup Ind(\widetilde{Y})_{\widetilde{G}}$.
	If $P(Y|do(\mathbf{X}),\mathbf{Z})$ is identifiable in $G$, then
	$\widetilde{G}$ is available for modeling the average causal effect $E[\widetilde{Y}|do(\widetilde{\mathbf{X}})]$.
	
	Moreover, it holds that
	\begin{eqnarray}
		\lefteqn{P(Y=y|do(X_1=x_1),\dots,do(X_n=x_n), \mathbf{Z}=\mathbf{z})} \quad \nonumber \\
		&=& P(\widetilde{Y}^{(n)}=\widetilde{y}^{(n)}|
		do(\widetilde{X}_{1}^{(1)}=\widetilde{x}_{1}^{(1)}), do(\widetilde{X}_{2}^{(2)}=\widetilde{x}_{2}^{(2)}), \nonumber \\
		&& \qquad \mbox \dots,	do(\widetilde{X}_{n}^{(n)}=\widetilde{x}_{n}^{(n)})) \label{eq_tilde_is_same_original} \\
		&=& P(\widetilde{Y}^{(n)}=\widetilde{y}^{(n)}|\widetilde{X}_{1}^{(1)}=\widetilde{x}_{1}^{(1)}, \widetilde{X}_{2}^{(2)}=\widetilde{x}_{2}^{(2)}, \nonumber \\
		&& \qquad \dots,\widetilde{X}_{n}^{(n)}=\widetilde{x}_{n}^{(n)}) \label{eq_do_equals_codi} \\
		&=& P(\widetilde{Y}^{(n)}=\widetilde{y}^{(n)}|\widetilde{X}_{n}^{(n)}=\widetilde{x}_{n}^{(n)}). \label{for_collolary_trans}
	\end{eqnarray}		
\end{theorem}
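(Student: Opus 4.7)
The plan is to proceed in three phases: establish the structural properties of $\widetilde{G}$, verify that $\widetilde{G}$ is available for modeling the average causal effect, and then derive the chain of equalities. First, I would verify that $\widetilde{\mathbf{V}}\cup\widetilde{\mathbf{U}}$ is a semi-Markovian model with a well-defined DAG $\widetilde{G}$. Each transformed variable is, for fixed $\mathbf{z}$, an affine shift of the corresponding original variable by quantities that are conditional expectations evaluated at $\mathbf{z}$ and at $\widetilde{x}_j^{(j)}$ for $j<k$; acyclicity of $\widetilde{G}$ then follows from acyclicity of $G$ together with the sequential nature of the construction. The assumption $\mathbf{Z}\cap De(\mathbf{X})_G \cap De(Y)_G = \emptyset$ ensures the shifts do not create feedback into $\widetilde{Y}$ or $\widetilde{\mathbf{X}}$.

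Next, I would prove the two key structural claims. For $\widetilde{\mathbf{X}} \subset An(\widetilde{Y})_{\widetilde{G}} \cup Ind(\widetilde{Y})_{\widetilde{G}}$, I would track which original paths between $X_i$ and $Y$ survive the iterative subtractions. Since $\mathbf{X}\subset An(Y)_G$, every $X_i$ originally influences $Y$ in $G$; after orthogonalization, each $\widetilde{X}_i^{(i)}$ either still has a directed path to $\widetilde{Y}$ (landing in $An(\widetilde{Y})_{\widetilde{G}}$) or the only paths have been severed by the residual construction (landing in $Ind(\widetilde{Y})_{\widetilde{G}}$). For availability for modeling, identifiability of $P(\widetilde{Y}|do(\widetilde{\mathbf{X}}))$ in $\widetilde{G}$ transfers from identifiability of $P(Y|do(\mathbf{X}),\mathbf{Z})$ in $G$, because the transformation is a deterministic invertible change of variables at fixed $\mathbf{z}$. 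The crucial condition $\overline{\widetilde{G}^{obs}}(\widetilde{\mathbf{X}})=\widetilde{G}^{obs}$ follows from the construction itself: subtracting $E[X_i|\mathbf{Z}=\mathbf{z}]$ at step $1$ severs all arrows from $\mathbf{Z}$ into the transformed intervention variables, and the subsequent subtractions at steps $2,\dots,n-1$ sever all arrows between them.

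Finally, for the equality chain: equation (\ref{eq_tilde_is_same_original}) follows since $(Y,\mathbf{X})\mapsto(\widetilde{Y},\widetilde{\mathbf{X}})$ is affine and invertible conditional on $\mathbf{z}$, so the two conditional distributions agree up to this deterministic relabeling. Equation (\ref{eq_do_equals_codi}) is then an immediate consequence of availability for modeling in $\widetilde{G}$, which equates the $do$-operator with plain conditioning. Equation (\ref{for_collolary_trans}) is the content of the Gram-Schmidt-like orthogonalization: the iterative subtraction of $E[\widetilde{Y}^{(k)}|\widetilde{X}_k^{(k)}=\widetilde{x}_k^{(k)}]$ strips away the dependence of $\widetilde{Y}$ on $\widetilde{X}_k^{(k)}$ for each $k<n$, leaving only dependence on $\widetilde{X}_n^{(n)}$. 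The main obstacle lies precisely here: the orthogonalization natively produces only conditional-mean-zero residuals, whereas the claim is a full distributional equality. Bridging this gap is the hardest step, and I would leverage the semi-Markovian structure together with the hypothesis $\mathbf{Z}\cap De(\mathbf{X})_G\cap De(Y)_G=\emptyset$ to argue that in this setting the mean subtraction at each step is in fact distribution-preserving on the relevant conditional slice, so that the residual structure encodes the conditional distribution and not merely its first moment.
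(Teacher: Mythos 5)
Your outline has the right overall shape, but it leaves the two load-bearing steps unproven. The first is the one you yourself flag at the end: the construction subtracts conditional expectations, which by itself yields only mean-independence of the residuals, not the full independence needed for the distributional equalities (\ref{eq_do_equals_codi}) and (\ref{for_collolary_trans}) and for the claim $\overline{\widetilde{G}^{obs}}(\widetilde{\mathbf{X}})=\widetilde{G}^{obs}$. Saying you ``would leverage the semi-Markovian structure'' to make the mean subtraction distribution-preserving is not an argument; it is precisely the theorem's crux. The paper's proof handles this by an explicit induction on $k$ with four hypotheses (independence of $\widetilde{Y}^{(k)}$ and $\widetilde{X}_i^{(k)}$ from the earlier $\widetilde{X}_j^{(j)}$ and from $\mathbf{Z}$, plus preservation of the semi-Markovian structure and identifiability), supported by Lemmas \ref{lemma_one_do_minus_modify_also_identifibility_preserve} and \ref{lemma_condi_minus_modify_also_identifibility_preserve}, which show that after the subtraction the only surviving parent relations are those forced by the factorization, and which rule out residual functional dependence using the standing assumption that no $V\in\mathbf{V}$ is a deterministic function of other observed variables. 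Without some argument of this kind your residuals could remain dependent on $\mathbf{Z}$ and on each other, and the chain of equalities collapses.

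The second gap is in your justification of (\ref{eq_tilde_is_same_original}). An ``affine and invertible relabeling'' argument can relate \emph{conditional} distributions of the tilde variables to conditional distributions of the originals, but it cannot relate the two $do$-expressions, because $\widetilde{G}$ is not a relabeling of $G$: arrows from $\mathbf{Z}$ into $\mathbf{X}$ and among the $X_i$ have been deleted, so the interventional semantics genuinely differ. The paper instead proves (\ref{eq_tilde_is_same_original}) by showing both sides equal $P(Y|\mathbf{X},\mathbf{Z})$: the right-hand side via the residual computation plus $do$-equals-conditioning in $\widetilde{G}$, and the left-hand side via Lemma \ref{lemma_intervention_eq}, which establishes $P(Y|do(\mathbf{X}),\mathbf{Z})=P(Y|\mathbf{X},\mathbf{Z})$ in the \emph{original} graph using exactly the hypotheses $\mathbf{X}\subset An(Y)_G$ and $\mathbf{Z}\cap De(\mathbf{X})_G\cap De(Y)_G=\phi$. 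Your proposal never uses these hypotheses for this purpose, so the step from the original interventional distribution to anything computable from the observational joint is missing. Relatedly, the transfer of identifiability to $\widetilde{G}$ cannot be dismissed as a ``deterministic invertible change of variables'': since the graph changes, one must check (as the paper does via the hedge criterion in Lemmas \ref{lemma_edge_del_modify_between_x_also_identifibility_preserve} and \ref{lemma_edge_del_modify_between_z_also_identifibility_preserve}) that deleting those arrows cannot create a hedge.
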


The following corollary shows how to estimate the average causal effects from each of
the conditional expectations (predictions of supervised algorithms) obtained
from Theorem \ref{thorem_multi_vals}.
\begin{corollary}\label{corollary_how_to_calc_ave_intervention_effect}
	For 
	$\widetilde{Y}^{(1)}, \widetilde{Y}^{(2)}, \ldots \widetilde{Y}^{(n)}$, 
	$\widetilde{\mathbf{y}}^{(1)}, \widetilde{\mathbf{y}}^{(2)}, \ldots, \widetilde{\mathbf{y}}^{(n)}$,
	$\widetilde{\mathbf{X}}^{(1)}, \widetilde{\mathbf{X}}^{(2)}, \ldots, \widetilde{\mathbf{X}}^{(n)}$ 
	and $\widetilde{\mathbf{x}}^{(1)}, \widetilde{\mathbf{x}}^{(2)}, \ldots, \widetilde{\mathbf{x}}^{(n)}$
	in Theorem \ref{thorem_multi_vals} 
	and for $Y$, $y$, $X_1, X_2, ..., X_n$, $x_1, x_2, \ldots, x_n$, $\mathbf{Z}$ and $\mathbf{z}$, 
	let
	\begin{eqnarray}
		\lefteqn{P'(Y=y, X_1=x_1, X_2=x_2,\ldots X_n=x_n, \mathbf{Z}=\mathbf{z}) } \quad \nonumber \\
		&=&P(Y=y|do(X_1=x_1), do(X_2=x_2), \nonumber \\
		&& \quad \ldots, do(X_n=x_n), \mathbf{Z}=\mathbf{z}) \nonumber \\
		&& \times P(X_1=x_1) \cdot P(X_2=x_2) \cdots P(X_n=x_n)  \nonumber \\
		&& \times P(\mathbf{Z} = \mathbf{z}), \nonumber
	\end{eqnarray}
	and let
	\begin{eqnarray}
		\lefteqn{P''(Y=y, \widetilde{X}_{1}^{(1)}=\widetilde{x}_{1}^{(1)}, \widetilde{X}_{2}^{(2)}=\widetilde{x}_{2}^{(2)}, \dots,\widetilde{X}_{n}^{(n)}=\widetilde{x}_{n}^{(n)}, \mathbf{Z}=\mathbf{z})} \quad \nonumber \\
		&=& P(Y=y|\widetilde{X}_{1}^{(1)}=\widetilde{x}_{1}^{(1)}, \widetilde{X}_{2}^{(2)}=\widetilde{x}_{2}^{(2)}, \dots,\widetilde{X}_{n}^{(n)}=\widetilde{x}_{n}^{(n)}) \nonumber \\
		&& \times  P(\widetilde{X}_{1}^{(1)}=\widetilde{x}_{1}^{(1)})\cdot P(\widetilde{X}_{2}^{(2)}=\widetilde{x}_{2}^{(2)})
		\cdots P(\widetilde{X}_{n}^{(n)}=\widetilde{x}_{n}^{(n)}) \nonumber \\
		&& \times P(\mathbf{Z} = \mathbf{z}). \nonumber
	\end{eqnarray}
	Then,
	\begin{eqnarray}
		\lefteqn{E_{P'}[Y|X_1=x_1, X_2=x_2,\ldots,X_n=x_n,\mathbf{Z}=\mathbf{z}]}  \nonumber\\
		&=& E_{P''}[Y|\widetilde{X}_{1}^{(1)}=\widetilde{x}_{1}^{(1)}, \widetilde{X}_{2}^{(2)}=\widetilde{x}_{2}^{(2)}, \nonumber \\
		&& \qquad \ldots,\widetilde{X}_{n}^{(n)}=\widetilde{x}_{n}^{(n)},\mathbf{Z}=\mathbf{z}] \label{eq_condi_ave_intervent_to_tidle_trans} \\
		&=&  E_{P}[Y|\mathbf{Z}=\mathbf{z}] + E_{P}[\widetilde{Y}^{(1)}|\widetilde{X}_{1}^{(1)}=\widetilde{x}_{1}^{(1)}]   \nonumber \\
		&& \quad + E_{P}[\widetilde{Y}^{(2)}|\widetilde{X}_{2}^{(2)}=\widetilde{x}_{2}^{(2)}]   \nonumber \\
		&& \qquad  + \cdots + E_{P}[\widetilde{Y}^{(n)}|\widetilde{X}_{n}^{(n)}=\widetilde{x}_{n}^{(n)}]. \label{eq_for_estimate_average _effect}
	\end{eqnarray}	
\end{corollary}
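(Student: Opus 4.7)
The corollary has two equalities. Equation (\ref{eq_condi_ave_intervent_to_tidle_trans}) is a restatement of Theorem \ref{thorem_multi_vals} at the level of conditional expectations under the two auxiliary product measures, so I would handle it first by unpacking the definitions of $P'$ and $P''$. Equation (\ref{eq_for_estimate_average _effect}) then follows from telescoping the recursive definitions of the $\widetilde{Y}^{(k)}$ and using the theorem to reduce the final conditioning set.

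\textbf{First equality.} The marginal factors in the definitions of $P'$ and $P''$ cancel against the conditioning events, so
\[
E_{P'}[Y \mid X_1=x_1,\ldots,X_n=x_n,\mathbf{Z}=\mathbf{z}] = \int y \, P(y\mid do(x_1),\ldots,do(x_n),\mathbf{z}) \, dy,
\]
and, using that $\mathbf{Z}$ is independent of $(Y,\widetilde{X}_1^{(1)},\ldots,\widetilde{X}_n^{(n)})$ under $P''$,
\[
E_{P''}[Y \mid \widetilde{X}_1^{(1)}=\widetilde{x}_1^{(1)},\ldots,\widetilde{X}_n^{(n)}=\widetilde{x}_n^{(n)},\mathbf{Z}=\mathbf{z}] = \int y \, P(y\mid \widetilde{x}_1^{(1)},\ldots,\widetilde{x}_n^{(n)}) \, dy.
\]
Theorem \ref{thorem_multi_vals}, equations (\ref{eq_do_equals_codi})--(\ref{for_collolary_trans}), identifies $P(y\mid do(x),\mathbf{z})$ with $P(\widetilde{y}^{(n)}\mid \widetilde{x}_n^{(n)})$, and at the conditioning values the deterministic offset between $Y$ and $\widetilde{Y}^{(n)}$ is fixed, so a change of variable in the integral matches the two sides.

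\textbf{Second equality.} Iterating the recursive definitions of $\widetilde{Y}^{(k)}$ gives the telescoping identity
\[
Y \;=\; E[Y\mid\mathbf{Z}=\mathbf{z}] \;+\; \sum_{k=1}^{n-1} E\!\left[\widetilde{Y}^{(k)} \,\middle|\, \widetilde{X}_k^{(k)}=\widetilde{x}_k^{(k)}\right] \;+\; \widetilde{Y}^{(n)},
\]
in which every term except $\widetilde{Y}^{(n)}$ is a constant once $\mathbf{z}$ and $\widetilde{x}_1^{(1)},\ldots,\widetilde{x}_{n-1}^{(n-1)}$ are fixed. I would substitute this decomposition into the integral representation from the previous paragraph, pull the constants outside the integral, and evaluate the remaining integrand as $E_P[\widetilde{Y}^{(n)}\mid \widetilde{X}_n^{(n)}=\widetilde{x}_n^{(n)}]$ using (\ref{for_collolary_trans}) to reduce the conditioning. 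Collecting terms reproduces (\ref{eq_for_estimate_average _effect}).

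\textbf{Main obstacle.} The delicate point is the collapse of the conditioning set for the residual $\widetilde{Y}^{(n)}$: the full tuple $(\widetilde{X}_1^{(1)},\ldots,\widetilde{X}_n^{(n)},\mathbf{Z})$ must be reduced to $\widetilde{X}_n^{(n)}$ alone. Theorem \ref{thorem_multi_vals} supplies this at the level of distributions through (\ref{for_collolary_trans}) and through the structural fact $\widetilde{\mathbf{X}} \subset An(\widetilde{Y})_{\widetilde{G}} \cup Ind(\widetilde{Y})_{\widetilde{G}}$, but I would need to verify that the implied d-separation carries over cleanly to conditional expectations of $\widetilde{Y}^{(n)}$, in particular that $\widetilde{X}_1^{(1)},\ldots,\widetilde{X}_{n-1}^{(n-1)}$ and $\mathbf{Z}$ do not contribute information about $\widetilde{Y}^{(n)}$ beyond what $\widetilde{X}_n^{(n)}$ already carries. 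Once that reduction is established, the rest of the argument is algebraic unpacking of the product-measure definitions.
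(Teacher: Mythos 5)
Your proposal follows essentially the same route as the paper: establish the first equality by unpacking the product-measure definitions of $P'$ and $P''$ and invoking the chain of identities (\ref{eq_tilde_is_same_original})--(\ref{for_collolary_trans}) from Theorem \ref{thorem_multi_vals}, then prove the second by telescoping the recursive definitions of $\widetilde{Y}^{(k)}$, pulling out the terms that are constant under the conditioning (the paper formalizes this as the $f(\widetilde{x}_i^{(i)})$ and $g(\mathbf{z})$ evaluations justified by the mutual independence of $\widetilde{X}_1^{(1)},\dots,\widetilde{X}_n^{(n)},\mathbf{Z}$), and collapsing the conditioning set of the residual $\widetilde{Y}^{(n)}$ via (\ref{for_collolary_trans}). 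The "obstacle" you flag is resolved exactly as you anticipate: the distributional identity (\ref{for_collolary_trans}) together with $\widetilde{Y},\widetilde{\mathbf{X}}\indep\mathbf{Z}$ immediately yields the corresponding identity for conditional expectations, which is how the paper closes the argument.
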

\subsection{Algorithms}

From the theorem \ref{thorem_multi_vals} and its corollary,
we propose a meta-algorithm to estimate the average causal effects of intervention in continuous variables.
The algorithm is divided into two phases; the model building phase and the average causal effect estimating phase.

Algorithm \ref{algo_learn} shows the meta-algorithm of the model building phase. 
In this phase, a original data is used as input, then  models is built while transforming it.
In Algorithm \ref{algo_learn}, ``SupervisedLearn'' represents training of an arbitrary supervised regression algorithm.
``SupervisedPredict'' represents computation of predicted values from a pair of the input data and the models obtained from
training ``SupervisedLearn''.

Algorithm \ref{algo_pred} shows the meta-algorithm of the average causal effect estimating phase.
In this phase, an estimator $\hat{Y}$ of
the average causal effect  $E[Y|do(X_1=x_1), do(X_2=x_2), \ldots, do(X_n=x_n),\mathbf{Z}=\mathbf{z}]$
is computed using models of ``SupervisedLearn'' in Algorithm \ref{algo_learn},
values $x_1, x_2, \ldots, x_n$ and values $\mathbf{z}$,
where $x_1, x_2, \ldots, x_n$ correspond to values of intervention in $\mathbf{X}$ for each individual
and $\mathbf{z}$ correspond to values of each individual's covariates $\mathbf{Z}$.

\begin{figure}
	\begin{algorithm}[H] 
		\caption{Model building phase}\label{algo_learn}
		\begin{algorithmic}[1]
			\Require outcome $Y$, \, treatments $\{X_1, X_2, \ldots, X_n\}$, \, covariates $\mathbf{Z}$
			\Ensure models $\left\{   
			\mathscr{M}_{\mathbf{Z} \rightarrow Y}, \, 
			\mathscr{M}_{\mathbf{Z} \rightarrow X_i}, \, 
			\mathscr{M}_{\widetilde{X}_i^{(i)}\rightarrow \widetilde{Y}^{(i)}}, \,
			\mathscr{M}_{\widetilde{X}_i^{(i)}\rightarrow \widetilde{X}_j^{(i)}} \right\}$	
			\State $\mathscr{M}_{\mathbf{Z} \rightarrow Y} \leftarrow {\mathit SupervisedLearn}(\mathbf{Z}, \: Y)$	
			\State $\widetilde{Y}^{(1)} \leftarrow Y -  {\mathit SupervisedPredict}(\mathscr{M}_{\mathbf{Z} \rightarrow Y}, \: \mathbf{Z})$
			\For {$i$ from $1$ to $n$ }
			\State $\mathscr{M}_{\mathbf{Z} \rightarrow X_i} \leftarrow {\mathit SupervisedLearn}(\mathbf{Z}, \: X_i)$
			\State $\widetilde{X}_i^{(1)} \leftarrow X_i -  {\mathit SupervisedPredict}(\mathscr{M}_{\mathbf{Z} \rightarrow X_i},\: \mathbf{Z})$
			\EndFor
			\For {$i$ from $1$ to $n-1$}
			\State $\mathscr{M}_{\widetilde{X}_i^{(i)} \rightarrow \widetilde{Y}^{(i)}} 
			\leftarrow {\mathit SupervisedLearn}
			(\widetilde{X}_i^{(i)}, \: \widetilde{Y}^{(i)})$
			\State $\widetilde{Y}^{(i+1)} \leftarrow 
			\widetilde{Y}^{(i)} - 
			{\mathit SupervisedPredict}
			(\mathscr{M}_{\widetilde{X}_i^{(i)} \rightarrow \widetilde{Y}^{(i)}},\: \widetilde{X}_i^{(i)})$		
			\For {$j$ from $i+1$ to $n$ }
			\State $\mathscr{M}_{\widetilde{X}_i^{(i)} \rightarrow \widetilde{X}_j^{(i)}} 
			\leftarrow {\mathit SupervisedLearn}
			(\widetilde{X}_i^{(i)}, \: \widetilde{X}_j^{(i)})$
			\State $\widetilde{X}_j^{(i+1)} \leftarrow 
			\widetilde{X}_j^{(i)} - 
			{\mathit SupervisedPredict}
			(\mathscr{M}_{\widetilde{X}_i^{(i)} \rightarrow \widetilde{X}_j^{(i)}},\: \widetilde{X}_i^{(i)})$
			\EndFor
			\EndFor
			\State $\mathscr{M}_{\widetilde{X}_n^{(n)} \rightarrow \widetilde{Y}^{(n)}} 
			\leftarrow {\mathit SupervisedLearn}
			(\widetilde{X}_n^{(n)}, \: \widetilde{Y}^{(n)})$
		\end{algorithmic}
	\end{algorithm}
	\begin{algorithm}[H] 
		\caption{Average causal effect estimating phase}\label{algo_pred}
		\begin{algorithmic}[1]
			\Require 
			\hspace*{\algorithmicindent} \parbox[t]{1.0\linewidth}{
				\raggedright values of intervention $\{x_1, x_2, \ldots, x_n\}$, \, covariates $\mathbf{z}$,\\
				models $\left\{\mathscr{M}_{\mathbf{Z} \rightarrow Y}, \:
				\mathscr{M}_{\mathbf{Z} \rightarrow X_i}, \:
				\mathscr{M}_{\widetilde{X}_i^{(i)}\rightarrow \widetilde{X}_j^{(i)}}, \:
				\mathscr{M}_{\widetilde{X}_i^{(i)}\rightarrow \widetilde{Y}^{(i)}} \right\}$}
			\Ensure the average causal effect $\hat{Y}$
			\For {$i$ from $1$ to $n$ }
			\State $\widetilde{x}_i^{(1)} \leftarrow x_i -  {\mathit SupervisedPredict}(\mathscr{M}_{\mathbf{Z} \rightarrow X_i},\: \mathbf{z})$
			\EndFor
			\For {$i$ from $1$ to $n-1$ }
			\For {$j$ from $i+1$ to $n$ }
			\State $\widetilde{x}_j^{(i+1)} \leftarrow 
			\widetilde{x}_j^{(i)} - 
			{\mathit SupervisedPredict}
			(\mathscr{M}_{\widetilde{X}_i^{(i)} \rightarrow \widetilde{X}_j^{(i)}},\: \widetilde{x}_i^{(i)})$
			\EndFor
			\EndFor
			\State $\hat{Y} \leftarrow {\mathit SupervisedPredict}(\mathscr{M}_{\mathbf{Z} \rightarrow Y},\: \mathbf{z})$
			\For {$i$ from $1$ to $n$ }
			\State $\hat{Y} \leftarrow \hat{Y} + 
			{\mathit SupervisedPredict}(\mathscr{M}_{\widetilde{X}_i^{(i)}\rightarrow \widetilde{Y}^{(i)}}, \: \widetilde{x}_i^{(i)})$
			\EndFor
		\end{algorithmic}
	\end{algorithm}
\end{figure}

\section{Simulation Results and Discussion}\label{sec_num_exam}
We conducted numerical experiments on the proposing method by generating 1000 data sets from
a model such as
$Z \sim \mathcal{U}(0, 1)$, $X \sim sin(Z) + \varepsilon_X$, 
$Y = X \cdot Z  + \varepsilon_Y$, 
$\varepsilon_X \sim \mathcal{U}(-0.5, 0.5)$ and
$\varepsilon_Y \sim \mathcal{N}(0, 0.05)$. 
Here, ``$\sim \mathcal{U}(a, b)$'' represents sampling from
the Uniform distribution on an interval [a, b]. ``$\sim \mathcal{N}(a, b)$''
represents sampling from the Normal distribution
with mean $a$ and standard deviation $b$.

\begin{figure}
	\begin{center}
		\includegraphics[keepaspectratio, scale=0.3]{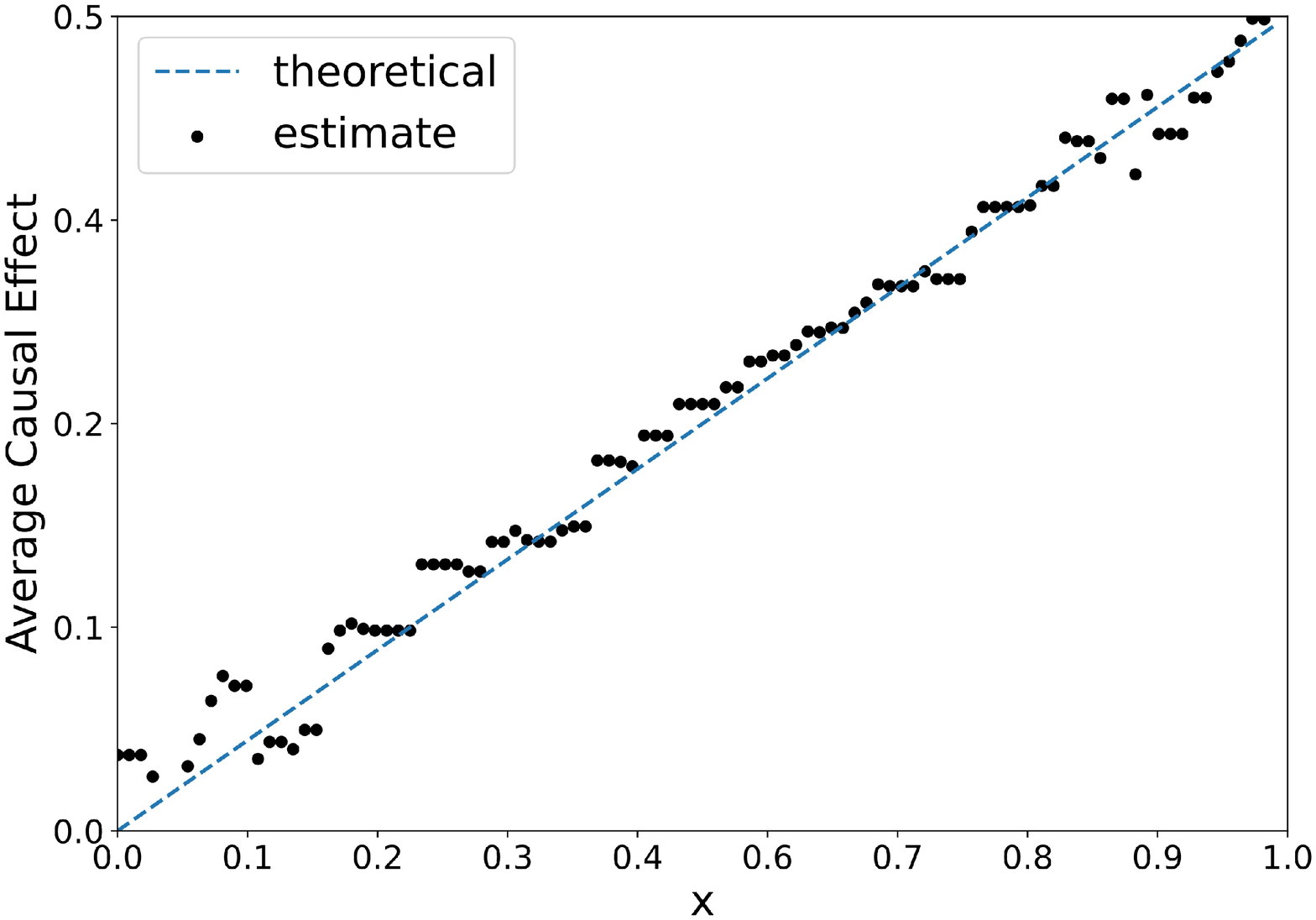}
		\caption{Results of estimating the Average Causal Effect on $Z=0.5$ (LightGBM)}\label{fig_esti_lightgbm}
		\includegraphics[keepaspectratio, scale=0.3]{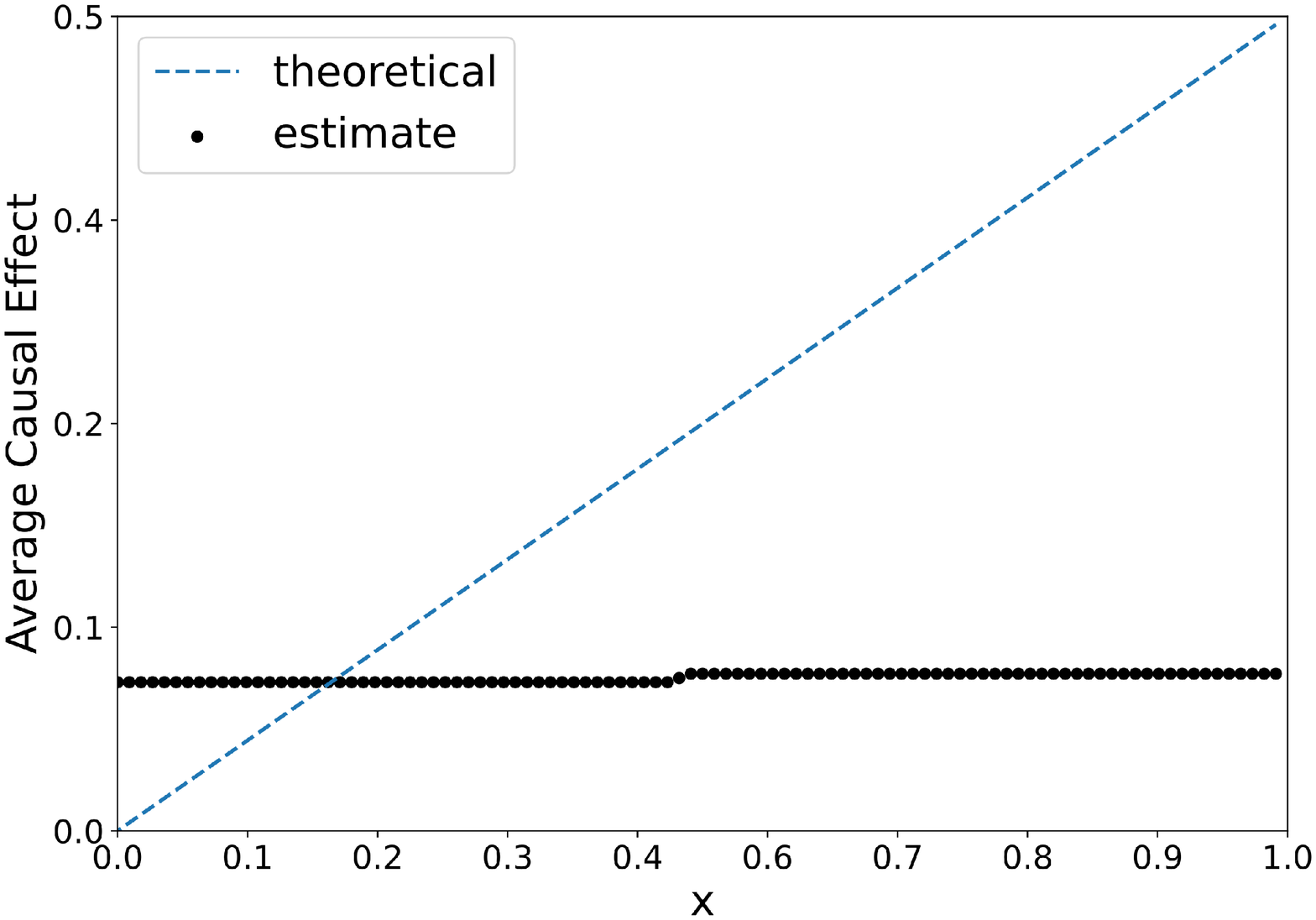}
		\caption{Results of estimating the Average Causal Effect on $Z=0.5$ (RandomForestst)}\label{fig_esti_RF}
	\end{center}
\end{figure}

Two results are shown here.
One is a result of using LightGBM and the other is RandomForest.
Figure \ref{fig_esti_lightgbm} is the result of of using LightGBM.
Figure \ref{fig_esti_RF} is the result of of using RandomForest.
The figures show the estimate or the theoretical values of the average causal effect $E[Y|do(X=x), Z]$,
where $Z$ is fixed to $0.5$.
The plots show the estimated values by the proposing method.
The dashed line shows the theoretical values ($Y = 0.5 X$) for the model which generates data.
In the figures, the $x$-axis corresponds to
values $x$ of intervention, 
and the $y$-axis corresponds to values of the average causal effect.

The result of LightGBM (Figure \ref{fig_esti_lightgbm}) shows that the proposing method 
almost exactly estimates the theoretical values of the average causal effects.
On the other hand, one of RandomForest (Figure \ref{fig_esti_RF}) shows that 
it fails to estimate that values. 

The main reason for this is thought that (a)insufficient hyperparameter search in 
RandomForest and (b)differences in the ability between algorithms to estimate the
conditional expectation.
It is also important to choose an algorithm that can learn well patterns of
data of interest and estimate well the conditional expectation of target.

\section{Conclusion}\label{sec_conclusioin}

We proposed a method to estimate the average causal effects of 
intervention in continuous
variables using supervised regression algorithms of machine learning.
We also have showed that the proposing method preserves for the
identifiability of causal effects for models which generate original data.
In addition, by simulation examines, it has been confirmed that the proposing
method successfully estimates the average causal effect.

This method can estimate values of the average causal effect of intervention in 
continuous variables, whenever we can estimate it from original data. 
Furthermore, because this method is algorithm-free, it can be widely applied to
supervised regression algorithms in machine learning.

\printbibliography

\appendix

\section{Proof of Theorem}

The following $C$-component, $C$-Forest and hedge 
are the important structures of a causal graph for the identifiability of causal effects. 
$C$-component is defined in [\cite{tian2002general}].
$C$-Forest and hedge is first defined by [\cite{shpitser2006identification}].
\begin{definition}[$C$-Component]
	\it
	Let $G$ be a causal graph of semi-Markovian model such that a subset of its bidirected arcs forms a spanning
	tree over all vertices in $G$. 
	Then $G$ is a $C$-Component(confounded Component).	
\end{definition}
\begin{definition}[$C$-Forest]
	\it
	Let $G$ be a causal graph of semi-Markovian model, 
	where $\mathbf{R}$ is the root set. Then $G$ is a $\mathbf{R}$-rooted $C$-Forest if all
	nodes in $G$ form a C-component.
\end{definition}
\begin{definition}[hedge]
	\it
	Let $\mathbf{X}$, $\mathbf{Y}$ be sets of variables in $G$. Let
	$F,F'$ be $\mathbf{R}$-rooted $C$-Forests such that $F\cap \mathbf{X} \neq \phi$, 
	$F'\cap \mathbf{X} = \phi$, $F'\subset F$ and $R \subset An(\mathbf{Y})_{\overline{G}(\mathbf{X})}$.
	Then $F$ and $F'$ form a hedge for $P(\mathbf{Y}|do(\mathbf{X}))$.
\end{definition}

The following theorem gives a necessary and sufficient condition of the identifiability of 
causal effects for joint intervention.
It is presented by [\cite{shpitser2006identification}].
\begin{theorem}[hedge criterion]\label{th_hedge_criterio}
	$P(\mathbf{Y}|do(\mathbf{X}))$ is identifiable from $P$
	in $G$ if and only if there does not exist a hedge for $P(\mathbf{Y}'|do(\mathbf{X}'))$
	in $G$, for any $\mathbf{X}' \subset X$ and $\mathbf{Y}' \subset \mathbf{Y}$. 
\end{theorem}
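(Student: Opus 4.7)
The plan is to prove the biconditional in two directions separately, following the original argument of Shpitser and Pearl.

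For the direction asserting that the existence of a hedge precludes identifiability, I would proceed by constructing two semi-Markovian models $M_1, M_2$ that witness non-identifiability. Given the hedge $(F, F')$ for $P(\mathbf{Y}' \mid do(\mathbf{X}'))$ with common root set $\mathbf{R}$, the construction uses binary variables: each observed $V_i$ in $F$ is structurally set to the XOR of its observed and unobserved parents, while variables outside $F$ are set to $0$. The two models differ only in the joint distribution over the unobserved confounders: uniform Bernoulli in $M_1$, but set to specific deterministic values on $F \setminus F'$ in $M_2$. Two checks are then needed: first, that $P_{M_1}(\mathbf{V}) = P_{M_2}(\mathbf{V})$, which follows from a parity-cancellation argument on the observed XOR structure; second, that $P_{M_1}(\mathbf{R} \mid do(\mathbf{X}')) \neq P_{M_2}(\mathbf{R} \mid do(\mathbf{X}'))$, because intervening on $\mathbf{X}'$ destroys the cancellation in the $F \setminus F'$ portion while leaving $F'$ untouched.

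For the converse direction, that non-identifiability forces a hedge, I would invoke the ID algorithm, a recursive procedure of seven cases that attempts to reduce $P(\mathbf{Y} \mid do(\mathbf{X}))$ to an expression in $P(\mathbf{V})$ using do-calculus and C-component factorization. I would first prove its soundness by induction on $|\mathbf{V}|$: every returned expression is validated by do-calculus. Then I would identify the unique failure case, namely when a C-component $S$ of an ancestral subgraph is strictly contained in a larger C-component $T$ obtained after appropriate restriction, while the root sets align. Finally, I would argue that $(T, S)$, relativized to a suitable $\mathbf{Y}' \subseteq \mathbf{Y}$, satisfies every clause of the hedge definition for some $P(\mathbf{Y}' \mid do(\mathbf{X}'))$ with $\mathbf{X}' \subseteq \mathbf{X}$.

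The main obstacle will be the completeness direction. Bookkeeping through the ID recursion to extract a hedge demands careful tracking of which ancestral subgraphs and C-components were entered during the recursion, and verification that the ancestral-closure and root-set conditions of the hedge definition are simultaneously met by the pair found at the failure point. A secondary subtlety on the other direction is the parity-cancellation argument establishing observational equivalence of the two XOR models; this is where the C-forest structure is essential, since in an arbitrary graph the cancellation need not occur, and the argument must proceed by induction along a topological order inside $F$.
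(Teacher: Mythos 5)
This theorem is not proved in the paper at all: it is imported verbatim from Shpitser and Pearl (2006) and used as a black box in the appendix (Lemmas 6 and 7 invoke it to transfer hedges between $G$ and $\widetilde{G}$). So there is no in-paper proof to compare against; what you have written is an outline of the proof in the cited reference, and as an outline it identifies the correct two-part strategy: the counterexample construction (hedge $\Rightarrow$ non-identifiable) and completeness of the ID algorithm (non-identifiable $\Rightarrow$ hedge).

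That said, what you have is a plan, not a proof: both load-bearing steps are named and then deferred. In the counterexample direction, the claim $P_{M_1}(\mathbf{V})=P_{M_2}(\mathbf{V})$ is exactly the hard part --- it relies on the fact that $F$ is an $\mathbf{R}$-rooted $C$-forest, so that the bidirected arcs span $F$ and every unobserved bit feeds an even number of observed parities; you acknowledge this but do not give the induction. You should also verify that both models assign strictly positive probability to every observable configuration (the paper assumes $P(v)>0$ throughout), and that the post-intervention distributions of $\mathbf{R}$ genuinely differ rather than merely the mechanisms differing. In the completeness direction, extracting the pair $(T,S)$ from the ID algorithm's failure line and checking \emph{all} clauses of the hedge definition --- in particular the root-set condition $\mathbf{R}\subset An(\mathbf{Y}')_{\overline{G}(\mathbf{X}')}$ and that $F'\subset F$ with $F'\cap\mathbf{X}'=\phi$ while $F\cap\mathbf{X}'\neq\phi$ --- is a substantial bookkeeping argument that occupies most of the original paper. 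None of your steps is wrong, but until those two arguments are written out the proof is not complete; for the purposes of the present paper it would be both sufficient and standard to cite the result rather than reprove it.
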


[\cite{pearl1995causal}] has given $do$-calculus, 
the following rules R1-R3 of transformation between causal effects.
\newline
\underline{$do$-calculus}(\cite{pearl1995causal})
\begin{enumerate}
	\item[R1. ]  $P(\mathbf{Y}|do(\mathbf{X}),\mathbf{Z},\mathbf{W}) 
	= P(\mathbf{Y}|do(\mathbf{X}), \mathbf{W})$
	\ if \ 
	$(\mathbf{Y} \indep \mathbf{Z}|\mathbf{X},\mathbf{W})_{
		\overline{G}(\mathbf{X})}$
	\item[R2. ] $P(\mathbf{Y}|do(\mathbf{X}),do(\mathbf{Z}),\mathbf{W}) 
	= P(\mathbf{Y}|do(\mathbf{X}), \mathbf{Z},\mathbf{W})$ 
	\ if \  $(\mathbf{Y} \indep \mathbf{Z}|\mathbf{X},\mathbf{W})_{
		\overline{\underline{G}}(\mathbf{X}, \mathbf{Z})}$
	\item[R3. ]  $P(\mathbf{Y}|do(\mathbf{X}),do(\mathbf{Z}),\mathbf{W}) 
	= P(\mathbf{Y}|do(\mathbf{X}), \mathbf{W})$ 
	\ if \ $(\mathbf{Y} \indep \mathbf{Z}|\mathbf{X},\mathbf{W})_{
		\overline{G}(\mathbf{X},\mathbf{Z}^*)}$ \\
	where \  $\mathbf{Z}^* = \mathbf{Z} \setminus An(\mathbf{W})_{\overline{G}(\mathbf{X})}$
\end{enumerate}

A necessary and sufficient condition of the identifiability of conditional causal effects
$P(\mathbf{Y}=\mathbf{y}|do(\mathbf{X}=\mathbf{x}),\mathbf{Z}=\mathbf{z})$ is 
obtained by applying R2 above to it, which is presented by [\cite{shpitser2012identification}].
\begin{theorem}[\cite{shpitser2012identification}]\label{shiptiser2012criterion}
	Let $\mathbf{Z}' \subset \mathbf{Z}$ be the maximal set such that
	$P(\mathbf{Y}|do(\mathbf{X}), \mathbf{Z}) =
	P(\mathbf{Y},\mathbf{Z} \setminus \mathbf{Z}'|do(\mathbf{X}),do(\mathbf{Z}'))$.
	Then $P(\mathbf{Y}|do(\mathbf{X}), \mathbf{Z})$ is identifiable in
	$G$ if and only if $P(\mathbf{Y},\mathbf{Z} \setminus \mathbf{Z}'|do(\mathbf{X}),do(\mathbf{Z}'))$
	is identifiable in $G$.
\end{theorem}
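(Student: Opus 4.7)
My plan is to reduce the theorem to a sequence of invariance claims under the successive residualization operation, then apply the $do$-calculus (Theorem~\ref{shiptiser2012criterion} and the R1/R2/R3 rules quoted later) and the hedge criterion (Theorem~\ref{th_hedge_criterio}) to conclude each of the three displayed equalities.

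First I would verify the semi-Markovian assertion by observing that every $\widetilde{X}_i^{(k)}$, $\widetilde{Y}^{(k)}$, and $\widetilde{U}^{(k)}$ is a deterministic function of the original $\mathbf{V}\cup\mathbf{U}$, since each inner conditional expectation such as $E[\widetilde{X}_i^{(k)}\mid \widetilde{X}_k^{(k)}]$ may be viewed as a fixed real-valued function of its conditioning argument (the target value $\mathbf{z}$ and the induced constants $\widetilde{x}_j^{(j)}$ are parameters of the construction). Hence $\widetilde{\mathbf{V}}\cup\widetilde{\mathbf{U}}$ lives in the original probability space, and the causal DAG $\widetilde{G}$ obtained by tracing functional dependencies inherits acyclicity from the linear processing order $\mathbf{Z}\prec\widetilde{X}_1^{(1)}\prec\cdots\prec\widetilde{X}_n^{(n)}\prec\widetilde{Y}$. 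The hypothesis $\mathbf{Z}\cap De(\mathbf{X})_G\cap De(Y)_G=\phi$ is exactly what prevents the regressions from injecting feedback into $\mathbf{Z}$, so $\widetilde{G}$ is a legitimate semi-Markovian causal graph whose bidirected arcs are those of $G$ (the residualization does not create new latent common causes because it is deterministic conditional on the observed variables).

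Second I would establish the structural claim $\widetilde{\mathbf{X}}\subset An(\widetilde{Y})_{\widetilde{G}}\cup Ind(\widetilde{Y})_{\widetilde{G}}$ and the availability condition $\overline{\widetilde{G}^{obs}}(\widetilde{\mathbf{X}})=\widetilde{G}^{obs}$ by induction on $k$. At step $k$ the subtraction $\widetilde{X}_i^{(k+1)}=\widetilde{X}_i^{(k)}-E[\widetilde{X}_i^{(k)}\mid \widetilde{X}_k^{(k)}=\widetilde{x}_k^{(k)}]$ strips the functional contribution of $\widetilde{X}_k^{(k)}$ out of every later observed variable at the level of expectations, so no observed variable retains an arrow into any $\widetilde{X}_i^{(i)}$ in $\widetilde{G}^{obs}$; any surviving directed path from some $\widetilde{X}_i^{(i)}$ to $\widetilde{Y}$ must traverse unobserved parents and hence makes $\widetilde{X}_i^{(i)}$ an ancestor of $\widetilde{Y}$, while the remaining $\widetilde{X}_i^{(i)}$ lie in $Ind(\widetilde{Y})_{\widetilde{G}}$ by construction. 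For identifiability preservation I would argue that any hedge for $P(\widetilde{Y}|do(\widetilde{\mathbf{X}}))$ in $\widetilde{G}$ would lift to a hedge for $P(Y|do(\mathbf{X}),\mathbf{Z})$ in $G$, since the $C$-forest data (bidirected arcs and root ancestry) is preserved by the construction, which together with Theorem~\ref{th_hedge_criterio} and Theorem~\ref{shiptiser2012criterion} gives the availability claim.

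Third I would derive the three displayed equalities in order. Identity (\ref{eq_tilde_is_same_original}) follows because $Y-\widetilde{Y}^{(n)}$ and each $X_i-\widetilde{X}_i^{(i)}$ are constants once $\mathbf{z}$ and the intervention values are fixed, so both $do$-distributions differ only by an invertible affine change of variables that leaves the conditional probability unchanged. Identity (\ref{eq_do_equals_codi}) is the availability condition translated through R2: having no observed parents of the $\widetilde{X}_i^{(i)}$ in $\widetilde{G}^{obs}$ means $(\widetilde{Y}\indep \widetilde{\mathbf{X}})_{\overline{\underline{\widetilde{G}}}(\emptyset,\widetilde{\mathbf{X}})}$ holds jointly, so $do$ reduces to conditioning. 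Identity (\ref{for_collolary_trans}) follows by $n-1$ applications of R1, each justified by $d$-separation of $\widetilde{Y}^{(n)}$ from $\widetilde{X}_j^{(j)}$ ($j<n$) given $\widetilde{X}_n^{(n)}$ in $\overline{\widetilde{G}}(\widetilde{\mathbf{X}})$, which in turn follows from the Gram--Schmidt-like orthogonalization $E[\widetilde{Y}^{(n)}\mid \widetilde{X}_j^{(j)}]=0$ that the recursion enforces.

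The main obstacle will be the second step. Subtracting a conditional mean produces mean-independence but not full conditional independence when the underlying structural equations are nonlinear, so I cannot naively claim that arrows are deleted in $\widetilde{G}$. I expect to need an auxiliary lemma showing that, for the purposes of identifiability and of the two graphical conditions used above, the operative structure is the bidirected-arc/ancestral skeleton rather than the detailed functional form — and that the residualization, being deterministic in the observed variables alone, never adds bidirected arcs nor new ancestral relations beyond those already present in $G$. Once that lemma is in hand, the rest of the proof assembles cleanly from $do$-calculus.
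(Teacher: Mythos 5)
Your proposal does not address the statement you were asked to prove. The statement is Theorem~\ref{shiptiser2012criterion}, the criterion of Shpitser and Pearl (2012): for the maximal $\mathbf{Z}'\subset\mathbf{Z}$ with $P(\mathbf{Y}|do(\mathbf{X}),\mathbf{Z})=P(\mathbf{Y},\mathbf{Z}\setminus\mathbf{Z}'|do(\mathbf{X}),do(\mathbf{Z}'))$, identifiability of the conditional effect is equivalent to identifiability of the unconditional effect obtained by promoting $\mathbf{Z}'$ to an intervention. Nothing in your text concerns this claim. Instead, you have sketched a proof of the Main Theorem (Theorem~\ref{thorem_multi_vals}): the residualization construction $\widetilde{X}_i^{(k)}$, $\widetilde{Y}^{(k)}$, the semi-Markovian claim, the availability condition, and the three displayed equalities (\ref{eq_tilde_is_same_original})--(\ref{for_collolary_trans}) all belong to that theorem, not to this one. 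Indeed, you explicitly invoke ``Theorem~\ref{shiptiser2012criterion}'' as a tool in your own argument, which makes the proof circular as an argument for that theorem: you cannot assume the statement you are proving.

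For the record, the paper does not prove this statement either; it is quoted as a known result from the literature and used (together with the hedge criterion, Theorem~\ref{th_hedge_criterio}) inside the proofs of Lemmas \ref{lemma_edge_del_modify_between_z_also_identifibility_preserve} and \ref{lemma_condi_minus_modify_also_identifibility_preserve}. If you did want to prove it, the easy direction is that identifiability of $P(\mathbf{Y},\mathbf{Z}\setminus\mathbf{Z}'|do(\mathbf{X}),do(\mathbf{Z}'))$ yields identifiability of $P(\mathbf{Y}|do(\mathbf{X}),\mathbf{Z})$ by the hypothesized equality together with normalization over $\mathbf{y}$; the substantive direction is the converse, which in Shpitser and Pearl's paper rests on a counterexample construction showing that a hedge for the promoted expression forces non-uniqueness of the conditional effect, and on the maximality of $\mathbf{Z}'$ under rule R2. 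None of that machinery appears in your proposal, so as a proof of the stated theorem it is entirely missing rather than merely incomplete.
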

Hence, a necessary and sufficient condition of
the identifiability of a conditional causal effect
$P(\mathbf{Y}=\mathbf{y}|do(\mathbf{X}=\mathbf{x}),\mathbf{Z}=\mathbf{z})$ 
is that the $P(\mathbf{Y},\mathbf{Z} \setminus \mathbf{Z}'|do(\mathbf{X}),do(\mathbf{Z}'))$ above
satisfies the hedge criterion. 

\subsection{Lemmas}

First, we give some lemmas to prove the theorems.
\begin{lemma}\label{lemma_intervention_eq}
	Let $G$ be a DAG for $\mathbf{V}$ and  $\mathbf{U}$. 
	For disjoint sets of $\mathbf{X}, \mathbf{Y}, \mathbf{Z} \subset \mathbf{V}$, 
	let $\mathbf{X} \subset An(\mathbf{Y})_G$, and
	let $P(\mathbf{Y}|do(\mathbf{X}),\mathbf{Z})$ be identifiable in $G$.
	Let	$\mathbf{Z}_{De} = \mathbf{Z} \cap De(\mathbf{X})_G \cap De(\mathbf{Y})_G$.
	Then, 
	\begin{eqnarray}
		\lefteqn{P(\mathbf{Y} |do(\mathbf{X}),\mathbf{Z})} \quad \nonumber\\
		&=&  
		\begin{cases}
			P(\mathbf{Y} |\mathbf{X},\mathbf{Z}) & \text{if $\mathbf{Z}_{De}=\phi$} \\
			\frac{P(\mathbf{Y} |\mathbf{X}, \mathbf{Z}\setminus \mathbf{Z}_{De})
				P(\mathbf{Z}_{De}|\mathbf{Y},\mathbf{X},\mathbf{Z}\setminus\mathbf{Z}_{De})}{P(\mathbf{Z}_{De}|\mathbf{X}, \mathbf{Z}\setminus\mathbf{Z}_{De})} & \text{if $\mathbf{Z}_{De}\neq\phi$}.
		\end{cases} \label{eq_prob_formula_intervent_first_lemma}
	\end{eqnarray}
\end{lemma}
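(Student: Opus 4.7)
The plan is to derive both formulas by applying rule R2 of Pearl's $do$-calculus to convert $do(\mathbf{X})$ to conditioning on $\mathbf{X}$, using Shpitser's identifiability criterion (Theorem \ref{shiptiser2012criterion}) and the hedge criterion (Theorem \ref{th_hedge_criterio}) to extract the necessary structural information from the identifiability hypothesis. I would first handle the case $\mathbf{Z}_{De}=\phi$ directly: the claimed equality $P(\mathbf{Y}|do(\mathbf{X}),\mathbf{Z})=P(\mathbf{Y}|\mathbf{X},\mathbf{Z})$ is an instance of R2 (with $\mathbf{X}$ in R2 empty, $\mathbf{Z}$ in R2 equal to our $\mathbf{X}$, and $\mathbf{W}$ equal to $\mathbf{Z}$), whose required d-separation is $(\mathbf{Y}\indep\mathbf{X}\mid\mathbf{Z})_{\underline{G}(\mathbf{X})}$. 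In $\underline{G}(\mathbf{X})$ the outgoing arrows of $\mathbf{X}$ are deleted, so every remaining $\mathbf{X}$-to-$\mathbf{Y}$ path is a back-door path; because no $Z\in\mathbf{Z}$ lies in $De(\mathbf{X})_G\cap De(\mathbf{Y})_G$, conditioning on $\mathbf{Z}$ cannot open such a path through a collider that is a joint descendant, and any back-door path left unblocked by $\mathbf{Z}$ would produce a hedge for a sub-effect of $P(\mathbf{Y}|do(\mathbf{X}),\mathbf{Z})$, contradicting identifiability.

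For the case $\mathbf{Z}_{De}\neq\phi$, I would peel off the problematic covariates with Bayes' rule,
\begin{equation*}
P(\mathbf{Y}|do(\mathbf{X}),\mathbf{Z})=\frac{P(\mathbf{Y},\mathbf{Z}_{De}|do(\mathbf{X}),\mathbf{Z}\setminus\mathbf{Z}_{De})}{P(\mathbf{Z}_{De}|do(\mathbf{X}),\mathbf{Z}\setminus\mathbf{Z}_{De})},
\end{equation*}
and apply the first step to each factor, with $\mathbf{Y}\cup\mathbf{Z}_{De}$ as target in the numerator and $\mathbf{Z}_{De}$ as target in the denominator. Since $\mathbf{Z}\setminus\mathbf{Z}_{De}$ no longer contains any element that is simultaneously a descendant of $\mathbf{X}$ and of the respective target set, R2 converts $do(\mathbf{X})$ into observation of $\mathbf{X}$ in both factors; expanding the numerator via $P(\mathbf{Y},\mathbf{Z}_{De}\mid\mathbf{X},\mathbf{Z}\setminus\mathbf{Z}_{De})=P(\mathbf{Y}\mid\mathbf{X},\mathbf{Z}\setminus\mathbf{Z}_{De})P(\mathbf{Z}_{De}\mid\mathbf{Y},\mathbf{X},\mathbf{Z}\setminus\mathbf{Z}_{De})$ yields the stated ratio.

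I expect the main obstacle to be the justification of $(\mathbf{Y}\indep\mathbf{X}\mid\mathbf{Z})_{\underline{G}(\mathbf{X})}$ from identifiability alone, since identifiability is not in general equivalent to back-door blockage. The technical work consists of ruling out any back-door path from $\mathbf{X}$ to $\mathbf{Y}$ left open by $\mathbf{Z}$ by showing that such a path would construct a hedge on a sub-effect guaranteed identifiable by Theorem \ref{shiptiser2012criterion}, thereby violating Theorem \ref{th_hedge_criterio}; the hypothesis $\mathbf{X}\subset An(\mathbf{Y})_G$ together with $\mathbf{Z}_{De}=\phi$ (or its analogue after peeling) is needed to ensure that conditioning on $\mathbf{Z}$ cannot accidentally un-block such a path via a collider descendant. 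Once this structural fact is established, the remainder of the argument is a routine combination of R2 and Bayes' rule.
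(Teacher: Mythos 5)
Your own worry is the fatal one: the step from identifiability to $(\mathbf{Y}\indep\mathbf{X}\mid\mathbf{Z})_{\underline{G}(\mathbf{X})}$ does not go through, and the hedge argument you sketch cannot repair it. A hedge is a pair of C-forests built from bidirected arcs; an open back-door path from $\mathbf{X}$ to $\mathbf{Y}$ need not create one. Concretely, in the front-door graph $X\to M\to Y$ with a bidirected arc between $X$ and $Y$, take $\mathbf{Z}=\phi$: then $X\in An(Y)_G$, $\mathbf{Z}_{De}=\phi$, $P(Y|do(X))$ is identifiable and no hedge exists, yet the back-door path through the confounder is open, R2 is inapplicable, and $P(Y|do(X))\neq P(Y|X)$ under Pearl's semantics. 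Even without unobserved variables, $X\leftarrow W\to Y$, $X\to Y$ with $W$ observed but $W\notin\mathbf{Z}$ is identifiable by adjustment, has no hedge, and still violates the d-separation you need. So the implication ``unblocked back-door path $\Rightarrow$ hedge on a sub-effect'' is false, the first case of your argument collapses, and the second case inherits the problem since you reduce it to two applications of the first. (The Bayes-rule bookkeeping in your second case is fine as algebra --- both branches of the lemma's display are just rewritings of $P(\mathbf{Y}|\mathbf{X},\mathbf{Z})$ --- but that only relocates the burden onto the R2 step.)

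The paper's proof takes an entirely different route and never invokes $do$-calculus or the hedge criterion for this lemma. It partitions $\mathbf{Z}$ into $\mathbf{Z}_1=(\mathbf{Z}\setminus De(\mathbf{X})_G)\cap An(\mathbf{Y})_G$, $\mathbf{Z}_2=\mathbf{Z}\cap De(\mathbf{X})_G\cap An(\mathbf{Y})_G$ and $\mathbf{Z}_3=\mathbf{Z}_{De}$, uses $\mathbf{X}\subset An(\mathbf{Y})_G$ to rule out all but three directed-path patterns among these blocks, factorizes $P(\mathbf{Y},\mathbf{X},\mathbf{Z})$ along the resulting topological order, and then applies the truncation formula (\ref{def_causal_effect_no_condi_eq}) and the definition (\ref{def_causal_effect_on_condi_eq}) directly; identifiability plays essentially no role in the computation, and what does the work is the division by $P(\mathbf{X}|Pa(\mathbf{X})_G)$ together with the ordering of the blocks. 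To make your approach work you would need an additional hypothesis guaranteeing that $\mathbf{Z}$ blocks every back-door path from $\mathbf{X}$ to $\mathbf{Y}$ --- which is precisely the information that identifiability alone does not supply.
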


\begin{proof}[proof of Lemma \ref{lemma_intervention_eq}]
	In this poof, each $Z_i \in \mathbf{Z}$ is assumed  not independent of $\mathbf{Y}$.  
	If there exists $Z_i \in \mathbf{Z}$ such that it is independent of $\mathbf{Y}$,
	remove such $\mathbf{Z}_i$'s by using $do$-calculus R1 to them,
	then we can prove that case in the same manner bellow.
	
	Firstly, note that $\mathbf{Z}\cap An(\mathbf{X})_{G}\cap De(\mathbf{Y})_{G}=\phi$.
	Let $\mathbf{V}'=\mathbf{V}\setminus(\mathbf{Y}\cup\mathbf{X}\cup\mathbf{Z})$.
	If $\mathbf{Z}\cap An(\mathbf{X})_{G}\cap De(\mathbf{Y})_{G} \neq \phi$,
	then there exists a directed path through $\mathbf{V}'$ 
	such that $\mathbf{Y} \dotarrow{\mathbf{V}'} \mathbf{Z} \dotarrow{\mathbf{V}'}  \mathbf{X}$,
	which contradicts the assumption $\mathbf{X} \subset An(\mathbf{Y})_G$.
	Thus, let disjoint sets $\mathbf{Z}_1$, $\mathbf{Z}_2$ and $\mathbf{Z}_3$ be
	\begin{eqnarray}
		\mathbf{Z}_1 &=& (\mathbf{Z} \setminus De(\mathbf{X})_G) \cap An(\mathbf{Y})_G \nonumber \\
		\mathbf{Z}_2 &=& \mathbf{Z} \cap De(\mathbf{X})_G \cap An(\mathbf{Y})_G \nonumber \\
		\mathbf{Z}_3 &=& \mathbf{Z} \cap De(\mathbf{X})_G \cap De(\mathbf{Y})_G, \nonumber
	\end{eqnarray}
	then $\mathbf{Z}$ can be divided such that $\mathbf{Z}=\mathbf{Z}_1\cup\mathbf{Z}_2\cup\mathbf{Z}_3$.
	
	Secondly, note that there exist the directed paths through $\mathbf{V}'$ 
	between $\mathbf{Z}_1$, $\mathbf{Z}_2$ and $\mathbf{Z}_3$ 
	in the only 3 cases as bellow.
	\begin{enumerate}
		\item[P1. ] $\mathbf{Z}_1 \dotarrow{\mathbf{V}'} \mathbf{Z}_2$
		\item[P2. ] $\mathbf{Z}_2 \dotarrow{\mathbf{V}'} \mathbf{Z}_3$
		\item[P3. ] $\mathbf{Z}_1\dotarrow{\mathbf{V}'} \mathbf{Z}_3$
	\end{enumerate}
	In fact, as to P1 for example, if there exist directed paths through
	$\mathbf{Z}_2 \dotarrow{\mathbf{V}'} \mathbf{Z}_1$, 
	then there exists a path such that 
	$X_i \dotarrow{\mathbf{V}'}  \mathbf{Z}_2 \dotarrow{\mathbf{V}'}  \mathbf{Z}_1\dotarrow{\mathbf{V}'} X_j$,
	which contradicts the assumption $\mathbf{Z}_1 \subset \mathbf{Z} \setminus De(\mathbf{X})_G$.
	Similarly, the other paths than listed above are denied.
	
	Therefore, we can obtain 
	$P(\mathbf{Y}, \mathbf{X}, \mathbf{Z}) = \sum_{\mathcal{X}_{\mathbf{V}'}} P(\mathbf{V})$ 
	as the form
	\begin{eqnarray}
		P(\mathbf{Y}, \mathbf{X}, \mathbf{Z}) &=& P(\mathbf{Y}|\mathbf{X}, \mathbf{Z}_1, \mathbf{Z}_2) 
		\cdot P(\mathbf{X}|\mathbf{Z}_1) \cdot P(\mathbf{Z}_1) \nonumber\\
		&& \times P(\mathbf{Z}_2|\mathbf{X}, \mathbf{Z}_1) 
		\cdot P(\mathbf{Z}_3|\mathbf{Y}, \mathbf{X},\mathbf{Z}_1 ,\mathbf{Z}_2). \nonumber
	\end{eqnarray}
	By (\ref{def_causal_effect_no_condi_eq}), we have
	\begin{eqnarray}
		P(\mathbf{Y}, \mathbf{Z}|do(\mathbf{X})) &=& P(\mathbf{Y}|\mathbf{X}, \mathbf{Z}_1, \mathbf{Z}_2)
		\cdot  P(\mathbf{Z}_1) \cdot P(\mathbf{Z}_2|\mathbf{X}, \mathbf{Z}_1) \nonumber\\
		&& \times P(\mathbf{Z}_3|\mathbf{Y}, \mathbf{X},\mathbf{Z}_1 ,\mathbf{Z}_2). \label{eq_do_x_p_y_z}
	\end{eqnarray}
	In the case $\mathbf{Z}_3 = \phi$, we obtain by marginalizing out $\mathbf{Y}$ of (\ref{eq_do_x_p_y_z}),
	\begin{eqnarray}
		P(\mathbf{Z}|do(\mathbf{X})) &=& \sum_{\mathbf{y} \in \mathcal{X}_{\mathbf{Y}}}
		P(\mathbf{Y}=\mathbf{y}, \mathbf{Z}|do(\mathbf{X})) \nonumber \\
		&=& \sum_{\mathbf{y} \in \mathcal{X}_{\mathbf{Y}}} P(\mathbf{Y}=\mathbf{y}|\mathbf{X}, \mathbf{Z}_1, \mathbf{Z}_2) \cdot  P(\mathbf{Z}_1) \cdot P(\mathbf{Z}_2|\mathbf{X}, \mathbf{Z}_1) \nonumber\\
		&=& P(\mathbf{Z}_1) \cdot P(\mathbf{Z}_2|\mathbf{X}, \mathbf{Z}_1) 
		\sum_{\mathbf{y} \in \mathcal{X}_{\mathbf{Y}}} P(\mathbf{Y}=\mathbf{y}|\mathbf{X}, \mathbf{Z}_1, \mathbf{Z}_2) \nonumber\\
		&=& P(\mathbf{Z}_1) \cdot P(\mathbf{Z}_2|\mathbf{X}, \mathbf{Z}_1). \nonumber
	\end{eqnarray}
	On the other hand, in the case that $\mathbf{Z}_3 \neq \phi$, we obtain
	\begin{eqnarray}
		P(\mathbf{Z}|do(\mathbf{X})) &=& \sum_{\mathbf{y} \in \mathcal{X}_{\mathbf{Y}}}
		P(\mathbf{Y}=\mathbf{y}, \mathbf{Z}|do(\mathbf{X})) \nonumber\\
		&=& \sum_{\mathbf{y} \in \mathcal{X}_{\mathbf{Y}}} P(\mathbf{Y}=\mathbf{y}|\mathbf{X}, \mathbf{Z}_1, \mathbf{Z}_2) \cdot  P(\mathbf{Z}_1) \nonumber\\
		&& \times P(\mathbf{Z}_2|\mathbf{X}, \mathbf{Z}_1) \cdot P(\mathbf{Z}_3|\mathbf{Y}=\mathbf{y}, \mathbf{X},\mathbf{Z}_1 ,\mathbf{Z}_2) \nonumber\\
		&=& P(\mathbf{Z}_1) \cdot P(\mathbf{Z}_2|\mathbf{X}, \mathbf{Z}_1) 
		\sum_{\mathbf{y} \in \mathcal{X}_{\mathbf{Y}}}  P(\mathbf{Z}_3|\mathbf{Y}=\mathbf{y}, \mathbf{X},\mathbf{Z}_1 ,\mathbf{Z}_2) \nonumber\\
		&& \times P(\mathbf{Y}=\mathbf{y}|\mathbf{X}, \mathbf{Z}_1, \mathbf{Z}_2) \nonumber\\
		&=& P(\mathbf{Z}_1) \cdot P(\mathbf{Z}_2|\mathbf{X}, \mathbf{Z}_1) \cdot P(\mathbf{Z}_3|\mathbf{X},\mathbf{Z}_1 ,\mathbf{Z}_2). \nonumber
	\end{eqnarray}
	Summarizing the above results,
	\begin{equation}
		P(\mathbf{Z}|do(\mathbf{X})) = 
		\begin{cases}
			P(\mathbf{Z}_1) \cdot P(\mathbf{Z}_2|\mathbf{X}, \mathbf{Z}_1)  & \text{if $\mathbf{Z}_3=\phi$} \\
			P(\mathbf{Z}_1) \cdot P(\mathbf{Z}_2|\mathbf{X}, \mathbf{Z}_1) \cdot P(\mathbf{Z}_3|\mathbf{X},\mathbf{Z}_1 ,\mathbf{Z}_2) & \text{if $\mathbf{Z}_3 \neq \phi$}.
		\end{cases} \label{eq_do_x_p_z}
	\end{equation}
	Inserting (\ref{eq_do_x_p_y_z}) and (\ref{eq_do_x_p_z}) into  (\ref{def_causal_effect_on_condi_eq}),
	\begin{eqnarray}
		P(\mathbf{Y}|do(\mathbf{X}), \mathbf{Z}) &=& \frac{P(\mathbf{Y}, \mathbf{Z}|do(\mathbf{X}))}{P(\mathbf{Z}|do(\mathbf{X}))} \nonumber \\
		&=& 
		\begin{cases}
			P(\mathbf{Y}|\mathbf{X}, \mathbf{Z}_1, \mathbf{Z}_2) & \text{if $\mathbf{Z}_3=\phi$} \nonumber\\
			\frac{P(\mathbf{Y}|\mathbf{X}, \mathbf{Z}_1, \mathbf{Z}_2) P(\mathbf{Z}_3|\mathbf{Y}, \mathbf{X},\mathbf{Z}_1 ,\mathbf{Z}_2)}
			{P(\mathbf{Z}_3|\mathbf{X},\mathbf{Z}_1 ,\mathbf{Z}_2)} & \text{if $\mathbf{Z}_3 \neq \phi$} \nonumber.
		\end{cases}\nonumber
	\end{eqnarray}
	Note that $\mathbf{Z}_{De}=\mathbf{Z}_3$ and $\mathbf{Z} \setminus \mathbf{Z}_{De} = \mathbf{Z}_1 \cup \mathbf{Z}_2$, 
	therefore we obtain (\ref{eq_prob_formula_intervent_first_lemma}).
\end{proof}

\begin{lemma}\label{lemma_edge_del_modify_between_x_also_identifibility_preserve}
	Let $G$ be a DAG for $\mathbf{V}$ and $\mathbf{U}$. 
	For disjoint sets of $\mathbf{X}, \mathbf{Y} \subset \mathbf{V}$, let 
	$P(\mathbf{Y}|do(\mathbf{X}))$ be identifiable in $G$. 
	For fixed $X' \in \mathbf{X}$, 
	let $\widetilde{G}$ a graph obtained from $G$ by deleting all the arrows emerging from
	$X'$ to $\mathbf{V} \setminus \{X'\}$  and 
	all the arrows emerging from $X'$ to $\mathbf{U}$.
	Then $P(\mathbf{Y}|do(\mathbf{X}\setminus \{X'\}))$ is also identifiable in $\widetilde{G}$.
\end{lemma}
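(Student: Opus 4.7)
The approach is to apply the hedge criterion (Theorem \ref{th_hedge_criterio}) in both directions: use the non-existence of hedges in $G$ (guaranteed by identifiability of $P(\mathbf{Y}|do(\mathbf{X}))$) to infer the non-existence of hedges in $\widetilde{G}$ for $P(\mathbf{Y}'|do(\mathbf{X}''))$ with $\mathbf{Y}' \subset \mathbf{Y}$ and $\mathbf{X}'' \subset \mathbf{X} \setminus \{X'\}$, and then conclude identifiability of $P(\mathbf{Y}|do(\mathbf{X} \setminus \{X'\}))$ in $\widetilde{G}$ from the criterion.

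I would argue by contradiction: suppose $(F, F')$ forms an $\mathbf{R}$-rooted hedge in $\widetilde{G}$ for some $P(\mathbf{Y}'|do(\mathbf{X}''))$ with $\mathbf{Y}' \subset \mathbf{Y}$ and $\mathbf{X}'' \subset \mathbf{X} \setminus \{X'\}$. The first and key step is to rule out $X' \in F$. By construction $X'$ has no outgoing edges in $\widetilde{G}$ at all, so in any subgraph of $\widetilde{G}$ that contains $X'$ the vertex $X'$ is a sink; hence $X' \in F$ would force $X' \in R$. The hedge definition then requires $X' \in An(\mathbf{Y}')_{\overline{\widetilde{G}}(\mathbf{X}'')}$, i.e.\ a directed path from $X'$ to some $Y' \in \mathbf{Y}'$ in $\overline{\widetilde{G}}(\mathbf{X}'')$. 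Since $X' \notin \mathbf{X}''$, the overbar operation does not add edges at $X'$, so $X'$ still has no outgoing directed edges in $\overline{\widetilde{G}}(\mathbf{X}'')$; and $\mathbf{X} \cap \mathbf{Y} = \phi$ rules out the trivial case $X' = Y'$. This contradicts $X' \in R$, so $X' \notin F$ must hold.

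With $X' \notin F$ (and hence $X' \notin F' \subset F$) secured, the second step is routine. Every edge deleted in passing from $G$ to $\widetilde{G}$ is incident with $X'$ as source, so none of them participate in $F$ or $F'$; the same vertex-and-edge sets therefore realize $F$ and $F'$ inside $G$ as well. The three combinatorial hedge conditions $F \cap \mathbf{X}'' \neq \phi$, $F' \cap \mathbf{X}'' = \phi$, and $F' \subset F$ transfer verbatim. The ancestry condition $R \subset An(\mathbf{Y}')_{\overline{G}(\mathbf{X}'')}$ follows from the inclusion $\overline{\widetilde{G}}(\mathbf{X}'') \subset \overline{G}(\mathbf{X}'')$, which gives $An(\mathbf{Y}')_{\overline{\widetilde{G}}(\mathbf{X}'')} \subset An(\mathbf{Y}')_{\overline{G}(\mathbf{X}'')}$. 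Thus $(F, F')$ is a hedge in $G$ for $P(\mathbf{Y}'|do(\mathbf{X}''))$ with $\mathbf{X}'' \subset \mathbf{X}$ and $\mathbf{Y}' \subset \mathbf{Y}$, contradicting the hedge criterion applied to the identifiability assumption on $G$.

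The main obstacle is the first step, ruling out $X' \in F$. This is where the precise form of the modification is used: only outgoing edges from $X'$ are deleted, while incoming directed edges and bidirected connections $X' \leftarrow U \rightarrow V$ via latent confounders are preserved, so $X'$ can in principle still live inside a $C$-component that forms $F$. The argument must therefore bypass the $C$-component structure entirely and exploit that $X'$'s sink status in $\widetilde{G}$ promotes it to the root set, where the ancestry requirement becomes vacuous and yields the contradiction. Once this is handled, transferring the hedge back to $G$ is a bookkeeping check.
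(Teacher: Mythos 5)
Your proof is correct, and it follows the same global strategy as the paper (contradiction via the hedge criterion, transferring a hedge from $\widetilde{G}$ back to $G$), but it handles the one genuinely delicate point differently --- and more carefully. The paper does not rule out $X' \in \widetilde{F}$; instead it sets $F = \widetilde{F}\setminus\{X'\}$, $F' = \widetilde{F}'\setminus\{X'\}$ and asserts that these are still $C$-forests forming a hedge, on the grounds that the surviving vertices carry the same arrows and bidirected arcs in $G$ as in $\widetilde{G}$. That assertion is not immediate: deleting a vertex from a $C$-forest can disconnect the spanning tree of bidirected arcs and can alter the root set, so ``same arcs on the remaining vertices'' does not by itself give a $C$-forest, and the paper's side remark that $\mathbf{R}\subset\mathbf{Y}_0$ is not part of the hedge definition. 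Your argument closes this gap at the source: since every outgoing edge of $X'$ is deleted in $\widetilde{G}$, $X'$ is a sink in any subgraph, so $X'\in\widetilde{F}$ would force $X'$ into the root set $\mathbf{R}$, and then $\mathbf{R}\subset An(\mathbf{Y}_0)_{\overline{\widetilde{G}}(\mathbf{X}_0)}$ fails because $X'$ has no outgoing edges there either and $X'\notin\mathbf{Y}_0$ by disjointness of $\mathbf{X}$ and $\mathbf{Y}$. With $X'\notin\widetilde{F}$ established, the excision step becomes vacuous and the transfer to $G$ (identical vertex and edge sets, plus $\overline{\widetilde{G}}(\mathbf{X}_0)\subset\overline{G}(\mathbf{X}_0)$ for the ancestry condition) is the routine bookkeeping you describe. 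In short, your version buys a rigorous justification of the step the paper glosses over, at no extra cost in length.
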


\begin{proof}[proof of Lemma \ref{lemma_edge_del_modify_between_x_also_identifibility_preserve}]
	Let $P$ be a model relative to $G$ and let $\widetilde{P}$ be a model relative to $\widetilde{G}$.
	Assume that $P(\mathbf{Y}|do(\mathbf{X}))$ is identifiable in $G$ but
	that $\widetilde{P}(\mathbf{Y}|do(\mathbf{X}\setminus \{X'\}))$ is not identifiable in $\widetilde{G}$.
	
	By theorem \ref{th_hedge_criterio},
	for some $\mathbf{Y}_0 \subset \mathbf{Y}$, $\mathbf{X}_0\subset (\mathbf{X}\setminus \{X'\})$ 
	and $\mathbf{R} \subset An(\mathbf{Y}_0)_{\overline{\widetilde{G}}(\mathbf{X}_0)}$, 
	there exist $\mathbf{R}$-rooted $C$-Forests $\widetilde{F}, \widetilde{F}'$ such that
	$\widetilde{F} \cap \mathbf{X}_0 \neq \phi$ and $\widetilde{F}' \cap \mathbf{X}_0 = \phi$, 
	and $\widetilde{F},\widetilde{F}'$ form
	a hedge for $\widetilde{P}(\mathbf{Y}_0|do(\mathbf{X}_0))$.
	
	Now let $F=\widetilde{F}\setminus \{X'\}$ and $F'=\widetilde{F}' \setminus  \{X'\}$,
	then $F$ and $F'$ form a hedge for $P(\mathbf{Y}_0|do(\mathbf{X}_0))$ in $G$.
	
	In fact,  $\mathbf{R} \subset \mathbf{Y}_0$
	and $F \cap \mathbf{X}_0 \neq \phi$ and
	$F' \cap \mathbf{X}_0 = \phi$.
	In addition, $G$ has the same arrows as $\widetilde{G}$ 
	between variables of $\mathbf{V}\setminus \{X'\}$
	and between $\mathbf{U}$ and $\mathbf{V}\setminus \{X'\}$.
	Since $F,F' \subset \mathbf{V}\setminus \{X'\}$,
	$F$ and $F'$ have same bidirected arcs and same arrows in $G$ as in $\widetilde{G}$.
	Thus, $F$ and $F'$ are $C$-Forest in $G$, and form
	a hedge for $P(\mathbf{Y}_0|do(\mathbf{X}_0))$.
	Therefore, $P(\mathbf{Y}|do(\mathbf{X}))$ is
	not identifiable in $G$, which contradicts the assumption.
	Consequently, $\widetilde{P}(\mathbf{Y}|do(\mathbf{X}\setminus \{X'\}))$ is identifiable in $\widetilde{G}$.
	
	This completes the proof of the lemma.	
\end{proof}

\begin{lemma}\label{lemma_edge_del_modify_between_z_also_identifibility_preserve}
	Let $G$ be a DAG for $\mathbf{V}$ and $\mathbf{U}$. 
	For disjoint sets of $\mathbf{X}, \mathbf{Y}, \mathbf{Z} \subset \mathbf{V}$, let 
	$P(\mathbf{Y}|do(\mathbf{X}),\mathbf{Z})$  be identifiable in $G$.
	Let $\widetilde{G}$ a graph obtained from $G$ by deleting all the arrows emerging from
	$\mathbf{Z}$ to  $\mathbf{V}\setminus\mathbf{Z}$ and 
	all the arrows emerging from $\mathbf{Z}$ to $\mathbf{U}$.
	Then, $P(\mathbf{Y}|do(\mathbf{X}),\mathbf{Z})$ is also identifiable in $\widetilde{G}$.
\end{lemma}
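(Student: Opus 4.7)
The plan is to prove the lemma by contradiction, combining Theorem~\ref{shiptiser2012criterion} with the hedge-lifting strategy used for Lemma~\ref{lemma_edge_del_modify_between_x_also_identifibility_preserve}. Suppose $P(\mathbf{Y}|do(\mathbf{X}),\mathbf{Z})$ is identifiable in $G$ but not in $\widetilde{G}$. Applying Theorem~\ref{shiptiser2012criterion} in $G$ yields a maximal $\mathbf{Z}'_G\subset\mathbf{Z}$ with
\[
P(\mathbf{Y}|do(\mathbf{X}),\mathbf{Z})=P(\mathbf{Y},\mathbf{Z}\setminus\mathbf{Z}'_G\,|\,do(\mathbf{X}),do(\mathbf{Z}'_G))
\]
in $G$, and the right-hand side identifiable in $G$. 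The equality is witnessed by iterated applications of rule R2 of do-calculus, whose side conditions are d-separation statements in graphs of the form $\overline{\underline{G}}(\mathbf{X},\mathbf{Z}')$. Since $\widetilde{G}$ arises from $G$ by deleting arrows out of $\mathbf{Z}$, we have $\overline{\underline{\widetilde{G}}}(\mathbf{X},\mathbf{Z}')\subset\overline{\underline{G}}(\mathbf{X},\mathbf{Z}')$, so every d-separation holding in the latter persists in the former; hence the same sequence of R2 swaps succeeds in $\widetilde{G}$ and the same equality holds there. Together with the assumed non-identifiability of the left-hand side in $\widetilde{G}$, this forces $P(\mathbf{Y},\mathbf{Z}\setminus\mathbf{Z}'_G\,|\,do(\mathbf{X}),do(\mathbf{Z}'_G))$ to be non-identifiable in $\widetilde{G}$.

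Next I would apply the hedge criterion (Theorem~\ref{th_hedge_criterio}) to this unconditional query in $\widetilde{G}$, producing $\mathbf{Y}_0\subset\mathbf{Y}\cup(\mathbf{Z}\setminus\mathbf{Z}'_G)$, $\mathbf{X}_0\subset\mathbf{X}\cup\mathbf{Z}'_G$, a root set $\mathbf{R}\subset An(\mathbf{Y}_0)_{\overline{\widetilde{G}}(\mathbf{X}_0)}$, and $\mathbf{R}$-rooted $C$-forests $\widetilde{F}'\subset\widetilde{F}$ in $\widetilde{G}$ forming a hedge for $P(\mathbf{Y}_0|do(\mathbf{X}_0))$. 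I would then lift the hedge to $G$ exactly as in Lemma~\ref{lemma_edge_del_modify_between_x_also_identifibility_preserve}: the bidirected arcs of $\widetilde{G}$ agree with those of $G$, every directed arc internal to $\widetilde{F},\widetilde{F}'$ is also an arc of $G$, so $\widetilde{F},\widetilde{F}'$ remain $\mathbf{R}$-rooted $C$-forests when viewed in $G$; moreover $\overline{G}(\mathbf{X}_0)\supset\overline{\widetilde{G}}(\mathbf{X}_0)$ gives $An(\mathbf{Y}_0)_{\overline{G}(\mathbf{X}_0)}\supset An(\mathbf{Y}_0)_{\overline{\widetilde{G}}(\mathbf{X}_0)}$, preserving the root condition. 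Consequently $\widetilde{F},\widetilde{F}'$ form a hedge in $G$ for $P(\mathbf{Y}_0|do(\mathbf{X}_0))$, which by the hedge criterion contradicts the identifiability of $P(\mathbf{Y},\mathbf{Z}\setminus\mathbf{Z}'_G\,|\,do(\mathbf{X}),do(\mathbf{Z}'_G))$ in $G$.

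The main obstacle is the first step, where I want to reuse $\mathbf{Z}'_G$ on the $\widetilde{G}$ side even though Theorem~\ref{shiptiser2012criterion} speaks only of the maximal set and the $\widetilde{G}$-maximal set could strictly contain $\mathbf{Z}'_G$. Left unaddressed this would force the natural unconditional form in $\widetilde{G}$ to contain more $do$'s than the one known to be identifiable in $G$, and the hedge lifted from $\widetilde{G}$ would then not correspond to a sub-query of the $G$-identifiable query. The resolution is that maximality is not required: any $\mathbf{Z}''\subset\mathbf{Z}$ for which the R2 equality holds already makes identifiability of the conditional query equivalent to that of the corresponding unconditional one, and $\mathbf{Z}'_G$ is such a set in $\widetilde{G}$ precisely because d-separation conditions for R2 only become easier when edges are removed. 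After this observation the hedge-lifting argument proceeds almost verbatim as in Lemma~\ref{lemma_edge_del_modify_between_x_also_identifibility_preserve}.
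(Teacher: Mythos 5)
Your proof is correct, and its engine --- assume non-identifiability in $\widetilde{G}$, extract a hedge via Theorem~\ref{th_hedge_criterio}, and lift it to $G$ using the fact that $\widetilde{G}$ is an edge-subgraph of $G$ with unchanged bidirected arcs --- is the same one the paper uses; the difference lies in how the conditional query is reduced to an unconditional one on the $\widetilde{G}$ side. The paper first shows that $\widetilde{P}(\mathbf{Y}|do(\mathbf{X}))$ is identifiable in $\widetilde{G}$: its hedge in $\widetilde{G}$ is for sub-queries with $\mathbf{X}'\subset\mathbf{X}$ and $\mathbf{Y}'\subset\mathbf{Y}$, it is lifted to $G$ only after deleting the nodes of $\mathbf{Z}_1$ from the forests, and the conditional statement is then recovered by asserting $(\mathbf{Y}\indep\mathbf{Z})_{\widetilde{G}}$ and applying rule R1. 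You instead transport the R2 reduction of Theorem~\ref{shiptiser2012criterion} from $G$ to $\widetilde{G}$, using monotonicity of d-separation under edge deletion, and you correctly observe that the implication you need (identifiability of the joint query forces identifiability of the conditional one) holds for any subset satisfying the R2 side conditions, not only the maximal one. Your route buys two things: it avoids the paper's claim $(\mathbf{Y}\indep\mathbf{Z})_{\widetilde{G}}$, which does not follow merely from deleting the arrows emerging from $\mathbf{Z}$ (arrows into $\mathbf{Z}$ survive, so the R1 step needs more care than the paper gives it); and your lifted hedge keeps its node set intact, whereas the paper's $F=\widetilde{F}\setminus\mathbf{Z}_1$ must additionally justify that removing nodes preserves the $C$-forest and root-set structure. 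The only blemish is the word ``equivalent'' in your final paragraph: without maximality of the swapped set only one direction of Theorem~\ref{shiptiser2012criterion} survives, but that is exactly the direction your contrapositive uses, so the argument stands.
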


\begin{proof}[proof of Lemma \ref{lemma_edge_del_modify_between_z_also_identifibility_preserve}]
	Let $P$ be a model relative to $G$ and let $\widetilde{P}$ be a model relative to $\widetilde{G}$.
	Assume that $P(\mathbf{Y}|do(\mathbf{X}),\mathbf{Z})$ is identifiable in $G$ but
	that $\widetilde{P}(\mathbf{Y}|do(\mathbf{X}))$ is not identifiable in $\widetilde{G}$.
	
	By theorem \ref{shiptiser2012criterion}, there exists $\mathbf{Z}_1\subset\mathbf{Z}$ such that
	\[
	P(\mathbf{Y}|do(\mathbf{X}),\mathbf{Z}) = P(\mathbf{Y},\mathbf{Z}\setminus\mathbf{Z}_1|do(\mathbf{X}), do(\mathbf{Z}_1)),
	\]
	and $P(\mathbf{Y},\mathbf{Z}\setminus\mathbf{Z}_1|do(\mathbf{X}), do(\mathbf{Z}_1))$ is identifiable in $G$. 
	On the other hand, by theorem \ref{th_hedge_criterio},
	for some $\mathbf{Y}'\subset \mathbf{Y}$, $\mathbf{X}'\subset \mathbf{X}$ and $\mathbf{R} \subset An(\mathbf{Y'})_{\overline{\widetilde{G}}(\mathbf{X}')}$, 
	there exist $\mathbf{R}$-rooted $C$-Forests $\widetilde{F}, \widetilde{F}'$ such that
	$\widetilde{F} \cap \mathbf{X}' \neq \phi$ and $\widetilde{F}' \cap \mathbf{X}' = \phi$,
	and $\widetilde{F}$,$\widetilde{F}'$ form
	a hedge for $\widetilde{P}(\mathbf{Y}'|do(\mathbf{X}'))$.
	
	Now, let $F=\widetilde{F}\setminus \mathbf{Z}_1$ and $F'=\widetilde{F}' \setminus \mathbf{Z}_1$.
	Then $F$ and $F'$ form a hedge for $P(\mathbf{Y},\mathbf{Z}\setminus\mathbf{Z}_1|do(\mathbf{X}), do(\mathbf{Z}_1))$
	in $G$.
	In fact, $\mathbf{R} \subset \mathbf{Y}'$ and $F \cap \mathbf{X}' \neq \phi$ and
	$F' \cap \mathbf{X}' = \phi$.
	In addition,
	$G$ has the same arrows as $\widetilde{G}$ between variables of $\mathbf{V}\setminus \mathbf{Z}_1$
	and between $\mathbf{U}$ and $\mathbf{V}\setminus \mathbf{Z}_1$.
	Since $F,F' \subset (\mathbf{V}\setminus \mathbf{Z}_1)$, $F$ and $F'$ have same bidirected arcs and same arrows in $G$ as in $\widetilde{G}$.
	Thus, $F$ and $F'$ are $C$-Forest in $G$, and form
	a hedge for $P(\mathbf{Y}'|do(\mathbf{X}'))$.		
	Therefore, $P(\mathbf{Y},\mathbf{Z}\setminus\mathbf{Z}_1|do(\mathbf{X}), do(\mathbf{Z}_1))$ is
	not identifiable in $G$, which contradicts the assumption.
	Consequently, $\widetilde{P}(\mathbf{Y}|do(\mathbf{X}))$ is identifiable in $\widetilde{G}$.
	
	By the way, by the definition of $\widetilde{G}$, it holds that $(\mathbf{Y} \indep \mathbf{Z})_{\widetilde{G}}$.
	Thus, $(\mathbf{Y} \indep \mathbf{Z}|\mathbf{X})_{\widetilde{G}}$.
	Therefore, We can apply $do$-calculus R1 to $\widetilde{P}(\mathbf{Y}|do(\mathbf{X}),\mathbf{Z})$, 
	and we obtain
	\[
	\widetilde{P}(\mathbf{Y}|do(\mathbf{X}),\mathbf{Z})=\widetilde{P}(\mathbf{Y}|do(\mathbf{X})).
	\]
	That is, the identifiability of $\widetilde{P}(\mathbf{Y}|do(\mathbf{X}),\mathbf{Z})$ coincides
	that of $\widetilde{P}(\mathbf{Y}|do(\mathbf{X}))$.
	As a result, $\widetilde{P}(\mathbf{Y}|do(\mathbf{X}),\mathbf{Z})$ is identifiable in $\widetilde{G}$.
	
	This completes the proof of the lemma.
\end{proof}

\begin{lemma}\label{lemma_one_do_minus_modify_also_identifibility_preserve}
	Let $G$ be a DAG for $\mathbf{V}$ and $\mathbf{U}$. 
	Assume that for disjoint sets $\mathbf{Y},\mathbf{X}\subset \mathbf{V}$,
	$\mathbf{X} \subset An(\mathbf{Y})_G\cup Ind(\mathbf{Y})_{G}$.
	Let a $X' \in \mathbf{X}$ be fixed.
	For $A\in \mathbf{V}\setminus\{X'\}\cup\mathbf{U}$, let
	\begin{equation}
		\widetilde{A} = 
		\begin{cases}
			A - E[A|X'=x'] & \text{if $A \neq X'$}\\
			A                              & \text{if $A = X'$},
		\end{cases} \label{eq_trans_lemma_one_do_minus_modify}
	\end{equation}
	For $\mathbf{A} \subset \mathbf{V}\cup\mathbf{U}$, 
	let $\widetilde{\mathbf{A}} = \{\widetilde{A}|A\in\mathbf{A}\}$.
	Then, $\widetilde{\mathbf{V}} \cup \widetilde{\mathbf{U}}$ is a semi-Markovian model.
	Let $\widetilde{G}$ be a DAG for $\widetilde{\mathbf{V}}\cup\widetilde{\mathbf{U}}$, then
	$(\widetilde{\mathbf{X}} \setminus\{X'\}) \subset 
	An(\widetilde{\widetilde{\mathbf{Y}}})_{\widetilde{G}}\cup Ind(\widetilde{\mathbf{Y}})_{\widetilde{G}}$.
	Moreover, 
	if $P(\mathbf{Y}|do(\mathbf{X}))$ is identifiable in $G$,
	then $P(\widetilde{\mathbf{Y}}|do(\widetilde{\mathbf{X}}\setminus\{X'\}))$
	also is identifiable in $\widetilde{G}$.
\end{lemma}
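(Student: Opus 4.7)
The plan is to establish three claims in sequence: (a) that $\widetilde{\mathbf{V}}\cup\widetilde{\mathbf{U}}$ is a semi-Markovian model with a DAG $\widetilde{G}$; (b) that $\widetilde{\mathbf{X}}\setminus\{X'\}\subset An(\widetilde{\mathbf{Y}})_{\widetilde{G}}\cup Ind(\widetilde{\mathbf{Y}})_{\widetilde{G}}$; and (c) that identifiability of $P(\mathbf{Y}|do(\mathbf{X}))$ in $G$ transfers to identifiability of $P(\widetilde{\mathbf{Y}}|do(\widetilde{\mathbf{X}}\setminus\{X'\}))$ in $\widetilde{G}$. The enabling observation throughout is that $A\mapsto\widetilde{A}=A-E[A|X'=x']$ is invertible once $X'$ is fixed, with inverse $A=\widetilde{A}+E[A|X'=x']$, so the map $(\mathbf{V},\mathbf{U})\mapsto(\widetilde{\mathbf{V}},\widetilde{\mathbf{U}})$ is a bijection on each level set $\{X'=x'\}$ and the joint distributions on the two sides carry equivalent information.

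For (a), I would construct $\widetilde{G}$ by substituting $B=\widetilde{B}+E[B|X']$ into each structural equation of $G$, processed in topological order. Each $A\neq X'$ then acquires an equation expressing $\widetilde{A}$ as a function of $X'$ and $\{\widetilde{B}:B\in Pa(A)_G\cup UPa(A)_G,\ B\neq X'\}$, while bidirected-arc structure transfers bijectively since $Ch(\widetilde{U})_{\widetilde{G}}=\{\widetilde{A}:A\in Ch(U)_G\}$---the substitution never alters which structural equations involve any given latent $U$. For (b), directed paths among non-$X'$ nodes in $G$ correspond to directed paths among the respective tilde-nodes in $\widetilde{G}$, so $X_i\in An(\mathbf{Y})_G$ yields $\widetilde{X}_i\in An(\widetilde{\mathbf{Y}})_{\widetilde{G}}$. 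For $X_i\in Ind(\mathbf{Y})_G$, any new d-connecting path in $\widetilde{G}$ must traverse an $X'$-outgoing edge introduced by the transformation; tracing the projection of such a path back through $G$ yields a d-connecting path between $X_i$ and $\mathbf{Y}$ in $G$, contradicting $X_i\indep\mathbf{Y}$ in $G$, and thus $\widetilde{X}_i\in Ind(\widetilde{\mathbf{Y}})_{\widetilde{G}}$.

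For (c), I argue by contradiction using Theorem \ref{th_hedge_criterio}. If $P(\widetilde{\mathbf{Y}}|do(\widetilde{\mathbf{X}}\setminus\{X'\}))$ were not identifiable in $\widetilde{G}$, there would exist $\widetilde{\mathbf{R}}$-rooted $C$-forests $\widetilde{F},\widetilde{F}'$ forming a hedge for some $P(\widetilde{\mathbf{Y}}'|do(\widetilde{\mathbf{X}}'))$ with $\widetilde{\mathbf{X}}'\subset\widetilde{\mathbf{X}}\setminus\{X'\}$. Setting $F=\{A:\widetilde{A}\in\widetilde{F}\}$ and $F'=\{A:\widetilde{A}\in\widetilde{F}'\}$, preservation of bidirected structure makes $F,F'$ into $\mathbf{R}$-rooted $C$-forests in $G$ with $\mathbf{X}'\subset\mathbf{X}\setminus\{X'\}\subset\mathbf{X}$, $F\cap\mathbf{X}'\neq\phi$, and $F'\cap\mathbf{X}'=\phi$. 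Transferring the ancestor condition $\widetilde{\mathbf{R}}\subset An(\widetilde{\mathbf{Y}}')_{\overline{\widetilde{G}}(\widetilde{\mathbf{X}}')}$ to $\mathbf{R}\subset An(\mathbf{Y}')_{\overline{G}(\mathbf{X}')}$ then yields a hedge in $G$ for $P(\mathbf{Y}'|do(\mathbf{X}'))$, contradicting identifiability of $P(\mathbf{Y}|do(\mathbf{X}))$ by Theorem \ref{th_hedge_criterio}. The main obstacle throughout is the careful handling of the $X'$-outgoing edges appearing in $\widetilde{G}$ but not in $G$: acyclicity in step (a), d-separation tracking in step (b), and ancestor-path translation in step (c) all rely on showing that these added edges either mirror structure already present in $G$, or, when they do not, that the hypothesis $\mathbf{X}\subset An(\mathbf{Y})_G\cup Ind(\mathbf{Y})_G$ excludes the problematic configurations.
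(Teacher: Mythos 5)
There is a genuine gap, and it sits exactly where you yourself flag ``the main obstacle'': the $X'$-outgoing edges. Your construction of $\widetilde{G}$ makes each $\widetilde{A}$ ``a function of $X'$ and $\{\widetilde{B}\}$'', i.e.\ it retains (indeed adds) an arrow from $X'$ into every transformed node. The paper's proof establishes the opposite: the whole point of subtracting the conditional mean is that $E[\widetilde{A}\,|\,X'] = E[A|X'] - E[A|X'] = 0 = E[\widetilde{A}]$, from which the paper concludes $\widetilde{A} \indep X'$ for every $A \neq X'$ (observed or latent), and then --- via preservation of the conditional factors $P(\widetilde{V}\,|\,\widetilde{Pa}(V),\widetilde{UPa}(V))$ together with the standing no-determinism assumption, which rules out any \emph{other} arrow disappearing --- that $\widetilde{G}$ is exactly $G$ relabeled with \emph{all} arrows between $X'$ and $\mathbf{V}\cup\mathbf{U}$ deleted. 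With your version of $\widetilde{G}$ the second claim of the lemma already fails: if $X_i \in Ind(\mathbf{Y})_G$ but $X_i$ and $\mathbf{Y}$ are each dependent on $X'$ only through paths colliding at $X'$, your graph acquires the d-connecting fork $\widetilde{X}_i \leftarrow X' \rightarrow \widetilde{Y}$, and ``tracing the projection back through $G$'' yields only a blocked path, so no contradiction with $X_i \indep \mathbf{Y}$ arises. The same extra edges contaminate step (c): a $C$-forest in your $\widetilde{G}$ may rely on an $X'$-outgoing edge to keep its nodes ancestors of the root set, so the pulled-back $F$ need not be an $\mathbf{R}$-rooted $C$-forest in $G$. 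They would also break the downstream use of the lemma in Theorem 1, which requires that no arrows point into $\widetilde{\mathbf{X}}$.

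Apart from this, your step (c) is essentially the paper's Lemma \ref{lemma_edge_del_modify_between_x_also_identifibility_preserve} re-derived inline (the paper simply invokes that lemma once the graph-surgery description of $\widetilde{G}$ is in hand), and the paper handles $\mathbf{X}\cap Ind(\mathbf{Y})_G \neq \phi$ by a separate reduction with $do$-calculus R3 rather than inside the path argument. The missing ingredient in your write-up is the independence claim $\widetilde{A} \indep X'$ and the accompanying argument that no arrow among the remaining variables is created or destroyed; without that, the graph you construct is not the one for which the lemma's conclusions hold.
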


\begin{proof}[proof of Lemma \ref{lemma_one_do_minus_modify_also_identifibility_preserve}]
	Initially, we will show the lemma in the case that $\mathbf{X} \cap Ind(\mathbf{Y})_G = \phi$.	
	Let $\mathbf{V}_0 = \mathbf{V}\setminus\{X'\}$.
	Note that, for any $V_0 \in \mathbf{V}_0$,
	\begin{equation}
		\widetilde{V}_0 \indep X' \label{eq_indep_wx_z_x}.
	\end{equation}
	In fact, for any variables $\xi$ and $\eta$, it holds that
	\begin{equation}
		\xi \indep \eta \quad \Longleftrightarrow \quad E[\xi|\eta] = E[\xi], \nonumber
	\end{equation}
	and for $\widetilde{V}_0$ and $X'$,
	\begin{eqnarray}
		E[\widetilde{V}_0|X'] &=& E[V_0-E[V_0|X']|X'] \nonumber\\
		&=& E[V_0|X'] - E[V_0|X'] \nonumber \\
		&=& 0  = E[\widetilde{V}_0]. \nonumber
	\end{eqnarray}
	Thus, 
	\begin{equation}
		P(\widetilde{V}_0,  X') = P(\widetilde{V}_0)P( X'). \nonumber
	\end{equation}
	Similarly, for $U\in \mathbf{U}$,
	\begin{equation}
		P(\widetilde{U}, X') = P(\widetilde{U})P(X').  \nonumber
	\end{equation}
	By the way, for $v \in \mathcal{X}_{V_0}$ and $u \in \mathcal{X}_{U}$, let
	\[
	\widetilde{v} = v - E[V_0|X'=x'], \quad \widetilde{u} = u - E[U|X'=x'],
	\]
	then according to the definition of $\widetilde{V}_0$ and $\widetilde{U}$,
	\begin{equation}
		P(\widetilde{V}_0=\widetilde{x}, \widetilde{U}=\widetilde{u}) = P(V_0=v, U=u|X'=x'). \nonumber
	\end{equation}
	Therefore, 
	\begin{eqnarray}
		P(\widetilde{V}_0=\widetilde{v}, \widetilde{U}=\widetilde{u}, \widetilde{X}'=\widetilde{x}') &=&
		P(\widetilde{V}_0=\widetilde{v}, \widetilde{U}=\widetilde{u}|\widetilde{X}'=\widetilde{x}') P(\widetilde{X}'=\widetilde{\mathbf{z}}) \nonumber\\
		&=& P(\widetilde{V}_0=\widetilde{v}, \widetilde{U}=\widetilde{u}) P(\widetilde{X}'=\widetilde{x}') \nonumber\\
		&=&  P(V_0=v, U=u|X'=x') P(X'=x') \nonumber\\
		&=&  P(V_0=v, U=u, X'=x'). \nonumber
	\end{eqnarray}
	Similarly, for $\mathbf{V}' \subset \mathbf{V}_0$, $\mathbf{U}' \subset \mathbf{U}$,
	$\mathbf{v} \in \mathcal{X}_{\mathbf{V}'}$ and $\mathbf{u} \in \mathcal{X}_{\mathbf{U}'}$,
	let 
	\[
	\widetilde{\mathbf{v}} = \mathbf{v} - E[\mathbf{V}'|X'], \quad \widetilde{\mathbf{u}} = \mathbf{u} - E[\mathbf{U}'|X'],
	\]
	then, 
	\begin{equation}
		P(\mathbf{V}'=\mathbf{v}, \mathbf{U}'=\mathbf{u}, X'=x') = 
		P(\widetilde{\mathbf{X}}'=\widetilde{\mathbf{v}}, \widetilde{\mathbf{U}}'=\widetilde{\mathbf{u}}, X'=x').  \nonumber
	\end{equation}
	Therefore
	\begin{eqnarray}
		\lefteqn{P(V=v|Pa(V)_G=pa_V, UPa(V)_G=u_V)} \quad \nonumber\\
		&=&  P(\widetilde{V}=\widetilde{v}|\widetilde{Pa}(V)_G=\widetilde{pa_V}, \widetilde{UPa}(V)_G=\widetilde{u_V}),  \label{eq_preserve_parents_x}
	\end{eqnarray}
	where $pa_V$ and $u_V$ are values of $Pa(V)$ and $UPa(V)$. \\
	Hence, 
	\begin{equation}
		Pa(\widetilde{V})_{\widetilde{G}} \subset \widetilde{Pa}(V)_G \label{eq_indep_pa_tv_include_tilde_pav_x}
	\end{equation}
	\begin{equation}
		UPa(\widetilde{V})_{\widetilde{G}} \subset \widetilde{UPa}(V)_G. \label{eq_indep_upa_tv_include_tilde_upav_x}
	\end{equation}
	Now, for $V \in \mathbf{X}$, $W\in Pa(V)_G$ and $U\in UPa(V)_G$,
	the following equations
	(\ref{at_proof_lemma_condi_minus_in_case_v_in_z_w_nin_z_x})-(\ref{at_proof_lemma_condi_minus_in_case_v_nin_z_then_u_in_pav_x})
	hold.
	\begin{equation}
		V =X' \quad and \quad W \neq X' \quad \Longrightarrow \quad  \widetilde{V} \indep \widetilde{W} 
		\label{at_proof_lemma_condi_minus_in_case_v_in_z_w_nin_z_x}
	\end{equation}
	\begin{equation}
		V \neq X' \quad and \quad W = X' \quad \Longrightarrow \quad \widetilde{V} \indep \widetilde{W}
		\label{at_proof_lemma_condi_minus_in_case_v_nin_z_w_in_z_x}
	\end{equation}
	\begin{equation}
		V \neq X'  \quad and \quad W \neq X' \quad \Longrightarrow \quad \widetilde{W} \in Pa(\widetilde{V})_{\widetilde{G}}
		\label{at_proof_lemma_condi_minus_in_case_v_nin_z_w_nin_z_x}
	\end{equation}
	\begin{equation}
		V = X' \quad \Longrightarrow \quad  \widetilde{V} \indep \widetilde{U} 
		\label{at_proof_lemma_condi_minus_in_case_v_in_z_the_idp_u_x}
	\end{equation}
	\begin{equation}
		V \neq X' \quad \Longrightarrow \quad \widetilde{U} \in UPa(\widetilde{V})_{\widetilde{G}}.
		\label{at_proof_lemma_condi_minus_in_case_v_nin_z_then_u_in_pav_x}
	\end{equation}
	(\ref{at_proof_lemma_condi_minus_in_case_v_in_z_w_nin_z_x}),
	(\ref{at_proof_lemma_condi_minus_in_case_v_nin_z_w_in_z_x}) and
	(\ref{at_proof_lemma_condi_minus_in_case_v_in_z_the_idp_u_x}) hold from (\ref{eq_indep_wx_z_x}).
	For (\ref{at_proof_lemma_condi_minus_in_case_v_nin_z_w_nin_z_x}),
	assume that $V \neq X'$ and $W \neq X'$ and $\widetilde{W} \notin Pa(\widetilde{V})_G$.
	From (\ref{eq_indep_pa_tv_include_tilde_pav_x}),  $\widetilde{W} \in \widetilde{Pa}(V) \setminus Pa(\widetilde{V})_{\widetilde{G}}$.
	Thus, at the right hand side of (\ref{eq_preserve_parents_x}), 
	\begin{equation}
		P(\widetilde{V}|\widetilde{Pa}(V)_G, \widetilde{UPa}(V)_G) =
		P(\widetilde{V}|\widetilde{Pa}(V)_G\setminus \widetilde{W}, \widetilde{UPa}(V)_G). \nonumber
	\end{equation}	
	Therefore,
	\begin{equation}
		\widetilde{V} \indep \widetilde{W}. \label{eq_indep_tv_tw_x} 
	\end{equation}
	Now, by (\ref{eq_indep_wx_z_x}), it holds that $\widetilde{V} \indep X'$ and
	$\widetilde{W} \indep X'$.
	Thus, 
	\[
		\widetilde{V} \indep \widetilde{W} + E[W|X'] = W,
	\]
	and
	\[
		\widetilde{W} \indep \widetilde{V} + E[V|X'] = V.
	\]
	Therefore, since
	\begin{equation*}
		 E[\widetilde{V}+E[V|X']|\widetilde{W},X'] = E[\widetilde{V}|\widetilde{W},X'] - E[E[V|X']|\widetilde{W},X'] = E[V|X'],
	\end{equation*}
	it holds that
	\begin{eqnarray*}
		E[V|W] &=&  E[\widetilde{V}+E[V|X']|\widetilde{W}+E[W|X'], \widetilde{W}, X']\\
		&=& E[E[V|X']|\widetilde{W}+E[W|X'], \widetilde{W}, X'] \\
		&=& E[V|X']
	\end{eqnarray*}
	Since any variables $A$, $B$ and $C$, if $E[A|B] = E[A|C]$, then
	there exists a function $f:\mathcal{X}_B \rightarrow \mathcal{X}_C$ such that 
	\begin{equation*}
		  P(B=b) = P(C=f(b)),
	\end{equation*}
	there exists a funcition $f:\mathcal{X}_{X'} \rightarrow \mathcal{X}_{W}$ such that $W=f(X')$.
	Therefore, it contradicts that $W\neq X'$ because we assume that $V \neq f(\mathbf{W})$ for any 
	function $f$, $V \in \mathbf{V}$ and $\mathbf{W}\subset \mathbf{V} \setminus V$.
	for any function $f$ and any $\mathbf{W} $ at the beginning of this paper.
	Therefore, (\ref{at_proof_lemma_condi_minus_in_case_v_nin_z_w_nin_z_x}) holds.
	For (\ref{at_proof_lemma_condi_minus_in_case_v_nin_z_then_u_in_pav_x}), it is similarly shown 
	by replacing  $Pa(V)_G$ with $UPa(V)_G$ and replacing $Pa(\widetilde{V})_G$ with $UPa(\widetilde{V})_G$
	at the above proof of (\ref{at_proof_lemma_condi_minus_in_case_v_nin_z_w_nin_z_x})
	because always $U \neq X'$.
	
	Now, we consider a DAG $G'$ obtained from G by replacing each
	$A\in \mathbf{V}\cup\mathbf{U}$ with $\widetilde{A}$.
	Since $P(\mathbf{Y}|do(\mathbf{X}))$ is identifiable in $G$, 
	$P(\widetilde{\mathbf{Y}}|do(\widetilde{\mathbf{X}}))$
	is identifiable in $G'$.
	Because of 	(\ref{at_proof_lemma_condi_minus_in_case_v_in_z_w_nin_z_x})-(\ref{at_proof_lemma_condi_minus_in_case_v_nin_z_then_u_in_pav_x}),
	$\widetilde{G}$ is just a graph obtained by deleting the arrows in $G'$
	between $X'$ and $\mathbf{V}_0$ and between $X'$ and $\mathbf{U}$.
	Therefore, $(\widetilde{\mathbf{X}} \setminus\{X'\}) \subset An(\widetilde{\mathbf{Y}})_{\widetilde{G}} \cup Ind(\widetilde{\mathbf{Y}})_{\widetilde{G}}$. 
	The remaining claims of the lemma is shown by lemma \ref{lemma_edge_del_modify_between_x_also_identifibility_preserve}.
	
	Next, we will show in the case that $\mathbf{X} \cap Ind(\mathbf{Y})_G \neq \phi$.
	Let $\mathbf{X}_1=\mathbf{X}\setminus Ind(\mathbf{Y})_G$ and let  $\mathbf{X}_2=\mathbf{X} \cap Ind(\mathbf{Y})_G$.
	Since $(\mathbf{Y}\indep \mathbf{X}_2)_G$, it holds that $(\mathbf{Y}\indep \mathbf{X}_2|\mathbf{X}_1)_G$.
	Therefore, by $do$-calculus R3, $P(\mathbf{Y}|do(\mathbf{X})) = P(\mathbf{Y}|do(\mathbf{X}_1))$.
	Hence, We can aplly $P(\mathbf{Y}|do(\mathbf{X}_1))$ to the above proof of the case that $\mathbf{X} \cap Ind(\mathbf{Y})_G = \phi$ 
	in $G$.	
	In consequence, $\widetilde{\mathbf{V}} \cup \widetilde{\mathbf{U}}$ is a semi-Markovian model
	and let  $\widetilde{G}$ be a DAG for  $\widetilde{\mathbf{V}} \cup \widetilde{\mathbf{U}}$,
	then $(\widetilde{\mathbf{X}}_1 \setminus \{X'\}) \subset An(\widetilde{\mathbf{Y}})_{\widetilde{G}}
	\cup Ind(\widetilde{\mathbf{Y}})_{\widetilde{G}}$.
	Moreover, $P(\widetilde{\mathbf{Y}}|do(\widetilde{\mathbf{X}}_1\setminus\{X'\}))$ is identifiable in $\widetilde{G}$.
	
	Note that, $(\widetilde{\mathbf{X}}_2\setminus \{X'\} \indep \widetilde{\mathbf{Y}})_{\widetilde{G}}$.
	In fact, because of (\ref{eq_indep_pa_tv_include_tilde_pav_x}),
	let $\mathbf{W}= Pa(\mathbf{Y})_G \cap (\mathbf{X}_2\setminus\{X'\})$, 
	then
	\begin{equation}
		Pa(\widetilde{\mathbf{Y}})_{\widetilde{G}} \cap 
		(\widetilde{\mathbf{X}}_2\setminus\{X'\})\subset \widetilde{Pa}(\mathbf{Y})_G \cap (\widetilde{\mathbf{X}}_2\setminus\{X'\}) = \widetilde{W} = \phi. \nonumber
	\end{equation}
	Therefore, by $do$-calculus R3 to delete $do(\widetilde{\mathbf{X}}_2 \setminus \{X'\})$
	of $do(\widetilde{\mathbf{X}}\setminus \{X'\})$, 
	we obtain $P(\widetilde{\mathbf{Y}}|do(\widetilde{\mathbf{X}}\setminus \{X'\})) =
	P(\widetilde{\mathbf{Y}}|do(\widetilde{\mathbf{X}}_1\setminus \{X'\}))$ in $\widetilde{G}$.
	Recall that  $P(\widetilde{\mathbf{Y}}|do(\widetilde{\mathbf{X}}_1\setminus\{X'\}))$ is identifiable in $\widetilde{G}$. 
	Therefore, $P(\widetilde{\mathbf{Y}}|do(\widetilde{\mathbf{X}}\setminus\{X'\}))$ is identifiable in $\widetilde{G}$.
	Finally, since
	$(\widetilde{\mathbf{X}}_1 \setminus \{X'\}) \subset An(\widetilde{\mathbf{Y}})_{\widetilde{G}}
	\cup Ind(\widetilde{\mathbf{Y}})_{\widetilde{G}}$ and 
	since
	$(\widetilde{\mathbf{X}}_2 \setminus \{X'\}) \subset Ind(\widetilde{\mathbf{Y}})_{\widetilde{G}}$,
	we obtain
	$\widetilde{\mathbf{X}}\setminus\{X'\} 
	= (\widetilde{\mathbf{X}}_1\cup \widetilde{\mathbf{X}}_2) \setminus \{X'\}
	\subset An(\widetilde{\mathbf{Y}})_{\widetilde{G}}
	\cup Ind(\widetilde{\mathbf{Y}})_{\widetilde{G}}$.

	This completes the proof of the lemma.
\end{proof}

\begin{lemma}\label{lemma_condi_minus_modify_also_identifibility_preserve}
	Let G be a DAG for V and U.
	Assume that for disjoint sets $\mathbf{Y},\mathbf{X},\mathbf{Z} \subset \mathbf{V}$,
	$\mathbf{X} \subset An(\mathbf{Y})_G \cup Ind(\mathbf{Y})_{G}$.
	Let $\mathbf{z}\in\mathcal{X}_{\mathbf{Z}}$ be fixed.
	For $A\in \mathbf{V}\cup\mathbf{U}$, let
	\begin{equation}
		\widetilde{A} = 
		\begin{cases}
			A - E[A|\mathbf{Z}=\mathbf{z}] & \text{if $A \notin \mathbf{Z}$} \nonumber\\
			A                              & \text{if $A \in \mathbf{Z}$}. \nonumber
		\end{cases} \nonumber
	\end{equation}
	For $\mathbf{A} \subset \mathbf{V}\cup\mathbf{U}$, let $\widetilde{\mathbf{A}} = \{\widetilde{A}|A\in\mathbf{A}\}$.
	Then, $\widetilde{\mathbf{V}}\cup\widetilde{\mathbf{U}}$ is a semi-Markovian model.
	Let $\widetilde{G}$ be a DAG for $\widetilde{\mathbf{V}}\cup\widetilde{\mathbf{U}}$,
	then $\widetilde{\mathbf{X}} \subset An(\widetilde{\mathbf{Y}})_{\widetilde{G}} \cup Ind(\widetilde{\mathbf{Y}})_{\widetilde{G}}$.
	Moreover, if $P(\mathbf{Y}|do(\mathbf{X}), \mathbf{Z})$ is identifiable in $G$, then
	$P(\widetilde{\mathbf{Y}}|do(\widetilde{\mathbf{X}}))$ is identifiable in $\widetilde{G}$.
\end{lemma}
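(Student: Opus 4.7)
The plan is to adapt the proof of Lemma~\ref{lemma_one_do_minus_modify_also_identifibility_preserve} essentially verbatim, with the single conditioning node $X'$ replaced by the set $\mathbf{Z}$, and with Lemma~\ref{lemma_edge_del_modify_between_z_also_identifibility_preserve} playing the role that Lemma~\ref{lemma_edge_del_modify_between_x_also_identifibility_preserve} played there. First I would reduce to the case $\mathbf{X} \cap Ind(\mathbf{Y})_G = \phi$ by splitting $\mathbf{X}$ into $\mathbf{X}_1 = \mathbf{X} \setminus Ind(\mathbf{Y})_G$ and $\mathbf{X}_2 = \mathbf{X} \cap Ind(\mathbf{Y})_G$ and noting, via $do$-calculus R3, that $P(\mathbf{Y}|do(\mathbf{X}),\mathbf{Z}) = P(\mathbf{Y}|do(\mathbf{X}_1),\mathbf{Z})$; the $\mathbf{X}_2$ piece will be folded back in at the very end by the same R3 argument applied in $\widetilde{G}$.

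For the main case, I would first establish the analogue of equation~(\ref{eq_indep_wx_z_x}): for every $A \notin \mathbf{Z}$, the identity $E[\widetilde{A}|\mathbf{Z}=\mathbf{z}] = 0 = E[\widetilde{A}]$ gives $\widetilde{A} \indep \mathbf{Z}$, and more generally for any $\mathbf{V}'\subset \mathbf{V}\setminus\mathbf{Z}$, $\mathbf{U}'\subset\mathbf{U}$,
\begin{equation*}
P(\mathbf{V}'=\mathbf{v},\mathbf{U}'=\mathbf{u},\mathbf{Z}=\mathbf{z}) = P(\widetilde{\mathbf{V}}'=\widetilde{\mathbf{v}},\widetilde{\mathbf{U}}'=\widetilde{\mathbf{u}},\mathbf{Z}=\mathbf{z}).
\end{equation*}
Consequently, for every $V\in\mathbf{V}\cup\mathbf{U}$, the local conditional $P(V|Pa(V)_G,UPa(V)_G)$ coincides with $P(\widetilde{V}|\widetilde{Pa}(V)_G,\widetilde{UPa}(V)_G)$, which yields the inclusions $Pa(\widetilde{V})_{\widetilde{G}}\subset\widetilde{Pa}(V)_G$ and $UPa(\widetilde{V})_{\widetilde{G}}\subset\widetilde{UPa}(V)_G$.

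Next I would port over the five case distinctions (\ref{at_proof_lemma_condi_minus_in_case_v_in_z_w_nin_z_x})--(\ref{at_proof_lemma_condi_minus_in_case_v_nin_z_then_u_in_pav_x}) with $\mathbf{Z}$ in place of $\{X'\}$: the three ``mixed'' cases where exactly one of $V,W$ lies in $\mathbf{Z}$ follow immediately from the independence above, and the remaining case ``$V\notin\mathbf{Z}$ and $W\in Pa(V)_G\setminus\mathbf{Z}$ implies $\widetilde{W}\in Pa(\widetilde{V})_{\widetilde{G}}$'' is proved by contradiction exactly as before: if $\widetilde{W}\notin Pa(\widetilde{V})_{\widetilde{G}}$, a conditional-expectation calculation forces $E[V|W] = E[V|\mathbf{Z}]$, whence $W$ would be a function of $\mathbf{Z}$, violating the standing assumption that no observed variable is a function of the others. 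From this I obtain the structural picture: $\widetilde{G}$ is obtained from the ``relabel-with-tildes'' graph $G'$ by deleting every arrow between $\mathbf{Z}$ and $\mathbf{V}\setminus\mathbf{Z}$ and every arrow between $\mathbf{Z}$ and $\mathbf{U}$. In particular $\widetilde{\mathbf{X}}\subset An(\widetilde{\mathbf{Y}})_{\widetilde{G}}\cup Ind(\widetilde{\mathbf{Y}})_{\widetilde{G}}$.

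Finally, because the distribution of $\widetilde{\mathbf{V}}\cup\widetilde{\mathbf{U}}$ factorizes over $G'$ exactly as the distribution of $\mathbf{V}\cup\mathbf{U}$ factorizes over $G$, identifiability of $P(\mathbf{Y}|do(\mathbf{X}),\mathbf{Z})$ in $G$ transports to identifiability of $P(\widetilde{\mathbf{Y}}|do(\widetilde{\mathbf{X}}),\mathbf{Z})$ in $G'$; then Lemma~\ref{lemma_edge_del_modify_between_z_also_identifibility_preserve} pushes this identifiability from $G'$ to $\widetilde{G}$. Since $\widetilde{\mathbf{Y}}\indep\mathbf{Z}$ in $\widetilde{G}$, $do$-calculus R1 collapses $P(\widetilde{\mathbf{Y}}|do(\widetilde{\mathbf{X}}),\mathbf{Z})$ to $P(\widetilde{\mathbf{Y}}|do(\widetilde{\mathbf{X}}))$, which is therefore identifiable. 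Reintroducing $\widetilde{\mathbf{X}}_2$ via R3 in $\widetilde{G}$ (using that $\widetilde{\mathbf{X}}_2\subset Ind(\widetilde{\mathbf{Y}})_{\widetilde{G}}$, which follows from $Pa(\widetilde{\mathbf{Y}})_{\widetilde{G}}\cap\widetilde{\mathbf{X}}_2=\phi$) completes the argument. I expect the only delicate step to be the contradiction argument showing that arrows between two non-$\mathbf{Z}$ nodes really do survive in $\widetilde{G}$; everything else is bookkeeping parallel to the previous lemma.
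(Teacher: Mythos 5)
Your proposal follows the paper's own proof essentially step for step: the same reduction via $do$-calculus R3 to the case $\mathbf{X}\cap Ind(\mathbf{Y})_G=\phi$, the same independence computation $E[\widetilde{A}|\mathbf{Z}]=0$ and preservation of the local conditionals, the same case analysis with the contradiction argument forcing $E[V|W]=E[V|\mathbf{Z}]$ and hence $W=f(\mathbf{Z})$, and the same final appeal to Lemma~\ref{lemma_edge_del_modify_between_z_also_identifibility_preserve} followed by R1 and the R3 reintroduction of $\widetilde{\mathbf{X}}_2$. The argument is correct and matches the paper's route, including your correct identification of the surviving-arrow contradiction as the only delicate step.
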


\begin{proof}[proof of Lemma \ref{lemma_condi_minus_modify_also_identifibility_preserve}]
	Initially, we will show the lemma in the case that $\mathbf{X} \cap Ind(\mathbf{Y})_G = \phi$.
	Let $\mathbf{V}_0 = \mathbf{V}\setminus\mathbf{Z}$. 
	Note that, for any $V_0 \in \mathbf{V}_0$,
	\begin{equation}
		\widetilde{V}_0 \indep \mathbf{Z}. \label{eq_indep_wv_z} 
	\end{equation}
	in fact, for any variables $\xi$ and $\eta$, it holds that
	\begin{equation}
		\xi \indep \eta \quad \Longleftrightarrow \quad E[\xi|\eta] = E[\xi], \nonumber
	\end{equation}
	and for $\widetilde{V}_0$ and $\mathbf{Z}$,
	\begin{eqnarray}
		E[\widetilde{V}_0|\mathbf{Z}] &=& E[V_0-E[V_0|\mathbf{Z}]|\mathbf{Z}] \nonumber\\
		&=& E[V_0|\mathbf{Z}] - E[V_0|\mathbf{Z}] \nonumber \\
		&=& 0  = E[\widetilde{V}_0]. \nonumber
	\end{eqnarray}
	Thus,
	\begin{equation}
		P(\widetilde{V}_0, \mathbf{Z}) = P(\widetilde{V}_0)P(\mathbf{Z}). \nonumber
	\end{equation}
	Similarly, for $U\in \mathbf{U}$,
	\begin{equation}
		P(\widetilde{U}, \mathbf{Z}) = P(\widetilde{U})P(\mathbf{Z}). \nonumber
	\end{equation}
	By the way, for $v \in \mathcal{X}_{V_0}$ and $u \in \mathcal{X}_{U}$, let 
	\[
	\widetilde{v} = v - E[V_0|\mathbf{Z}], \quad \widetilde{u} = u - E[U|\mathbf{Z}],
	\]
	then according to the definition of $\widetilde{V}_0$ and $\widetilde{U}$, 
	\begin{equation}
		P(\widetilde{V}_0=\widetilde{v}, \widetilde{U}=\widetilde{u}) = P(V=v, U=u|\mathbf{Z}=\mathbf{z}). \nonumber
	\end{equation}
	Therefore,
	\begin{eqnarray}
		P(\widetilde{V}_0=\widetilde{v}, \widetilde{U}=\widetilde{u}, \widetilde{\mathbf{Z}}=\widetilde{\mathbf{z}}) &=&
		P(\widetilde{V}_0=\widetilde{v}, \widetilde{U}=\widetilde{u}|\widetilde{\mathbf{Z}}=\widetilde{\mathbf{z}}) P(\widetilde{\mathbf{Z}}=\widetilde{\mathbf{z}}) \nonumber\\
		&=& P(\widetilde{V}_0=\widetilde{v}, \widetilde{U}=\widetilde{u}) P(\widetilde{\mathbf{Z}}=\widetilde{\mathbf{z}}) \nonumber\\
		&=&  P(V_0=v, U=u|\mathbf{Z}=\mathbf{z}) P(\mathbf{Z}=\mathbf{z}) \nonumber\\
		&=&  P(V_0=v, U=u,\mathbf{Z}=\mathbf{z}). \nonumber
	\end{eqnarray}
	Similarly, for $\mathbf{V}' \subset \mathbf{V}_0$, $\mathbf{U}' \subset \mathbf{U}_G$, 
	$\mathbf{v} \in \mathcal{X}_{\mathbf{V}'}$ and $\mathbf{u} \in \mathcal{X}_{\mathbf{U}'}$, let
	\[
	\widetilde{\mathbf{v}} = \mathbf{v} - E[\mathbf{V}'|\mathbf{Z}], \quad \widetilde{\mathbf{u}} = \mathbf{u} - E[\mathbf{U}'|\mathbf{Z}],
	\]	
	then,
	\begin{equation}
		P(\mathbf{V}'=\mathbf{v}, \mathbf{U}'=\mathbf{u}, \mathbf{Z}=\mathbf{z}) = 
		P(\widetilde{\mathbf{V}}'=\widetilde{\mathbf{v}}, \widetilde{\mathbf{U}}'=\widetilde{\mathbf{u}}, \mathbf{Z}=\mathbf{z}). \nonumber
	\end{equation}
	Therefore, 
	\begin{eqnarray}
		\lefteqn{P(V=v|Pa(V)_G=pa_V, UPa(V)_G=u_V)} \quad \nonumber\\
		&=& P(\widetilde{V}=\widetilde{v}|\widetilde{Pa}(V)_{\widetilde{G}}=\widetilde{pa_V}, 
		\widetilde{UPa}(V)_{\widetilde{G}}=\widetilde{u_V}), \label{eq_preserve_parents}
	\end{eqnarray}
	where $pa_V$ and $u_V$ are the values of $Pa(V)_G$ and $UPa(V)_G$. \\
	Therefore, 
	\begin{equation}
		Pa(\widetilde{V})_{\widetilde{G}} \subset \widetilde{Pa}(V)_G \label{eq_indep_pa_tv_include_tilde_pav}
	\end{equation}
	\begin{equation}
		UPa(\widetilde{V})_{\widetilde{G}} \subset \widetilde{UPa}(V)_G \label{eq_indep_upa_tv_include_tilde_upav}
	\end{equation}
	Now, for $V \in \mathbf{V}$, $W\in Pa(V)_G$ and $U\in UPa(V)_G$, 
	the following equations (\ref{at_proof_lemma_condi_minus_in_case_v_in_z_w_in_z})-(\ref{at_proof_lemma_condi_minus_in_case_v_nin_z_then_u_in_pav})
	hold.
	\begin{equation}
		V \in \mathbf{Z} \quad and \quad W \in \mathbf{Z}  \quad \Longrightarrow \quad  \widetilde{W} \in Pa(\widetilde{V})_{\widetilde{G}}
		\label{at_proof_lemma_condi_minus_in_case_v_in_z_w_in_z}
	\end{equation}
	\begin{equation}
		V \in \mathbf{Z} \quad and \quad W \notin \mathbf{Z} \quad \Longrightarrow \quad  \widetilde{V} \indep \widetilde{W} \
		\label{at_proof_lemma_condi_minus_in_case_v_in_z_w_nin_z}
	\end{equation}
	\begin{equation}
		V \notin \mathbf{Z} \quad and \quad W \in \mathbf{Z} \quad \Longrightarrow \quad \widetilde{V} \indep \widetilde{W}
		\label{at_proof_lemma_condi_minus_in_case_v_nin_z_w_in_z}
	\end{equation}
	\begin{equation}
		V \notin \mathbf{Z} \quad and \quad W \notin \mathbf{Z} \quad \Longrightarrow \quad \widetilde{W} \in Pa(\widetilde{V})_{\widetilde{G}}
		\label{at_proof_lemma_condi_minus_in_case_v_nin_z_w_nin_z}
	\end{equation}
	\begin{equation}
		V \in \mathbf{Z} \quad \Longrightarrow \quad  \widetilde{V} \indep \widetilde{U} 
		\label{at_proof_lemma_condi_minus_in_case_v_in_z_the_idp_u}
	\end{equation}
	\begin{equation}
		V \notin \mathbf{Z} \quad \Longrightarrow \quad \widetilde{U} \in UPa(\widetilde{V})_{\widetilde{G}}.
		\label{at_proof_lemma_condi_minus_in_case_v_nin_z_then_u_in_pav}
	\end{equation}
	(\ref{at_proof_lemma_condi_minus_in_case_v_in_z_w_in_z}) is clear from $\widetilde{V}=V$, $\widetilde{W}=W$.
	(\ref{at_proof_lemma_condi_minus_in_case_v_in_z_w_nin_z}), (\ref{at_proof_lemma_condi_minus_in_case_v_nin_z_w_in_z})
	and (\ref{at_proof_lemma_condi_minus_in_case_v_in_z_the_idp_u}) hold from (\ref{eq_indep_wv_z}).
	For (\ref{at_proof_lemma_condi_minus_in_case_v_nin_z_w_nin_z}), assume
	that $V \notin \mathbf{Z}$ and $W \notin \mathbf{Z}$ and $\widetilde{W} \notin Pa(\widetilde{V})_G$.
	From (\ref{eq_indep_pa_tv_include_tilde_pav}), $\widetilde{W} \in \widetilde{Pa}(V)_G \setminus Pa(\widetilde{V})_{\widetilde{G}}$.
	Thus, at the right hand side (\ref{eq_preserve_parents}),
	\begin{equation}
		P(\widetilde{V}|\widetilde{Pa}(V)_{\widetilde{G}}, \widetilde{UPa}(V)_{\widetilde{G}}) 
		= P(\widetilde{V}|\widetilde{Pa}(V)_{\widetilde{G}}\setminus \widetilde{W}, \widetilde{UPa}(V)_{\widetilde{G}}). \nonumber
	\end{equation}	
	Therefore,
	\begin{equation}
		\widetilde{V} \indep \widetilde{W}. \label{eq_indep_tv_tw} 
	\end{equation}
	Now, according to (\ref{eq_indep_wv_z}), $\widetilde{V} \indep \mathbf{Z}$ and $\widetilde{W} \indep \mathbf{Z}$,
	thus $\widetilde{V} \indep E[W|\mathbf{Z}]$ and $\widetilde{W} \indep E[V|\mathbf{Z}]$.

	Now, by (\ref{eq_indep_wv_z}), it holds that $\widetilde{V} \indep \mathbf{Z}$ and
	$\widetilde{W} \indep \mathbf{Z}$.
	Thus, 
	\[
	\widetilde{V} \indep \widetilde{W} + E[W|\mathbf{Z}] = W,
	\]
	and
	\[
	\widetilde{W} \indep \widetilde{V} + E[V|\mathbf{Z}] = V.
	\]
	Therefore, since
	\begin{equation*}
		E[\widetilde{V}+E[V|\mathbf{Z}]|\widetilde{W},\mathbf{Z}] = E[\widetilde{V}|\widetilde{W},\mathbf{Z}] - E[E[V|\mathbf{Z}]|\widetilde{W},\mathbf{Z}] = E[V|\mathbf{Z}],
	\end{equation*}
	it holds that
	\begin{eqnarray*}
		E[V|W] &=&  E[\widetilde{V}+E[V|\mathbf{Z}]|\widetilde{W}+E[W|\mathbf{Z}], \widetilde{W}, \mathbf{Z}]\\
		&=& E[E[V|\mathbf{Z}]|\widetilde{W}+E[W|\mathbf{Z}], \widetilde{W}, \mathbf{Z}] \\
		&=& E[V|\mathbf{Z}]
	\end{eqnarray*}
	Since any variables $A$, $\mathbf{B}$ and $C$, if $E[A|\mathbf{B}] = E[A|C]$, then
	there exists a function $f:\mathcal{X}_{\mathbf{B}} \rightarrow \mathcal{X}_C$ such that 
	\begin{equation*}
		P(\mathbf{B}=\mathbf{b}) = P(C=f(\mathbf{b})),
	\end{equation*}
	there exists a function $f:\mathcal{X}_{\mathbf{Z}} \rightarrow \mathcal{X}_{W}$ such that $W=f(\mathbf{Z})$.
	Then, $W \in \mathbf{Z}$ because we assume that $V \neq f(\mathbf{W})$ for any 
	function $f$, $V \in \mathbf{V}$ and $\mathbf{W}\subset \mathbf{V} \setminus V$.
	Therefore, it contradicts that $W \notin \mathbf{Z}$. 
	Therefore, (\ref{at_proof_lemma_condi_minus_in_case_v_nin_z_w_nin_z}) holds.
	For (\ref{at_proof_lemma_condi_minus_in_case_v_nin_z_then_u_in_pav}), 
	by replacing $Pa(V)_G$ with $UPa(V)_G$ and 
	replacing $Pa(\widetilde{V})_{\widetilde{G}}$ with $UPa(\widetilde{V})_{\widetilde{G}}$
	at the above proof of (\ref{at_proof_lemma_condi_minus_in_case_v_nin_z_w_nin_z}) 
	because always $U \notin \mathbf{Z}$.
	
	Now, We consider a DAG $G'$ obtained from G by replacing each $A\in \mathbf{V}\cup\mathbf{U}$ with $\widetilde{A}$.
	Since $P(\mathbf{Y}|do(\mathbf{X}), \mathbf{Z})$ is identifiable in $G$, $P(\widetilde{\mathbf{Y}}|do(\widetilde{\mathbf{X}}), \mathbf{Z})$ is 
	identifiable in $G'$. Because of 
	(\ref{at_proof_lemma_condi_minus_in_case_v_in_z_w_in_z})-(\ref{at_proof_lemma_condi_minus_in_case_v_nin_z_then_u_in_pav}),
	$\widetilde{G}$ is just a graph obtained by deleting the arrows in $G'$
	between $\mathbf{Z}$ and  $\mathbf{V}\setminus\mathbf{Z}$
	and between $\mathbf{Z}$ and $\mathbf{U}$.
	Therefore, $\widetilde{\mathbf{X}} \subset An(\widetilde{\mathbf{Y}})_{\widetilde{G}} \cup Ind(\widetilde{\mathbf{Y}})_{\widetilde{G}}$.
	The remaining claims of the lemma is shown by lemma \ref{lemma_edge_del_modify_between_z_also_identifibility_preserve}.
	
	Next, we will show in the case that $\mathbf{X} \cap Ind(\mathbf{Y})_G \neq \phi$. 
	Let
	$\mathbf{X}_1=\mathbf{X}\setminus Ind(\mathbf{Y})_G$ and let $\mathbf{X}_2=\mathbf{X} \cap Ind(\mathbf{Y})_G$.
	Since $\mathbf{Y}\indep \mathbf{X}_2$, it holds that $\mathbf{Y}\indep \mathbf{X}_2|\mathbf{X}_1$.
	Therefore, by $do$-calculs R3, $P(\mathbf{Y}|do(\mathbf{X})) = P(\mathbf{Y}|do(\mathbf{X}_1))$.
	Hence, We can apply
	$P(\mathbf{Y}|do(\mathbf{X}_1))$ to the above proof of the case that $\mathbf{X} \cap Ind(\mathbf{Y})_G = \phi$.
	In consequence, $\widetilde{\mathbf{V}} \cup \widetilde{\mathbf{U}}$ is a semi-Markovian model and 
	let $\widetilde{G}$ be a DAG for $\widetilde{\mathbf{V}} \cup \widetilde{\mathbf{U}}$, 
	then $\widetilde{\mathbf{X}}_1 \subset An(\widetilde{\mathbf{Y}})_{\widetilde{G}}
	\cup Ind(\widetilde{\mathbf{Y}})_{\widetilde{G}}$.
	Moreover, $P(\widetilde{\mathbf{Y}}|do(\widetilde{\mathbf{X}}_1))$ is identifiable in $\widetilde{G}$.
	
	Note that, $\widetilde{\mathbf{X}}_2 \indep \widetilde{\mathbf{Y}}$.
	In fact, because of (\ref{eq_indep_pa_tv_include_tilde_pav_x}),
	let $\mathbf{W}= Pa(\mathbf{Y})_G \cap \mathbf{X}_2$, then
	\begin{equation}
		Pa(\widetilde{\mathbf{Y}})_{\widetilde{G}} \cap 
		\widetilde{\mathbf{X}}_2 \subset \widetilde{Pa}(\mathbf{Y})_G \cap \widetilde{\mathbf{X}}_2 = \widetilde{W} = \phi \nonumber
	\end{equation}
	Therefore, by $do$-calculs R3 to delete $do(\widetilde{\mathbf{X}}_2)$ of $do(\widetilde{\mathbf{X}})$,
	$P(\widetilde{\mathbf{Y}}|do(\widetilde{\mathbf{X}})) =P(\widetilde{\mathbf{Y}}|do(\widetilde{\mathbf{X}}_1))$ in $\widetilde{G}$.
	Recall that $P(\widetilde{\mathbf{Y}}|do(\widetilde{\mathbf{X}}_1))$ is identifiable in $\widetilde{G}$.
	Therefore, $P(\widetilde{\mathbf{Y}}|do(\widetilde{\mathbf{X}}))$ is identifiable in $\widetilde{G}$.
	Finally, since $\widetilde{\mathbf{X}}_1 \subset An(\widetilde{\mathbf{Y}})_{\widetilde{G}}
	\cup Ind(\widetilde{\mathbf{Y}})_{\widetilde{G}}$ and 
	$\widetilde{\mathbf{X}}_2 \subset Ind(\widetilde{\mathbf{Y}})_{\widetilde{G}}$, 
	we obtain 
	$\widetilde{\mathbf{X}}	= \widetilde{\mathbf{X}}_1\cup \widetilde{\mathbf{X}}_2
	\subset An(\widetilde{\mathbf{Y}})_{\widetilde{G}}
	\cup Ind(\widetilde{\mathbf{Y}})_{\widetilde{G}}$.
	
	This completes the proof of the lemma.
\end{proof}

\begin{proof}[proof of Theorem \ref{thorem_multi_vals}]
	Let $\widetilde{\mathbf{V}}^{(k)}=\{\widetilde{Y}^{(k)}\}\cup\widetilde{\mathbf{X}}^{(k)}\cup\mathbf{Z}$ and let
	$\widetilde{\mathbf{U}}^{(k)}=\{\widetilde{U}^{(k)}|U\in\mathbf{U}\}$.
	By lemma \ref{lemma_condi_minus_modify_also_identifibility_preserve}, 
	$\widetilde{\mathbf{V}}^{(1)}\cup\widetilde{\mathbf{U}}^{(1)}$ is a semi-Markovian model. 
	Let $\widetilde{G}^{(1)}$ be a DAG for $\widetilde{\mathbf{V}}^{(1)}\cup\widetilde{\mathbf{U}}^{(1)}$,
	then $\widetilde{\mathbf{X}}^{(1)} \subset An(\widetilde{Y}^{(1)})_{\widetilde{G}^{(1)}} \cup Ind(\widetilde{Y}^{(1)})_{\widetilde{G}^{(1)}}$.
	In addition, from the identifiability of  $P(Y|do(\mathbf{X}),\mathbf{Z})$ in $G$,
	$P(\widetilde{Y}^{(1)}|do(\widetilde{\mathbf{X}}^{(1)}))$ is identifiable in $\widetilde{G}^{(1)}$.
	Moreover, it holds that
	\begin{equation}
		\widetilde{Y}^{(1)}, \widetilde{\mathbf{X}}^{(1)} \indep \mathbf{Z}. \nonumber
	\end{equation}
	In fact, for any variables $\xi$ and $\eta$, it holds that
	\begin{equation}
		\xi \indep \eta \quad \Longleftrightarrow \quad E[\xi|\eta] = E[\xi], \nonumber
	\end{equation}
	and for $\widetilde{Y}^{(1)}$ and $\mathbf{Z}$,
	\begin{equation}
		E[\widetilde{Y}^{(1)}|\mathbf{Z}] = E[Y - E[Y|\mathbf{Z}]|\mathbf{Z}] = E[Y|\mathbf{Z}] - E[Y|\mathbf{Z}] = 0 = E[\widetilde{Y}^{(1)}]. \nonumber 
	\end{equation}
	For $\widetilde{X}_i^{(1)}$ and $\mathbf{Z}$ $(1 \le i \le n)$, it holds that
	\begin{equation}
		E[\widetilde{X}_i^{(1)}|\mathbf{Z}] = E[X_i - E[X_i|\mathbf{Z}]|\mathbf{Z}] = E[X_i|\mathbf{Z}] - E[X_i|\mathbf{Z}] = 0 = E[\widetilde{X}_i^{(1)}]. \nonumber
	\end{equation}
	Additionally, 
	\begin{eqnarray}
		E[\widetilde{Y}^{(2)}| \widetilde{X}_1^{(1)}]
		&=& E[\widetilde{Y}^{(1)} - E[\widetilde{Y}^{(1)}|\widetilde{X}_1^{(1)}]|\widetilde{X}_1^{(1)}] \nonumber\\
		&=& E[\widetilde{Y}^{(1)}|\widetilde{X}_1^{(1)}] -E[\widetilde{Y}^{(1)}|\widetilde{X}_1^{(1)}] \nonumber \\
		&=& 0 = E[\widetilde{Y}^{(1)}] \nonumber\\
		E[\widetilde{X}_2^{(2)}|\widetilde{X}_1^{(1)}] &=& E[\widetilde{X}_2^{(1)} - E[\widetilde{X}_2^{(1)}|\widetilde{X}_1^{(1)}]|\widetilde{X}_1^{(1)}] \nonumber\\
		&=& E[\widetilde{X}_2^{(1)}|\widetilde{X}_1^{(1)}] - E[\widetilde{X}_2^{(1)}|\widetilde{X}_1^{(1)}] \nonumber\\
		&=& 0 = E[\widetilde{X}_1^{(1)}]. \nonumber
	\end{eqnarray}
	Summarizing above results,
	\begin{eqnarray}
		\widetilde{Y}^{(1)},\widetilde{\mathbf{X}}^{(1)} &\indep& \mathbf{Z} \nonumber\\
		\widetilde{Y}^{(2)}, \widetilde{X}_2^{(2)} &\indep& \widetilde{X}_1^{(1)}. \nonumber
	\end{eqnarray}
	In addition, it also holds that
	\begin{eqnarray}
		\widetilde{Y}^{(2)} , \widetilde{X}_2^{(2)} &\indep& \mathbf{Z}. \nonumber
	\end{eqnarray}		
	In fact,
	\begin{eqnarray}
		E[\widetilde{Y}^{(2)}|\mathbf{Z}] &=& E[\widetilde{Y}^{(1)} - E[\widetilde{Y}^{(1)}|\widetilde{X}_1^{(1)}]|\mathbf{Z}] \nonumber\\
		&=&  E[\widetilde{Y}^{(1)}|\mathbf{Z}] - E[E[\widetilde{Y}^{(1)}|\widetilde{X}_1^{(1)} ] \mathbf{Z}] \nonumber \\
		&=&  E[\widetilde{Y}^{(1)}] -E[E[\widetilde{Y}^{(1)}|\widetilde{X}_1^{(1)}]] \qquad (\because \widetilde{Y}^{(1)}, \widetilde{X}_1^{(1)} \indep \mathbf{Z}) \nonumber\\
		&=&  E[\widetilde{Y}^{(1)}] -E[\widetilde{Y}^{(1)}] \nonumber\\
		&=& 0  \nonumber \\
		&=&	E[\widetilde{Y}^{(2)}]  \nonumber
	\end{eqnarray}
	\begin{eqnarray}
		E[\widetilde{X}_2^{(2)}|\mathbf{Z}] &=& E[\widetilde{X}_2^{(1)} - E[\widetilde{X}_2^{(1)}|\widetilde{X}_1^{(1)}]|\mathbf{Z}] \nonumber\\
		&=&  E[\widetilde{X}_2^{(1)}|\mathbf{Z}] - E[E[\widetilde{X}_2^{(1)}|\widetilde{X}_1^{(1)}]| \mathbf{Z}] \nonumber\\
		&=&  E[\widetilde{X}_2^{(1)}] - E[E[\widetilde{X}_2^{(1)}|\widetilde{X}_1^{(1)}]]  \qquad(\because \widetilde{X}_2^{(1)}, \widetilde{X}_1^{(1)} \indep \mathbf{Z})\nonumber \\
		&=&  E[\widetilde{X}_2^{(1)}] - E[\widetilde{X}_2^{(1)}] \nonumber\\
		&=&  0  \nonumber\\
		&=&  E[\widetilde{X}_2^{(2)}].   \nonumber
	\end{eqnarray}
	Since lemma \ref{lemma_one_do_minus_modify_also_identifibility_preserve}, 
	$\widetilde{\mathbf{V}}^{(2)}\cup\widetilde{\mathbf{U}}^{(2)}$ is a semi-Markovian model.
	Let $\widetilde{G}^{(2)}$ be a DAG for $\widetilde{\mathbf{V}}^{(2)}\cup\widetilde{\mathbf{U}}^{(2)}$,
	then $\widetilde{\mathbf{X}}^{(2)} \subset An(\widetilde{Y}^{(2)})_{\widetilde{G}^{(2)}} \cup Ind(\widetilde{Y}^{(2)})_{\widetilde{G}^{(2)}}$.
	Because $P(Y|do(\mathbf{X}^{(1)}),\mathbf{Z}^{(1)})$ is identifiable in $\widetilde{G}^{(1)}$,
	$P(\widetilde{Y}^{(2)}|do(\widetilde{\mathbf{X}}^{(2)}))$ is identifiable in $\widetilde{G}^{(2)}$.
	
	Now, assume the following as inductive assumptions for $k \ge 1$.
	\begin{enumerate}
		\item $\widetilde{Y}^{(k)} \indep \{\widetilde{X}_1^{(1)}, \widetilde{X}_2^{(2)}, \dots,\widetilde{X}_{k-1}^{(k-1)}\}$ \label{cond-1}
		\item $\widetilde{X}_i^{(k)} \indep \{\widetilde{X}_1^{(1)}, \widetilde{X}_2^{(2)}, \dots,\widetilde{X}_{k-1}^{(k-1)}\}$  \quad ($\forall i \ge k$) \label{cond-2}
		\item $\widetilde{Y}^{(k)}, \widetilde{\mathbf{X}}^{(k)} \indep \mathbf{Z}$ \label{cond-3}
		\item $\widetilde{\mathbf{V}}^{(k)}\cup\widetilde{\mathbf{U}}^{(k)}$ is a semi-Markovian model. let $\widetilde{G}^{(k)}$ be
		a DAG for 
		$\widetilde{\mathbf{V}}^{(k)}\cup\widetilde{\mathbf{U}}^{(k)}$, then
		$\widetilde{\mathbf{X}}^{(k)} \subset An(\widetilde{Y}^{(k)})_{\widetilde{G}^{(k)}} \cup Ind(\widetilde{Y}^{(k)})_{\widetilde{G}^{(k)}}$ 
		and $P(\widetilde{Y}^{(k)}|do(\widetilde{\mathbf{X}}^{(k)}))$ is identifiable in $\widetilde{G}^{(k)}$. \label{cond-4}
	\end{enumerate}
	Firstly,
	\begin{eqnarray}
		E[\widetilde{Y}^{(k+1)}| \widetilde{X}_k^{(k)}] &=& E[\widetilde{Y}^{(k)} - E[\widetilde{Y}^{(k)}|\widetilde{X}_k^{(k)}]|\widetilde{X}_k^{(k)}] \nonumber\\
		&=& E[\widetilde{Y}^{(k)}|\widetilde{X}_k^{(k)}] -E[\widetilde{Y}^{(k)}|\widetilde{X}_k^{(k)}] \nonumber\\
		&=& 0 = E[\widetilde{Y}^{(k+1)}]. \nonumber
	\end{eqnarray}
	In addition, for $1 \le i < k$, it also holds that
	\begin{equation}
		\{x \in \mathcal{X}_{\widetilde{X}_k^{(k)}} | \mathcal{X}_{\widetilde{X}_i^{(i)}} = \widetilde{x}_i^{(i)} \} = \mathcal{X}_{\widetilde{X}_k^{(k)}}. \nonumber
	\end{equation}
	Therefore, for fixed $\widetilde{x}_i^{(i)} \in \mathcal{X}_{\widetilde{X}_i^{(i)}}$, 
	\begin{eqnarray}
		E[\widetilde{Y}^{(k+1)}|\widetilde{X}_i^{(i)}=\widetilde{x}_i^{(i)}] &=& E[\widetilde{Y}^{(k)} - E[\widetilde{Y}^{(k)}|\widetilde{X}_k^{(k)}]|\widetilde{X}_i^{(i)}=\widetilde{x}_i^{(i)}] \nonumber\\
		&=&  E[\widetilde{Y}^{(k)}|\widetilde{X}_i^{(i)}=\widetilde{x}_i^{(i)}] - E[E[\widetilde{Y}^{(k)}|\widetilde{X}_k^{(k)}]|\widetilde{X}_i^{(i)}=\widetilde{x}_i^{(i)}]\nonumber \\
		&=&  E[\widetilde{Y}^{(k)}|\widetilde{X}_i^{(i)}=\widetilde{x}_i^{(i)}] -   E[\widetilde{Y}^{(k)}|\widetilde{X}_i^{(i)}=\widetilde{x}_i^{(i)}] \nonumber\\
		&=& 0  \nonumber \\
		&=&	E[\widetilde{Y}^{(k+1)}].  \nonumber
	\end{eqnarray}
	Hence, the assumption \ref{cond-1} holds when $k+1$.
	
	Next, it holds that
	\begin{eqnarray}
		E[\widetilde{X}_{k+1}^{(k+1)}|\widetilde{X}_{k}^{(k)}] &=& E[\widetilde{X}_{k+1}^{(k)} - E[\widetilde{X}_{k+1}^{(k)}|\widetilde{X}_{k}^{(k)}]|\widetilde{X}_k^{(k)}] \nonumber\\
		&=& E[\widetilde{X}_{k+1}^{(k)}|\widetilde{X}_{k}^{(k)}] - E[\widetilde{X}_{k+1}^{(k)}|\widetilde{X}_{k}^{(k)}] \nonumber\\
		&=& 0 = E[\widetilde{X}_{k+1}^{(k+1)}]. \nonumber
	\end{eqnarray}
	In addition, for $j \ge k+1$, $i < k$ it also holds that
	\begin{eqnarray}
		E[\widetilde{X}_{j}^{(k+1)}|\widetilde{X}_i^{(i)}=\widetilde{x}_i^{(i)}] &=& 
		E[\widetilde{X}_{j}^{(k)} - E[\widetilde{X}_{j}^{(k)}|\widetilde{X}_k^{(k)}]|\widetilde{X}_i^{(i)}=\widetilde{x}_i^{(i)}] \nonumber\\
		&=&  E[\widetilde{X}_{j}^{(k)}|\widetilde{X}_i^{(i)}=\widetilde{x}_i^{(i)}] - 
		E[E[\widetilde{X}_{j}^{(k)}|\widetilde{X}_k^{(k)}]|\widetilde{X}_i^{(i)}=\widetilde{x}_i^{(i)}]\nonumber \\
		&=&  E[\widetilde{X}_{j}^{(k)}|\widetilde{X}_i^{(i)}=\widetilde{x}_i^{(i)}] -   E[\widetilde{X}_{j}^{(k)}|\widetilde{X}_i^{(i)}=\widetilde{x}_i^{(i)}] \nonumber\\
		&=& 0  \nonumber \\
		&=&	E[\widetilde{X}_{j}^{(k+1)}].  \nonumber
	\end{eqnarray}
	Hence, the assumption \ref{cond-2} holds when $k+1$.

	As for the assumption \ref{cond-3}, it holds that
	\begin{eqnarray}
		E[\widetilde{Y}^{(k+1)}|\mathbf{Z}] &=& E[\widetilde{Y}^{(k)} - E[\widetilde{Y}^{(k)}|\widetilde{X}_k^{(k)}]|\mathbf{Z}] \nonumber\\
		&=&  E[\widetilde{Y}^{(k)}|\mathbf{Z}] - E[E[\widetilde{Y}^{(k)}|\widetilde{X}_k^{(k)}]|\mathbf{Z}]\nonumber \\
		&=&  E[\widetilde{Y}^{(k)}|\mathbf{Z}] -   E[\widetilde{Y}^{(k)}|\widetilde{X}_k^{(k)}] \nonumber\\
		&=&  E[\widetilde{Y}^{(k)}] -   E[\widetilde{Y}^{(k)}] \nonumber\\
		&=& 0  \nonumber \\
		&=&	E[\widetilde{Y}^{(k+1)}].  \nonumber
	\end{eqnarray}
	In addition, for $i \ge k+1$, it holds that
	\begin{eqnarray}
		E[\widetilde{X}_{i}^{(k+1)}|\mathbf{Z}] &=& E[\widetilde{X}_{i}^{(k)} - E[E[\widetilde{X}_{i}^{(k)}|\widetilde{X}_k^{(k)}]|\mathbf{Z}] \nonumber\\
		&=&  E[\widetilde{X}_{i}^{(k)}|\mathbf{Z}] - E[E[\widetilde{X}_{i}^{(k)}|\widetilde{X}_k^{(k)}]|\mathbf{Z}]\nonumber \\
		&=&  E[\widetilde{X}_{i}^{(k)}] -   E[E[\widetilde{X}_{i}^{(k)}|\widetilde{X}_k^{(k)}]] \quad (\because \widetilde{\mathbf{X}}^{(k)} \indep \mathbf{Z})\nonumber\\
		&=&  E[\widetilde{X}_{i}^{(k)}] -   E[\widetilde{X}_{i}^{(k)}] \nonumber\\
		&=& 0  \nonumber \\
		&=&	E[\widetilde{X}_{i}^{(k+1)}]  \nonumber
	\end{eqnarray}
	Hence, the assumption \ref{cond-3} holds when $k+1$.
	
	Finally, we show that the assumption \ref{cond-4} holds when $k+1$.
	Note that, $\widetilde{X}_{1}^{(k+1)}=\widetilde{X}_{1}^{(1)}, \widetilde{X}_{2}^{(k+2)}=\widetilde{X}_{2}^{(2)}, \ldots, \widetilde{X}_{k}^{(k+1)}=\widetilde{X}_{k}^{(k)}$.
	Since the assumption \ref{cond-1}, we can apply $do$-calculus R3 to
	$\widetilde{X}_1^{(k)}, \widetilde{X}_2^{(k)}, \dots,\widetilde{X}_{k-1}^{(k)}$ of $P(\widetilde{Y}^{(k)}|do(\widetilde{\mathbf{X}}^{(k)}))$, 
	and we obtain
	\begin{eqnarray}
		P(\widetilde{Y}^{(k)}|do(\widetilde{\mathbf{X}}^{(k)})) &=& P(\widetilde{Y}^{(k)}|do(\widetilde{X}_{1}^{(k)}), do(\widetilde{X}_{2}^{(k)}), \ldots,  do(\widetilde{X}_{n}^{(k)})) \nonumber\\
		&=&  P(\widetilde{Y}^{(k)}|do(\widetilde{X}_{k}^{(k)}), do(\widetilde{X}_{k+1}^{(k)}), \ldots, do(\widetilde{X}_{n}^{(k)})) \nonumber
	\end{eqnarray}
	Therefore, from the assumption \ref{cond-4} , $P(\widetilde{Y}^{(k)}|do(\widetilde{X}_{k}^{(k)}), do(\widetilde{X}_{k+1}^{(k)}), \ldots, do(\widetilde{X}_{n}^{(k)}))$
	is identifiable in $\widetilde{G}^{(k)}$.
	Moreover, in a setting that $\widetilde{\mathbf{V}}=\widetilde{\mathbf{V}}^{(k)}$ 
	$\widetilde{\mathbf{U}}=\widetilde{\mathbf{U}}^{(k)}$ ,
	$\mathbf{X}=\{\widetilde{X}_{k}^{(k)}, \widetilde{X}_{k+1}^{(k)}, \ldots, \widetilde{X}_{n}^{(k)}\}$ and
	$X'=\widetilde{X}_{k}^{(k)}$, 
	by the lemma \ref{lemma_one_do_minus_modify_also_identifibility_preserve}, 
	$\widetilde{\mathbf{V}}^{(k+1)}\cup\widetilde{\mathbf{U}}^{(k+1)}$ is a semi-Markovian model.
	Let  $\widetilde{G}^{(k+1)}$ be a DAG for $\widetilde{\mathbf{V}}^{(k+1)}\cup\widetilde{\mathbf{U}}^{(k+1)}$,
	then $\widetilde{\mathbf{X}}^{(k+1)} \subset An(\widetilde{Y}^{(k+1)})_{\widetilde{G}^{(k+1)}} \cup Ind(\widetilde{Y}^{(k+1)})_{\widetilde{G}^{(k+1)}}$.
	Moreover, $P(\widetilde{Y}^{(k+1)}|do(\widetilde{X}_{k+1}^{(k+1)}), do(\widetilde{X}_{k+2}^{(k+1)}), \ldots, do(\widetilde{X}_{n}^{(k+1)}))$
	is identifiable in $\widetilde{G}^{(k+1)}$.
	Now, note that it holds that
	\begin{equation}
		\widetilde{X}_{i}^{(k)} - E[\widetilde{X}_{i}^{(k)}|\widetilde{X}_{k}^{(k)}] = \widetilde{X}_{i}^{(k)} - E[\widetilde{X}_{i}^{(k)}] = \widetilde{X}_{i}^{(k)}, \nonumber
	\end{equation}
 	because $\widetilde{X}_{1}^{(k)}, \widetilde{X}_{2}^{(k)},\ldots, \widetilde{X}_{k-1}^{(k-1)}$ are independent of $\widetilde{X}_{k}^{(k)}$ for $i \le k-1$
 	from (\ref{eq_trans_lemma_one_do_minus_modify}) in lemma \ref{lemma_one_do_minus_modify_also_identifibility_preserve}.
	Namely, $\widetilde{X}_{i}^{(k)} (i \le k-1)$ are the same before and after operation of (\ref{eq_trans_lemma_one_do_minus_modify}).

	By the way, we already have shown the assumption \ref{cond-1} when $k+1$.
	Therefore, we can use $do$-calculus R3 to $P(\widetilde{Y}^{(k+1)}|do(\widetilde{\mathbf{X}}^{(k+1)}))$ in $\widetilde{G}^{(k+1)}$, 
	and we obtain
	\begin{eqnarray}
		&&P(\widetilde{Y}^{(k+1)}|do(\widetilde{\mathbf{X}}^{(k+1)}))  \nonumber \\
		&& \quad = P(\widetilde{Y}^{(k+1)}|do(\widetilde{X}_{1}^{(k+1)}), do(\widetilde{X}_{2}^{(k+1)}), \ldots,  do(\widetilde{X}_{n}^{(k+1)})) \nonumber \\
		&& \quad = P(\widetilde{Y}^{(k+1)}|do(\widetilde{X}_{k+1}^{(k+1)}), do(\widetilde{X}_{k+2}^{(k+1)}), \ldots, do(\widetilde{X}_{n}^{(k+1)})) \nonumber
	\end{eqnarray}
	Recall that the right hand side of above equations is identifiable in $\widetilde{G}^{(k+1)}$.
	As a result, $P(\widetilde{Y}^{(k+1)}|do(\widetilde{\mathbf{X}}^{(k+1)}))$ is identifiable in $\widetilde{G}^{(k+1)}$.
	
	Summarizing above results, the assumptions \ref{cond-1}-\ref{cond-4} holds when $k+1$.
	
	Next, we show that $\widetilde{G}$ is available for modeling the average causal effect $E[\widetilde{Y}|do(\widetilde{\mathbf{X}})]$.
	Note that $\widetilde{G} = \widetilde{G}^{(n)}$.
	From the inductive assumptions when $k=n$, it holds that for each $\widetilde{X}_i^{(i)}$,
	\begin{equation}
		\widetilde{X}_i^{(i)} \indep \widetilde{\mathbf{X}} \setminus \widetilde{X}_i^{(i)} 
		\quad and \quad \widetilde{X}_i^{(i)} \indep  \mathbf{Z} 
		\quad and \quad \widetilde{X}_i^{(i)} \in An(\widetilde{Y})_{\widetilde{G}} \cup Ind(\widetilde{Y})_{\widetilde{G}}. \nonumber
	\end{equation}
	That is, there exist no arrows emerging from $\widetilde{V}$ to each $\widetilde{X}_i^{(i)}$.
	Hence,
	\begin{equation}
		\overline{\widetilde{G}^{obs}}(\widetilde{\mathbf{X}})=\widetilde{G}^{obs} \nonumber
	\end{equation}
	As the above result, $\widetilde{G}$ is available for modeling the average causal effect $E[\widetilde{Y}|do(\widetilde{\mathbf{X}})]$.

	Finally, we show equations (\ref{eq_do_equals_codi})-(\ref{for_collolary_trans}).
	From (\ref{def_causal_effect_no_condi_eq}), 
	\begin{eqnarray}
		&&P(\widetilde{Y}^{(n)}=\widetilde{y}^{(n)}|do(\widetilde{X}_n^{(n)}=\widetilde{x}_{n}^{(n)}),
		do(\widetilde{X}_{n-1}^{(n-1)}=\widetilde{x}_{n-1}^{(n-1)}),
		\ldots, do(\widetilde{X}_1^{(1)}=\widetilde{x}_1^{(1)})) \nonumber\\
		&& \qquad= P(\widetilde{Y}^{(n)}=\widetilde{y}^{(n)}|\widetilde{X}_n^{(n)}=\widetilde{x}_n^{(n)},
		\widetilde{X}_{n-1}^{(n-1)}=\widetilde{x}_{n-1}^{(n-1)},
		\ldots, \widetilde{X}_1^{(1)}=\widetilde{x}_1^{(1)}) \nonumber\\
		&&  \qquad= P(\widetilde{Y}^{(n)}=\widetilde{y}^{(n)}|\widetilde{X}_n^{(n)}=\widetilde{x}_n^{(n)},
		\widetilde{X}_{n-1}^{(n-1)}=\widetilde{x}_{n-1}^{(n-1)},
		\ldots, \widetilde{X}_1^{(1)}=\widetilde{x}_1^{(1)}) \label{eq_theorem2_intervetnion_trans_pre}\\
		&&  \qquad= 
		\frac{P(\widetilde{Y}^{(n)}=\widetilde{y}^{(n)}, \widetilde{X}_n^{(n)}=\widetilde{x}_n^{(n)},
			\widetilde{X}_{n-1}^{(n-1)}=\widetilde{x}_{n-1}^{(n-1)},
			\ldots, \widetilde{X}_1^{(1)}=\widetilde{x}_1^{(1)})}
		{P(\widetilde{X}_n^{(n)}=\widetilde{x}_n^{(n)},
			\widetilde{X}_{n-1}^{(n-1)}=\widetilde{x}_{n-1}^{(n-1)},
			\ldots, \widetilde{X}_1^{(1)}=\widetilde{x}_1^{(1)}))}. \nonumber
	\end{eqnarray}
	Now that, from the definitions of $\widetilde{Y}^{(k)}$ and $\widetilde{\mathbf{X}}^{(k)}$ for $2 \le k \le n$,
	it holds that
	\begin{equation}
		P(\widetilde{Y}^{(k)}=\widetilde{y}^{(k)}) = P(\widetilde{Y}^{(k-1)}=\widetilde{y}^{(k-1)}|\widetilde{X}_{k-1}^{(k-1)}=\widetilde{x}_{k-1}^{(k-1)}) \nonumber
	\end{equation}
	\begin{equation}
		P(\widetilde{X}_{i}^{(k)}=\widetilde{x}_{i}^{(k)}) = P(\widetilde{X}_{i}^{(k-1)}=\widetilde{x}_{i}^{(k-1)}|\widetilde{X}_{k-1}^{(k-1)}=\widetilde{x}_{k-1}^{(k-1)}) \nonumber
	\end{equation}
	In addition, since we already have shown that
	\begin{equation}
		\widetilde{Y}^{(k)} \indep \{\widetilde{X}_1^{(1)}, \widetilde{X}_2^{(2)}, \dots,\widetilde{X}_{k-1}^{(k-1)}\} \quad and \quad
		\widetilde{X}_i^{(k)} \indep \{\widetilde{X}_1^{(1)}, \widetilde{X}_2^{(2)}, \dots,\widetilde{X}_{k-1}^{(k-1)}\} \ (\forall i \ge k), \label{eq_indep_n_and_less_n} 
	\end{equation}
	we obtain
	\begin{eqnarray}
		&& P(\widetilde{Y}^{(n)}=\widetilde{y}^{(n)}, \widetilde{X}_n^{(n)}=\widetilde{x}_n^{(n)},
		\widetilde{X}_{n-1}^{(n-1)}=\widetilde{x}_{n-1}^{(n-1)},
		\ldots, \widetilde{X}_1^{(1)}=\widetilde{x}_1^{(1)}) \nonumber \\
		&& = P(\widetilde{Y}^{(n)}=\widetilde{y}^{(n)}, \widetilde{X}_n^{(n)}=\widetilde{x}_n^{(n)}) \times P(\widetilde{X}_{n-1}^{(n-1)}=\widetilde{x}_{n-1}^{(n-1)}) \nonumber\\
		&& \qquad \times P(\widetilde{X}_{n-2}^{(n-2)}=\widetilde{x}_{n-2}^{(n-2)}) \times \cdots \times P(\widetilde{X}_1^{(1)}=\widetilde{x}_1^{(1)}) \nonumber\\
		&& = P(\widetilde{Y}^{(n-1)}=\widetilde{y}^{(n-1)}, \widetilde{X}_n^{(n-1)}=\widetilde{x}_{n}^{(n-1)} | \widetilde{X}_{n-1}^{(n-1)}=\widetilde{x}_{n-1}^{(n-1)}) \times 
		P(\widetilde{X}_{n-1}^{(n-1)}=\widetilde{x}_{n-1}^{(n-1)}) \nonumber \\ 
		&& \qquad \times P(\widetilde{X}_{n-2}^{(n-2)}=\widetilde{x}_{n-2}^{(n-2)}) \times \cdots \times P(\widetilde{X}_1^{(1)}=\widetilde{x}_1^{(1)}) \nonumber\\
		&& = P(\widetilde{Y}^{(n-1)}=\widetilde{y}^{(n-1)}, \widetilde{X}_n^{(n-1)}=\widetilde{x}_{n}^{(n-1)}, \widetilde{X}_{n-1}^{(n-1)}=\widetilde{x}_{n-1}^{(n-1)}) \times P(\widetilde{X}_{n-2}^{(n-2)}=\widetilde{x}_{n-2}^{(n-2)}) \nonumber\\
		&& \qquad \times P(\widetilde{X}_{n-3}^{(n-3)}=\widetilde{x}_{n-3}^{(n-3)}) \times \cdots \times P(\widetilde{X}_1^{(1)}=\widetilde{x}_1^{(1)}) \nonumber\\
		&& = P(\widetilde{Y}^{(n-2)}=\widetilde{y}^{(n-2)}, \widetilde{X}_n^{(n-2)}=\widetilde{x}_{n}^{(n-2)}, \nonumber\\
		&& \quad \qquad \widetilde{X}_{n-1}^{(n-2)}=\widetilde{x}_{n-1}^{(n-2)}|\widetilde{X}_{n-2}^{(n-2)}=\widetilde{x}_{n-2}^{(n-2)}) \times P(\widetilde{X}_{n-2}^{(n-2)}=\widetilde{x}_{n-2}^{(n-2)}) \nonumber\\
		&& \qquad \times P(\widetilde{X}_{n-3}^{(n-3)}=\widetilde{x}_{n-3}^{(n-3)}) \times \cdots \times P(\widetilde{X}_1^{(1)}=\widetilde{x}_1^{(1)}) \nonumber\\
		&& = P(\widetilde{Y}^{(n-2)}=\widetilde{y}^{(n-2)}, \widetilde{X}_n^{(n-2)}=\widetilde{x}_{n}^{(n-2)}, \widetilde{X}_{n-1}^{(n-2)}=\widetilde{x}_{n-1}^{(n-2)}, \nonumber\\
		&& \quad \qquad \widetilde{X}_{n-2}^{(n-2)}=\widetilde{x}_{n-2}^{(n-2)}) \times P(\widetilde{X}_{n-3}^{(n-3)}=\widetilde{x}_{n-3}^{(n-3)}) \nonumber\\
		&& \qquad \times P(\widetilde{X}_{n-4}^{(n-4)}=\widetilde{x}_{n-4}^{(n-4)}) \times \cdots \times P(\widetilde{X}_1^{(1)}=\widetilde{x}_1^{(1)}). \nonumber
	\end{eqnarray}
	By repeating this operation until k=n, we obtain
	\begin{eqnarray}
		&& P(\widetilde{Y}^{(n)}=\widetilde{y}^{(n)}, \widetilde{X}_n^{(n)}=\widetilde{x}_n^{(n)},
		\widetilde{X}_{n-1}^{(n-1)}=\widetilde{x}_{n-1}^{(n-1)},
		\ldots, \widetilde{X}_1^{(1)}=\widetilde{x}_1^{(1)}) \nonumber \nonumber\\
		&& =  P(\widetilde{Y}^{(1)}=\widetilde{y}^{(1)}, \widetilde{X}_n^{(1)}=\widetilde{x}_n^{(1)},
		\widetilde{X}_{n-1}^{(1)}=\widetilde{x}_{n-1}^{(1)},
		\ldots, \widetilde{X}_1^{(1)}=\widetilde{x}_1^{(1)}). \nonumber
	\end{eqnarray}
	In addition, from the definition of $\widetilde{Y}^{(1)}$ and $\widetilde{\mathbf{X}}^{(1)}$, we obtain
	\begin{equation}
		P(\widetilde{Y}^{(1)}=\widetilde{y}^{(1)}) = P(Y=y|\mathbf{Z}=\mathbf{z}) \nonumber
	\end{equation}
	\begin{equation}
		P(\widetilde{X}_i^{(1)}=\widetilde{x}_i^{(1)})=P(X_i=x_i|\mathbf{Z}=\mathbf{z}) \qquad (1 \le k \le n). \nonumber
	\end{equation}
	Thus, 
	\begin{eqnarray}
		&& P(\widetilde{Y}^{(1)}=\widetilde{y}^{(1)}, \widetilde{X}_n^{(1)}=\widetilde{x}_n^{(1)},
		\widetilde{X}_{n-1}^{(1)}=\widetilde{x}_{n-1}^{(1)},
		\ldots, \widetilde{X}_1^{(1)}=\widetilde{x}_1^{(1)}) \nonumber \\
		&& = P(Y=y, X_n=x_n , \ldots, X_1=x_1|\mathbf{Z}=\mathbf{z}). \nonumber
	\end{eqnarray}
	From the above results,
	\begin{eqnarray}
		&& P(\widetilde{Y}^{(n)}=\widetilde{y}^{(n)}, \widetilde{X}_n^{(n)}=\widetilde{x}_n^{(n)},
		\widetilde{X}_{n-1}^{(n-1)}=\widetilde{x}_{n-1}^{(n-1)},
		\ldots, \widetilde{X}_1^{(1)}=\widetilde{x}_1^{(1)}) \nonumber \\
		&& \quad =P(Y=y, X_n=x_n, X_{n-1}=x_{n-1} \ldots, X_1=x_1|\mathbf{Z}=\mathbf{z}). \label{eq_intevention_bunshi}
	\end{eqnarray}
	Similarly, we can also obtain
	\begin{eqnarray}
		&& P(\widetilde{X}_n^{(n)}=\widetilde{x}_n^{(n)}, \widetilde{X}_{n-1}^{(n-1)}=\widetilde{x}_{n-1}^{(n-1)},
		\ldots, \widetilde{X}_1^{(1)}=\widetilde{x}_1^{(1)}) \nonumber\\
		&& \quad =P(X_n=x_n , X_{n-1}=x_{n-1}, \ldots, X_1=x_1|\mathbf{Z}=\mathbf{z}).  \label{eq_intevention_bunbo}
	\end{eqnarray}
	By inserting (\ref{eq_intevention_bunshi}) and (\ref{eq_intevention_bunbo}) to (\ref{eq_theorem2_intervetnion_trans_pre}),
	\begin{eqnarray}
		&& \frac{P(\widetilde{Y}^{(n)}=\widetilde{y}^{(n)}, \widetilde{X}_n^{(n)}=\widetilde{x}_n^{(n)},
			\widetilde{X}_{n-1}^{(n-1)}=\widetilde{x}_{n-1}^{(n-1)},
			\ldots, \widetilde{X}_1^{(1)}=\widetilde{x}_1^{(1)})}
		{P(\widetilde{X}_n^{(n)}=\widetilde{x}_n^{(n)},
			\widetilde{X}_{n-1}^{(n-1)}=\widetilde{x}_{n-1}^{(n-1)},
			\ldots, \widetilde{X}_1^{(1)}=\widetilde{x}_1^{(1)}))} \nonumber \\
		&& \quad = \frac{P(Y=y, X_n=x_n, X_{n-1}=x_{n-1} \ldots, X_1=x_1|\mathbf{Z}=\mathbf{z})}
		{P(X_n=x_n , X_{n-1}=x_{n-1}, \ldots, X_1=x_1|\mathbf{Z}=\mathbf{z})} \nonumber\\
		&& \quad = P(Y=y | X_n=x_n, X_{n-1}=x_{n-1} \ldots, X_1=x_1, \mathbf{Z}=\mathbf{z}). \label{eq_tilde_is_same_as_condi}
	\end{eqnarray}
	On the other hand, by lemma \ref{lemma_intervention_eq},
	\begin{eqnarray}
		\lefteqn{P(Y=y|do(X_1=x_1),\dots,do(X_n=x_n), \mathbf{Z}=\mathbf{z})} \quad \nonumber\\
		&=& P(Y=y | X_n=x_n, X_{n-1}=x_{n-1} \ldots, X_1=x_1, \mathbf{Z}=\mathbf{z}). \label{eq_do_is_same_as_condi}
	\end{eqnarray}
	Thus, from (\ref{eq_tilde_is_same_as_condi}) and  (\ref{eq_do_is_same_as_condi}), (\ref{eq_tilde_is_same_original}) holds.
	
	Finally, by (\ref{eq_indep_n_and_less_n}) when $k=n$,
	\begin{eqnarray}
		&& P(\widetilde{Y}^{(n)}=\widetilde{y}^{(n)}|\widetilde{X}_{1}^{(1)}=\widetilde{x}_{1}^{(1)},
		\widetilde{X}_{2}^{(2)}=\widetilde{x}_{2}^{(2)},\dots,\widetilde{X}_{n}^{(n)}=\widetilde{x}_{n}^{(n)}) \nonumber\\
		&& \quad = \frac{P(\widetilde{Y}^{(n)}=\widetilde{y}^{(n)}, \widetilde{X}_{1}^{(1)}=\widetilde{x}_{1}^{(1)},
			\widetilde{X}_{2}^{(2)}=\widetilde{x}_{2}^{(2)},\dots,\widetilde{X}_{n}^{(n)}=\widetilde{x}_{n}^{(n)})}
		{P(\widetilde{X}_{1}^{(1)}=\widetilde{x}_{1}^{(1)},\widetilde{X}_{2}^{(2)}=\widetilde{x}_{2}^{(2)},
			\dots,\widetilde{X}_{n}^{(n)}=\widetilde{x}_{n}^{(n)})} \nonumber \\
		&& \quad = \frac{P(\widetilde{Y}^{(n)}=\widetilde{y}^{(n)}, \widetilde{X}_{n}^{(n)}=\widetilde{x}_{n}^{(n)})
			\cdot P(\widetilde{X}_{1}^{(1)}=\widetilde{x}_{1}^{(1)}, 
			\dots,\widetilde{X}_{n-1}^{(n-1)}=\widetilde{x}_{n-1}^{(n-1)})}
		{P(\widetilde{X}_{n}^{(n)}=\widetilde{x}_{n}^{(n)}) \cdot P(\widetilde{X}_{1}^{(1)}=\widetilde{x}_{1}^{(1)},
			\dots,\widetilde{X}_{n-1}^{(n-1)}=\widetilde{x}_{n-1}^{(n-1)})}  \nonumber\\
		&& \quad = \frac{P(\widetilde{Y}^{(n)}=\widetilde{y}^{(n)}, \widetilde{X}_{n}^{(n)}=\widetilde{x}_{n}^{(n)})}
		{P(\widetilde{X}_{n}^{(n)}=\widetilde{x}_{n}^{(n)})} \nonumber\\
		&& \quad = P(\widetilde{Y}^{(n)}=\widetilde{y}^{(n)} | \widetilde{X}_{n}^{(n)}=\widetilde{x}_{n}^{(n)}). \nonumber
	\end{eqnarray}
	Thus, (\ref{for_collolary_trans}) holds.
	
	This completes the proof of the theorem.
\end{proof}

\begin{proof}[proof of Corollary \ref{corollary_how_to_calc_ave_intervention_effect}]
	In the proof of Theorem \ref{thorem_multi_vals}, we obtain the following results.
	\begin{equation}
		\widetilde{Y} \indep  \widetilde{\mathbf{X}} \label{eq_from_theo_2_top}
	\end{equation}
	\begin{equation}
		\widetilde{X}_i^{(i)} \indep \widetilde{X}_j^{(j)}  \quad (1 \le i < j \le n) 
	\end{equation}
	\begin{equation}	
		\widetilde{Y}, \widetilde{\mathbf{X}} \indep \mathbf{Z} 
	\end{equation}
	\begin{eqnarray}
		\lefteqn{P'(Y=y|X_1=x_1, X_2=x_2, \ldots, X_n=x_n, \mathbf{Z}=\mathbf{z})} \quad \nonumber \\
		&=&  P(Y=y|X_1=x_1, X_2=x_2, \ldots, X_n=x_n, \mathbf{Z}=\mathbf{z}) 
	\end{eqnarray}
	\begin{eqnarray}
		\lefteqn{P(\widetilde{X}_{1}^{(1)}=\widetilde{x}_{1}^{(1)},
			\widetilde{X}_{2}^{(2)}=\widetilde{x}_{2}^{(2)},\dots,\widetilde{X}_{n}^{(n)}=\widetilde{x}_{n}^{(n)},\mathbf{Z}=\mathbf{z})} \nonumber\\
		&=& P(X_1=x_1, X_2=x_2,\ldots,X_n=x_n|\mathbf{Z}=\mathbf{z}). \label{eq_from_theo_2_bottom}
	\end{eqnarray}
	From this, we obtain
	\begin{eqnarray}
		\lefteqn{P''(Y=y|\widetilde{X}_{1}^{(1)}=\widetilde{x}_{1}^{(1)},
			\widetilde{X}_{2}^{(2)}=\widetilde{x}_{2}^{(2)},}\nonumber \\
		&& \quad \dots,\widetilde{X}_{n}^{(n)}=\widetilde{x}_{n}^{(n)}, \mathbf{Z}=\mathbf{z}) \nonumber \\
		&& = P''(Y=y | \widetilde{X}_{1}^{(1)}=\widetilde{x}_{1}^{(1)}, \widetilde{X}_{2}^{(2)}=\widetilde{x}_{2}^{(2)},
		\dots,\widetilde{X}_{n}^{(n)}=\widetilde{x}_{n}^{(n)})  \nonumber \\
		&& = P(Y=y | \widetilde{X}_{1}^{(1)}=\widetilde{x}_{1}^{(1)}, \widetilde{X}_{2}^{(2)}=\widetilde{x}_{2}^{(2)},
		\dots,\widetilde{X}_{n}^{(n)}=\widetilde{x}_{n}^{(n)})  \nonumber \\
		&& = P(Y=y |X_1=x_1, X_2=x_2,\ldots,X_n=x_n, \mathbf{Z}=\mathbf{z}) \nonumber\\
		&& = P'(Y=y|X_1=x_1, X_2=x_2, \ldots, X_n=x_n, \mathbf{Z}=\mathbf{z}). \nonumber
	\end{eqnarray}
	Thus, (\ref{eq_condi_ave_intervent_to_tidle_trans}) holds.
	
	Now that, it holds that for each $\widetilde{X}_{i}^{(i)}(1\le i \le n)$, $\mathbf{Z}$
	and all continuous functions $f(x)$ and $g(\mathbf{z})$, 
	\begin{eqnarray}
		\lefteqn{E_{P''}[f(\widetilde{X}_{i}^{(i)})|\widetilde{X}_{1}^{(1)}=\widetilde{x}_{1}^{(1)},\widetilde{X}_{2}^{(2)}=\widetilde{x}_{2}^{(2)},} \nonumber \\
		&& \qquad \ldots,\widetilde{X}_{n}^{(n)}=\widetilde{x}_{n}^{(n)}, \mathbf{Z}=\mathbf{z}] \nonumber \\
		&=& f(\widetilde{x}_{i}^{(i)}) \label{eq_condi_only_remain_x_tilde}
	\end{eqnarray}
	\begin{eqnarray}
		\lefteqn{E_{P''}[g(\mathbf{Z})|\widetilde{X}_{1}^{(1)}=\widetilde{x}_{1}^{(1)},\widetilde{X}_{2}^{(2)}=\widetilde{x}_{2}^{(2)},} \nonumber \\
		&& \qquad \ldots,\widetilde{X}_{n}^{(n)}=\widetilde{x}_{n}^{(n)}, \mathbf{Z}=\mathbf{z}] \nonumber \\
		&=& g(\mathbf{z}).  \label{eq_condi_only_remain_z}
	\end{eqnarray}
	In fact, for $P''$, it holds that for $1 \le i < j \le n$,
	\begin{equation}
		\widetilde{X}_{i}^{(i)} \indep 	\widetilde{X}_{j}^{(j)}. \nonumber
	\end{equation}
	In addition, for $P''$, it hold that for $1 \le i \le n$,
	\begin{equation}
		\widetilde{X}_{i}^{(i)} \indep \mathbf{Z}. \nonumber
	\end{equation}
	Therefore,
	\begin{eqnarray}
		\lefteqn{E_{P''}[f(\widetilde{X}_{i}^{(i)})|\widetilde{X}_{1}^{(1)}=\widetilde{x}_{1}^{(1)},\widetilde{X}_{2}^{(2)}=\widetilde{x}_{2}^{(2)},} \quad \nonumber \\
		&& \qquad \ldots,\widetilde{X}_{n}^{(n)}=\widetilde{x}_{n}^{(n)}, \mathbf{Z}=\mathbf{z}] \nonumber\\
		&=& E_{P''}[f(\widetilde{X}_{i}^{(i)})|\widetilde{X}_{i}^{(i)}=\widetilde{x}_{i}^{(i)}] \nonumber\\
		&=& f(\widetilde{x}_{i}^{(i)}), \nonumber
	\end{eqnarray}
	and 
	\begin{eqnarray}
		\lefteqn{E_{P''}[g(\mathbf{Z})|\widetilde{X}_{1}^{(1)}=\widetilde{x}_{1}^{(1)},\widetilde{X}_{2}^{(2)}=\widetilde{x}_{2}^{(2)},} \quad \nonumber \\
		&& \qquad \ldots,\widetilde{X}_{n}^{(n)}=\widetilde{x}_{n}^{(n)}, \mathbf{Z}=\mathbf{z}] \nonumber\\
		&=& E_{P''}[g(\mathbf{Z})|\mathbf{Z}=\mathbf{z}] \nonumber\\
		&=& g(\mathbf{z}). \nonumber
	\end{eqnarray}
	Recall that from the definition of $\widetilde{Y}^{(1)}, \widetilde{Y}^{(2)}, \ldots , \widetilde{Y}^{(n)}$,
	\begin{eqnarray}
		&& Y= E_{P}[Y|\mathbf{Z}=\mathbf{z}] + E_{P}[\widetilde{Y}^{(1)}|\widetilde{X}_{1}^{(1)}=\widetilde{x}_{1}^{(1)}]  \nonumber\\
		&& \qquad + \cdots  + E_{P}[\widetilde{Y}^{(n-1)}|\widetilde{X}_{n-1}^{(n-1)}=\widetilde{x}_{n-1}^{(n-1)}]  \nonumber \\
		&& \qquad + \widetilde{Y}^{(n)}. \nonumber
	\end{eqnarray}
	From the above results, we obtain
	\begin{eqnarray}
		\lefteqn{E_{P''}[Y|\widetilde{X}_{1}^{(1)}=\widetilde{x}_{1}^{(1)}, \widetilde{X}_{2}^{(2)}=\widetilde{x}_{2}^{(2)},} \nonumber \\
		&& 	\ldots,\widetilde{X}_{n}^{(n)}=\widetilde{x}_{n}^{(n)}]  \nonumber \\
		&=&  E_{P''}[E_{P}[Y|\mathbf{Z}=\mathbf{z}] + E_{P}[\widetilde{Y}^{(1)}|\widetilde{X}_{1}^{(1)}=\widetilde{x}_{1}^{(1)}] \nonumber \\
		&& \quad  + \cdots  + E_{P}[\widetilde{Y}^{(n-1)}|\widetilde{X}_{n-1}^{(n-1)}=\widetilde{x}_{n-1}^{(n-1)}] \nonumber \\
		&& \quad  + \widetilde{Y}^{(n)} | \widetilde{X}_{1}^{(1)}=\widetilde{x}_{1}^{(1)}, \widetilde{X}_{2}^{(2)}=\widetilde{x}_{2}^{(2)}, \nonumber \\
		&& \qquad \qquad \qquad \ldots,\widetilde{X}_{n}^{(n)}=\widetilde{x}_{n}^{(n)}, \mathbf{Z}=\mathbf{z}] \nonumber \\
		&=& E_{P}[Y|\mathbf{Z}=\mathbf{z}] + E_{P}[\widetilde{Y}^{(1)}|\widetilde{X}_{1}^{(1)}=\widetilde{x}_{1}^{(1)}] \nonumber \\
		&& \quad + \cdots + E_{P}[\widetilde{Y}^{(n-1)}|\widetilde{X}_{n-1}^{(n-1)}=\widetilde{x}_{n-1}^{(n-1)}] \nonumber \\
		&& \quad + E_{P''}[\widetilde{Y}^{(n)}|\widetilde{X}_{1}^{(1)}=\widetilde{x}_{1}^{(1)}, 
		\widetilde{X}_{2}^{(2)}=\widetilde{x}_{2}^{(2)}, \nonumber \\
		&& \qquad \qquad \qquad \ldots, \widetilde{X}_{n}^{(n)}=\widetilde{x}_{n}^{(n)}, \mathbf{Z}=\mathbf{z}] \label{trans_only_remain}\\
		&=& E_{P}[Y|\mathbf{Z}=\mathbf{z}] + E_{P}[\widetilde{Y}^{(1)}|\widetilde{X}_{1}^{(1)}=\widetilde{x}_{1}^{(1)}]  \nonumber \\
		&& \quad + \cdots + E_{P}[\widetilde{Y}^{(n-1)}|\widetilde{X}_{n-1}^{(n-1)}=\widetilde{x}_{n-1}^{(n-1)}] \nonumber \\
		&& \quad + E_{P}[\widetilde{Y}^{(n)}|\widetilde{X}_{n}^{(n)}=\widetilde{x}_{n}^{(n)}].  \label{trans_from_theorem_for_muli_intv}
	\end{eqnarray}
	Now, we obtain (\ref{trans_only_remain}) from (\ref{eq_condi_only_remain_x_tilde}) and (\ref{eq_condi_only_remain_z}),
	and we obtain (\ref{trans_from_theorem_for_muli_intv}) from (\ref{for_collolary_trans}) in Theorem \ref{thorem_multi_vals}.
	Thus, (\ref{eq_for_estimate_average _effect}) holds.
	
	This completes the proof of the corollary.
\end{proof}

\end{document}